\setlist{leftmargin=0.5cm,itemsep=2pt,parsep=0pt}
\title{\textbf{Group-wise oracle-efficient algorithms for online multi-group learning}}
\newcommand{\random}{\ensuremath{{~}^{\tiny{\textcircled{r}}}}}
\author{Samuel Deng\thanks{\texttt{samdeng@cs.columbia.edu}}\random}
\author{Daniel Hsu\thanks{\texttt{djhsu@cs.columbia.edu}}\random}
\author{Jingwen Liu\thanks{\texttt{jingwenliu@cs.columbia.edu}}\random}
\affil{Department of Computer Science, Columbia University, New York, NY, USA}
\begin{document}
{\def\thefootnote{\random}\footnotetext{{\small Author ordering randomized.}}}

\maketitle
\begin{abstract}
We study the problem of online multi-group learning, a learning model in which an online learner must simultaneously achieve small prediction regret on a large collection of (possibly overlapping) subsequences corresponding to a family of \textit{groups}.
Groups are subsets of the context space, and in fairness applications, they may correspond to subpopulations defined by expressive functions of demographic attributes.
In contrast to previous work on this learning model,
we consider scenarios in which the family of groups is too large to explicitly enumerate, and hence we seek algorithms that only access groups via an optimization oracle.
In this paper, we design such oracle-efficient algorithms with sublinear regret under a variety of settings, including: (i) the i.i.d.~setting, (ii) the adversarial setting with smoothed context distributions, and (iii) the adversarial transductive setting.

\end{abstract}

\section{Introduction}\label{sec:intro}
We study the problem of online multi-group learning, originally introduced by \cite{blum_advancing_2019} (adapting the specialists/time-selection setup of \cite{blum1997empirical,freund1997using,blum_external_2007}).
In this learning model, we consider a collection of groups $\cG$, which are (possibly intersecting) subsets of a context space $\cX$, as well as a hypothesis class $\cH$ of functions defined on $\cX$.
Contexts $x_1, x_2,\dotsc,x_T$ arrive one-by-one over a sequence of $T$ rounds, and the learner must make a prediction associated with each $x_t$.
The learner's goal is to perform well on \textit{every} subsequence of rounds corresponding to each group $g \in \cG$.
Here, performance is measured relative to the predictions of the best-in-hindsight hypothesis $h \in \cH$ for the specific subsequence under consideration.

A common interpretation of multi-group learning---which is natural when considering fairness in machine learning (ML)---identifies contexts $x \in \cX$ with individuals, each group $g \in \cG$ with a subpopulation (perhaps defined by a combination of various demographic features such as age and gender), and each hypothesis $h \in \cH$ with a classifier that makes predictions about individuals~\cite{rothblum_multi-group_2021}.
The goal of the learner, then, is to predict as well as the best subpopulation-specific hypothesis, for all subpopulations simultaneously.

The standard benchmark in online learning is regret, which compares the performance of the learner on all rounds to that of the single best hypothesis in hindsight.
But such a benchmark is only meaningful if there is a hypothesis that performs well in all contexts.
At the other extreme, we may hope that the learner performs as well as using the best context-specific hypothesis in every round.
However, this may be impossible if no context is ever repeated.
The group-wise notion of regret in online multi-group learning naturally interpolates between these two extremes: the former is recovered when $\cG = \{ \cX \}$, and the latter when $\cG = \{ \{x\} : x \in \cX \}$.

We are particularly interested in scenarios where $\cG$ may be extremely large (and perhaps even infinite).
In such cases, it is too time-consuming to explicitly enumerate the groups in $\cG$, and this precludes the use of the algorithmic solutions from prior works~\cite{blum_advancing_2019,acharya_oracle_2023}.

The use of highly expressive families of groups has been a recent focus in the ML fairness literature, where fairness with respect to such rich families of groups is seen as compromise between coarse notions of statistical fairness that ignore intersectionality, and individualized notions of fairness that are typically difficult to ensure~\cite{kearns_preventing_2018, hebert-johnson_multicalibration_2018, kim_multiaccuracy_2018, globus-harris_algorithmic_2022, globus-harris_multicalibration_2023}.
For example, if groups are defined by simple combinations of demographic attributes (e.g., linear threshold functions), then the number of subsequences determined by these groups may grow exponentially with the number of attributes.
To deal with the intractability of explicit enumeration of groups, these prior works rely on optimization oracles that implicitly search through the family of groups.
In this work, we seek to do the same, but for the online (as opposed to batch) problem at hand.

Another motivation for multi-group learning with rich families of groups comes from the literature on ``subgroup robustness''~\cite{sagawa2020distributionally}, where one is concerned with the test-time distribution shifting from the training distribution by restricting to subsets (or ``subgroups'') of the feature space.
Such scenarios have been practically motivated, for instance, in medical domains where training data sets are constructed to include data from all patients (healthy and sick) as a matter of convenience, but the population relevant to the application is only a subset of the sick patients for which an intervention is potentially possible~\cite{oakden-rayner_hidden_2019}.
It may be difficult to anticipate which subgroup will ultimately be relevant (or even to provide an explicit shortlist of such subgroups), but multi-group learning with a very rich and expressive family of groups $\cG$ ensures that, as long as a subgroup is well-approximated or within the collection $\cG$, it obtains our theoretical guarantees. This motivates dealing with large or potentially infinite $\cG$ to provide guarantees for as many subgroups as possible.

\subsection{Summary of Results}\label{sec:results}
We construct an end-to-end oracle-efficient online learning algorithm that is oracle-efficient in \textit{all} the problem parameters. Namely, it is oracle-efficient in both $\cH$ and $\cG$. In the case of finite $\cH$ or $\cG$, this admits an exponential computational speedup over the previous algorithms; in the case of infinite $\cG$, this is the first algorithm that achieves multi-group online learning. With this in mind, we can even think of $\cG$ as representing a binary-valued function class that takes in contexts and selects subsequences $S \subseteq [T]$ based on those contexts in possibly complex ways.

Previous work has shown that the basic goal of designing a computationally efficient algorithm (in all problem parameters) to achieve $o(T)$ regret for fully worst-case adversaries is impossible~\cite{hazan_computational_2016, block_smoothed_2022}.
Research has therefore focused
on natural structural assumptions in which computational efficiency and sublinear regret is possible, albeit in the traditional setting without groups~\cite{syrgkanis_efficient_2016, haghtalab_oracle-efficient_2022, haghtalab_smoothed_2022, block_smoothed_2022}. Because the multi-group online adversarial setting is strictly harder than the standard setting in which this lower bound applies (simply consider multi-group learning with one group: the entire sequence), we must also take similar assumptions to circumvent the computational hardness result. In this work, we consider the same structural assumptions in the multi-group scenario.
Specifically, we make the following contributions (with $\tilde{O}(\cdot)$ suppressing $\log$ factors):
\begin{enumerate}
    \item \textbf{Group-wise oracle efficiency for the smoothed setting.} We present an oracle-efficient algorithm that achieves $\tilde{O}(\sqrt{dT/\sigma})$ regret on every group $g \in \cG$ for a binary-valued action space for the \textit{smoothed online learning setting} of \cite{haghtalab_smoothed_2022, block_smoothed_2022}, where $d$ is a bound on the VC dimension of $\cH$ and $\cG$, and $\sigma$ is a parameter interpolating between the scenarios in which contexts are generated by a worst-case adversary and a benign, fully i.i.d.~adversary.
    \item \textbf{Group-wise oracle-efficiency through generalized follow-the-perturbed leader.} If a sufficient condition referred to as \textit{$\gamma$-approximability} in previous literature (\cite{wang_adaptive_2022}) is met, a variant of our oracle-efficient algorithm achieves, for each particular $g \in \cG$, a regret of $\tilde{O}(\sqrt{N T_g \log |\cH| |\cG|})$ for finite $\cH$ and $\cG$, where $N$ corresponds roughly to a required number of perturbations, and $T_g$ is the number of rounds $t \in [T]$ in which $x_t \in g$.
    The dependence on $T_g$ as opposed to $T$ is preferable if some groups do not appear frequently over the $T$ rounds.
    As a special case, our algorithm also achieves $\tilde{O}(N^{1/4}\sqrt{T_g \log |\cH| |\cG|})$ in the \textit{transductive setting} of \cite{ben-david_online_1997, kakade_batch_2005}, where the adversary must reveal a set of $N$ future contexts before learning begins.
\end{enumerate}

Table \ref{table:summary} summarizes our results in relation to existing work. Our algorithms follow a more general algorithm design template based on the \emph{adversary moves first (AMF)} framework of \cite{lee_online_2022}.
We extend this framework with the following technical enhancements: sparsifying the implicit distributions over the hypothesis spaces used by \emph{follow-the-perturbed-leader (FTPL)} algorithms, and simplifying the min-max game that is solved in every iteration of AMF. 

In particular, the AMF framework allows us to have low-regret with respect to all group-hypothesis pairs; the multi-objective regret guarantees are useful for our purposes because we want to guarantee simultaneous low regret over all $g \in \cG$. However, naively applying AMF requires us to enumerate these objectives, in turn enumerating $\cG$, the main issue we want to avoid. FTPL comes to the rescue and allows us to have low-regret to all pairs implicitly without enumerating them. So AMF allows us to compete with all $g \in \cG$; FTPL ensures this is efficient. The algorithmic details and main technical challenges can be found in Section \ref{sec:smooth_algo}.
These techniques allow us to adapt this framework to the scenario where the number of groups is too large to enumerate, and they may find use in other multi-objective learning problems.

\begin{table}
\begin{tabular}{ p{1.5cm} p{2.3cm} p{3.2cm} p{3cm} p{1cm} p{1cm} }
 \hline \\[-1ex]

 Work & Setting & Regret & Computation & Oracle-efficient in $\mathcal{H}$ & Oracle-efficient in $\mathcal{G}$\\
\hline \\[-2ex]

 \cite{blum_advancing_2019} & Adversarial &$\sqrt{T_g \log |\mathcal{H}| |\mathcal{G}|}$ for all $g \in \mathcal{G}$ & Time: $O(|\mathcal{H}| |\mathcal{G}|)$ Space: $O(|\mathcal{H}| |\mathcal{G}|)$ & No & No\\
 \hline \\[-2ex]
 
\cite{acharya_oracle_2023} &  Adversarial  & $\sqrt{T_g \log |\mathcal{H}| |\mathcal{G}|}$ for all $g \in \mathcal{G}$ & Time: $O(|\mathcal{G}|)$ Space: $O(|\mathcal{G}|)$ & Yes & No\\ \hline \\[-2ex]

 \cite{block_smoothed_2022} & $\sigma$-Smooth & $\sqrt{\frac{dT\log T}{\sigma}}$ (does not handle multi-group setting) & Time: poly($T$) calls to optimization oracle & Yes & N/A\\ \hline \\[-2ex]

\cite{haghtalab_oracle-efficient_2022} & $\sigma$-Smooth & $\sqrt{\frac{dT\log T}{\sigma^{1/2}}}$ (does not handle multi-group setting) & Time: poly($T$) calls to optimization oracle & Yes & N/A\\  \hline \\[-2ex]

 Ours (Theorem \ref{thrm:smooth_bin}) & $\sigma$-Smooth & $\sqrt{\frac{dT\log T}{\sigma}}$ for all $g \in \cG$ & Time: poly($T$) calls to optimization oracle & Yes & \textbf{Yes} \\
 
 Ours (Theorem \ref{thm:gftpl})& $\gamma$-approximable & $\sqrt{NT_g \log |\mathcal{H}| |\mathcal{G}|}$ for all $g \in \mathcal{G}$ & Time: poly($T$) calls to optimization oracle & Yes & \textbf{Yes} \\
 
 Ours (Corollary \ref{cor:transductive}) & Transductive  & $N^{1/4} \sqrt{T_g \log |\mathcal{H}| |\mathcal{G}|}$ for all $g\in \mathcal{G}$  &Time: poly($T$) calls to optimization oracle & Yes & \textbf{Yes}\\
 \hline
\end{tabular}
\caption{In the table above, $T$ is the number of rounds of online learning, $T_g$ is the number of rounds for a particular group $g$, $\cH$ is the hypothesis class, $\cG$ is the collection of groups, and $\sigma$ is the smoothness parameter defined in Definition \ref{def:smooth_dist}. $d$ is an upper bound on the VC dimension of $\cH$. In our $\sigma$-smooth result in the fifth row, $d$ is also an upper bound on the VC dimension of a possibly infinite collection of groups, $\cG.$ In the $\gamma$-approximable setting of the sixth row, $N$ is the number of perturbations.} 
\label{table:summary}
\end{table}
\subsection{Related Work}

\cite{blum_advancing_2019} showed that it is possible to achieve sublinear multi-group regret by reducing to the \emph{specialists framework} of \cite{blum1997empirical,freund1997using,blum_external_2007} (a.k.a.~\emph{sleeping experts}).
The multi-group regret they achieve scales logarithmically in both $\cH$ and $\cG$; this recovers the minimax regret when specializing to the standard online learning setting (with finite $\cH$) with only a single group.
The paper focuses on regret rather than computational considerations; a direct implementation of their algorithm uses time and space linear in $|\cH| \times |\cG|$, and there are no stated guarantees for infinite $\cH$ or $\cG$.

\cite{acharya_oracle_2023} show how to avoid enumeration of $\cH$ using an optimization oracle for $\cH$.
They achieve this by applying a meta-algorithm atop a black-box oracle-efficient online learning algorithm, but this meta-algorithm ultimately requires explicit enumeration of $\cG$.
Our work, in contrast, uses an optimization oracle for $\cG \times \cH$ jointly and hence avoids explicit enumeration of either $\cG$ or $\cH$.

Multi-group (agnostic) learning has also been studied in the batch setting~\cite{rothblum_multi-group_2021,tosh_simple_2022,globus-harris_algorithmic_2022,haghtalab_unifying_2023}.
In this setting, training data is drawn i.i.d.~from a fixed distribution, and the learner's goal is to find a single hypothesis $\hat h$ (possibly outside of $\cH$) that ensures small excess risk $\EE[ \ell(\hat h(x),y) \mid x \in g ] - \inf_{h \in \cH} \EE[\ell(\hat h(x),y) \mid x \in g ]$ for every group $g \in \cG$ simultaneously.
The works of \cite{rothblum_multi-group_2021,haghtalab_unifying_2023} design algorithms for achieving this learning criterion under a certain ``multi-PAC compatibility'' assumption on $\cH$ and $\cG$.
\cite{globus-harris_algorithmic_2022,tosh_simple_2022} design multi-group learning algorithms that remove the need for this assumption.
One of the algorithms of \cite{tosh_simple_2022}, which enjoys near optimal sample complexity for general but finite $\cH$ and $\cG$, is based on the the online multi-group learning approach of \cite{blum_advancing_2019} combined with online-to-batch conversion.

The proof of our algorithm builds on two primary technical frameworks studied in previous literature: the \textit{adversary moves first (AMF)} framework of \cite{lee_online_2022}, and a line of work designing follow-the-perturbed leader style algorithms~\cite{kalai_efficient_2005} for adversarial online learning in the oracle-efficient learning model \cite{kakade_batch_2005,syrgkanis_efficient_2016, haghtalab_oracle-efficient_2022, block_smoothed_2022, wang_adaptive_2022}. 

\section{Preliminaries}\label{sec:setup}
\subsection{Notation}\label{sec:notation}
Throughout,
$\cX$ denotes a \textit{context space},
and $\cY$ denotes an \textit{action space}.
For example, in a typical (online) supervised learning setup, $\cX$ is the feature space, and $\cY$ is the label space.
A \textit{group} $g$ is a subset of the context space $\cX$.
We overload the notation $g$ for a group by using it as an indicator function $g(x) := \indic{x \in g}$ for group membership.
Let $2^{\cX}$ denote all subsets of the context space $\cX$, and let $\cY^{\cX}$ denote all possible mappings from $\cX$ to $\cY$.
For an integer $n$, denote $[n] := \{1, 2, \dots, n\}$.

For simplicity of exposition, we will focus on the setting where $\cY$ is binary throughout, i.e. $\cY = \{-1, 1\}$. We note that our techniques are more general, however, and may be adapted to the case of finite, multi-class action spaces (see Appendix \ref{app:meta_other} for details).
\subsection{Online Multi-Group Learning}\label{sec:online_mgl}

We formally define the multi-group learning model as follows.
Let $\cG \subset 2^{\cX}$ be a collection of (possibly non-disjoint) groups, and
let $\cH \subset \cY^{\cX}$ be a \emph{hypothesis class} of functions $h: \cX \rightarrow \cY$ mapping from contexts to actions.
Let $\ell: \cY \times \cY \rightarrow [0, 1]$ be a bounded loss function.
In each round $t \in [T]$:
\begin{enumerate}
    \item Nature chooses $(x_t,y_t) \in \cX \times \cY$ and reveals $x_t$.
    \item The learner chooses an action $\hat{y}_t \in \cY$.
    \item Nature reveals $y_t \in \cY$.
    \item The learner incurs loss $\ell(\hat{y}_t, y_t) \in [0, 1]$.
\end{enumerate}
The choices of Nature and the learner may be randomized.
In the standard online prediction setting, the regret of the learner is the difference between the cumulative loss of the learner and that of the best-in-hindsight hypothesis from $\cH$:
$\Reg_T(\cH) := \sum_{t = 1}^T \ell(\hat{y}_t, y_t) - \min_{h \in \cH} \sum_{t = 1}^T \ell(h(x_t), y_t)$.
The goal of the learner is to achieve sublinear (in $T$) expected regret.

In multi-group online learning, we consider the regret of the learner on subsequences of rounds
$(t \in [T] : x_t \in g)$ defined by the groups $g \in \cG$ and the sequence of contexts $x_1,\dotsc,x_T$.
Specifically, the \emph{(multi-group) regret of the learner on group $g$} is
\begin{equation}\label{eq:mg_regret}
    \Reg_T(\cH, g) := \sum_{t = 1}^T g(x_t) \ell(\hat{y}_t, y_t) - \min_{h \in \cH} \sum_{t = 1}^T g(x_t) \ell(h(x_t), y_t) .
\end{equation}
Crucially, the best hypothesis for one group may differ from that of another group. Further, groups may intersect, precluding the strategy of simply running a separate no-regret algorithm for each group.
The learner seeks to achieve achieve sublinear expected regret, on all groups $g \in \cG$ simultaneously.

\subsection{Group Oracle-Efficiency}\label{sec:group_oracle}
The main challenge posed in this work is to design computationally efficient algorithms that work with both large hypothesis classes $\cH$ and large collections of groups $\cG$.
The prior work of~\cite{acharya_oracle_2023} shows how to use the following optimization oracle to avoid explicitly enumerating the hypothesis class $\cH$ (but still require enumerating $\cG$).

\begin{definition}[Optimization Oracle]\label{def:oracle}
For some error parameter $\alpha \geq 0$ and function class $\cF \in \overline{\cZ}^{\cX}$, an \emph{$\alpha$-approximate optimization oracle} $\OPT$ takes a collection of pairs $(x_1, z_1), \dots, (x_m, z_m) \in \cX \times \cZ$, a sequence of weights $w_1, \dots, w_m \in \RR$, and a sequence of $m$ loss functions $\ell_i: \overline{\cZ} \times \cZ \rightarrow [-1, 1]$ and outputs a function $\hat{f} := \OPT(\{(x_i, z_i, w_i)\}_{i = 1}^m) \in \cF$ satisfying:
$$
\sum_{i = 1}^m w_i \ell_i(\hat{f}(x_i), z_i) \leq \inf_{f \in \cF} \sum_{i = 1}^m w_i \ell_i(f(x_i), z_i) + \alpha.
$$
\end{definition}
Instantiating (in Definition~\ref{def:oracle}) $\cZ$ as our action space $\cY$, each $\ell_i$ as the given loss $\ell$ of our problem, and $\cF$ as our hypothesis class $\cH$ gives a standard empirical risk minimization (ERM) oracle over a dataset $\{(x_i, y_i)\}_{i = 1}^m$.
We present this more general definition to distinguish the action space $\cY$ of the problem from the output space of the oracle (see Definition \ref{def:gh_oracle}). 

The optimization oracle is regarded as a natural computational primitive because, for many problems in machine learning, various heuristic methods (e.g., stochastic gradient descent) appear to routinely solve such problem instances despite the worst-case intractability of such problems.

The work of \cite{acharya_oracle_2023} still relies on explicit enumeration of $\cG$.
We show how this can be avoided using a joint optimization oracle for $\cG \times \cH$, defined as follows.

\begin{definition}[$(\cG, \cH)$-optimization oracle]\label{def:gh_oracle}
Fix an error parameter $\alpha \geq 0$. For a collection of groups $\cG \in 2^{\cX}$, a collection of hypotheses $\cH \subseteq \cY^{\cX}$, and a sequence of $m$ loss functions $\ell_i: (\{0, 1\} \times \cY) \times (\cY \times \cY) \rightarrow [-1, 1]$, an \emph{$\alpha$-approximate $(\cG, \cH)$-optimization oracle} $\aOPTGH$ is an \emph{$\alpha$-approximation optimization oracle} (Definition \ref{def:oracle}) that outputs a pair $(\tdg, \tdh) \in \cG \times \cH$ satisfying: 
\begin{equation}
    \sum_{i = 1}^m w_i \ell_i((\tdg(x_i), \tdh(x_i)), (y_i, y'_i)) \geq \sup_{(g^*, h^*) \in \cG \times \cH} \sum_{i = 1}^m w_i \ell_i((g^*(x_i), h^*(x_i)), (y_i, y'_i)) - \alpha.
\end{equation}
\end{definition}
If $\cY = \{-1, 1\}$ (which we assume in the main paper body), this oracle outputs a group-hypothesis pair $(\tdg, \tdh)$ that maximizes the batch loss over $m$ examples $(x_i, (y_i, y_i'))$ in $\cX \times \{-1, 1\}^2$.
\cite{globus-harris_algorithmic_2022} also made such an assumption and gave two implementations:
one based on cost-sensitive classification oracles for $\cG$ and $\cH$ separately, the other a heuristic algorithm that is empirically effective. Details of these oracle instantiations are included in Appendix \ref{app:gh_oracle}.

We also require an optimization oracle for $\cH$ itself, defined similarly.
This can be thought of simply as (exact) empirical risk minimization over $\cH$.\footnote{%
    In fact, it will suffice to have a substantially simpler oracle that, given a single $(x,y) \in \cX \times \cY$, determines if there exists $h \in \cH$ such that $h(x) = y$.%
}
\begin{definition}[$\cH$-optimization oracle]\label{def:h_oracle}
For a collection of hypotheses $\cH \subseteq \cY^{\cX}$, and a sequence of $m$ loss functions $\ell_i: \cY \times \cY \rightarrow [-1, 1]$, an \emph{$\cH$-optimization oracle} $\OPTH$ is a \emph{$0$-approximation optimization oracle} (Definition \ref{def:oracle}, with $\alpha = 0$) that outputs a hypothesis $h \in \cH$ satisfying
   $\sum_{i = 1}^m w_i \ell_i(h(x_i), y_i) \leq \inf_{h^* \in \cH} \sum_{i = 1}^m w_i \ell_i(h^*(x_i), y_i)$.
\end{definition}

\section{Warm-up: I.I.D.~Setting}\label{sec:iid}
In this section, as a warm-up, we consider a setting where Nature is stochastic and oblivious: the $(x_t,y_t)$ are drawn i.i.d.~from a single fixed (but unknown) distribution $\mu$, independent of any choices of the learner.
In a standard online prediction setting with i.i.d.~data, it suffices to use a ``follow-the-leader'' (FTL) strategy (see, e.g., \cite{hazan_introduction_2022}), which can be easily implemented using an optimization oracle for $\cH$.
However, such a strategy only guarantees low regret on $g = \cX$.
To achieve low regret on all (possibly intersecting) groups $g \in \cG$ simultaneously, we need a multi-group analogue of FTL.

What makes FTL work in the standard online prediction setting is the instantaneous expected regret bound of empirical risk minimization (ERM) on i.i.d.~data.
Therefore, it is natural to replace ERM with a batch multi-group algorithm~\cite{tosh_simple_2022,globus-harris_algorithmic_2022}; this will ensure the requisite instantaneous guarantee on all groups $g \in \cG$.
We show how to use the oracle-efficient algorithm $\listupdate$ of~\cite{globus-harris_algorithmic_2022} for the online multi-group problem.

Throughout this section, we assume $\ell$ is the zero-one loss (for simplicity), and that $\cH$ and $\cG$ both have VC dimension at most $d \geq 1$.
For any $g\in \mathcal{G}$, let $P(g):= \mathbb{E}_{(x, y)\sim \mu} [g(x)]$ be the probability mass of group $g$.

\begin{theorem}[Theorem~16 of~\cite{globus-harris_algorithmic_2022}]
\label{thm:listupdate}
For any $\delta \in (0,1)$, given $n$ i.i.d.~training samples $\{(x_i, y_i)\}_{i=1}^n$ from $\mu$, the $\listupdate$ algorithm\footnote{Technically, we use the \textsc{TrainByOpt} variant of $\listupdate$ from Theorem~16 of \cite{globus-harris_algorithmic_2022}.} returns a function $f : \cX \to \cY$ such that, with probability $1-\delta$, for any group $g\in \mathcal{G}$, 
$$
\mathbb{E} [\ell(f(x), y) \mid x \in g] \leq \min_{h\in \mathcal{H}} \mathbb{E} [\ell(h(x), y) \mid x \in g]
+ \frac1{P(g)} \cdot O\left(
    \left( \frac{d \log n + \log(1/\delta)}{n} \right)^{1/3}
\right)
.
$$
Moreover, $\listupdate$ makes $\poly(n,d,\log(1/\delta))$ calls to a $(\cG,\cH)$ optimization oracle.
\end{theorem}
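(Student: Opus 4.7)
The plan is to follow the standard iterative-auditing template for multi-group learning, combining (i) a potential-function bound on the number of updates with (ii) a uniform-convergence argument over $\cG\times\cH$ to lift empirical guarantees to the population.

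First, I would describe $\listupdate$ abstractly as maintaining a predictor $f_t$ (initialized to, say, an ERM over $\cH$) and, at each iteration, calling the $(\cG,\cH)$-optimization oracle on the empirical loss of $f_t$ to find a pair $(g_t,h_t)\in \cG\times\cH$ whose empirical conditional loss gap on $g_t$ between $f_t$ and $h_t$ exceeds a threshold $\tau$. Whenever such a pair is found, $f_{t+1}$ is defined to agree with $h_t$ on $g_t$ and with $f_t$ elsewhere (with some tie-breaking for contexts that have been audited under multiple groups). A potential-function argument---using, e.g., the total empirical squared error of $f_t$ on the training sample, which decreases by $\Omega(\tau^2 n)$ per update and is bounded in $[0,O(n)]$---caps the total number of iterations at $O(\tau^{-2})$, so the total number of oracle calls is $\poly(1/\tau)$.

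Second, I would invoke a uniform-convergence bound on conditional losses. Because $\cH$ and $\cG$ each have VC dimension at most $d$, the class
\[
\bigl\{ (x,y)\mapsto g(x)\bigl(\ell(h(x),y)-\ell(h'(x),y)\bigr) : g\in\cG,\ h,h'\in\cH \bigr\}
\]
has VC-subgraph complexity polynomial in $d$, which yields, with probability $1-\delta$, a uniform deviation of $\tilde O(\sqrt{d/n})$ between empirical and population expectations across all $g,h,h'$ simultaneously. Dividing by $P(g)$ (and using a multiplicative Chernoff bound to identify $\widehat P(g)$ with $P(g)$ on groups of non-negligible mass) converts this into a $\tilde O(\sqrt{d/n})/P(g)$ deviation bound for \emph{conditional} losses. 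Since each iterate $f_t$ is itself a ``decision list'' built from $\cG$ and $\cH$ whose combinatorial complexity grows with the iteration count, the effective VC complexity of the iterate class scales like $d\cdot\tau^{-2}$, and this must be folded into the statistical-error rate.

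Balancing the auditing tolerance $\tau$ against the amplified statistical error $\tilde O(\sqrt{d\tau^{-2}/n})/P(g)$ gives $\tau \asymp ((d\log n)/n)^{1/3}$ and produces the claimed $((d\log n + \log(1/\delta))/n)^{1/3}/P(g)$ rate; the oracle-call bound $\poly(n,d,\log(1/\delta))$ then follows from $O(\tau^{-2})$ iterations with one oracle call each. The hard part will be tracking the VC complexity of the adaptively-chosen iterates $\{f_t\}$ jointly with the iteration bound, as obtaining the cube-root exponent cleanly requires the two $\tau$-dependencies (iteration count and VC growth) to compose correctly; handling small-mass groups is then immediate, since the guarantee becomes vacuous once $P(g)$ drops below the statistical error, and this is absorbed into the $1/P(g)$ prefactor.
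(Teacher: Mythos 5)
You should first note that the paper does not actually prove this statement: it is imported verbatim as Theorem~16 of \cite{globus-harris_algorithmic_2022} (the \textsc{TrainByOpt} variant of $\listupdate$), and the present paper only uses it as a black box in the proof of Theorem~\ref{thm:online_listupdate}. So there is no in-paper proof to compare against; your proposal is a reconstruction of the cited work's argument. At the level of template, your reconstruction is the right one and matches how the result is actually proved there: an oracle-driven iterative-auditing loop, a potential-function bound on the number of accepted updates, and uniform convergence over the class of decision-list predictors assembled from $\cG$ and $\cH$, with the auditing tolerance balanced against the statistical error of the growing list.

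The genuine gap is that your exponent accounting does not produce the claimed cube-root rate. You posit a potential that drops by $\Omega(\tau^2 n)$ per update (separate thresholds of $\tau$ on group mass and on conditional improvement), giving $O(\tau^{-2})$ iterations, and hence an iterate class of effective complexity $d\,\tau^{-2}$ with uniform deviation $\tilde O\bigl(\tau^{-1}\sqrt{d/n}\bigr)$. Balancing $\tau \asymp \tau^{-1}\sqrt{d/n}$ yields $\tau \asymp (d/n)^{1/4}$, not the stated $\tau \asymp ((d\log n)/n)^{1/3}$; the two $\tau$-dependencies you flag as ``the hard part'' do not, as written, compose to the exponent you assert. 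The cube root arises from a different parameterization: the acceptance criterion is placed on the \emph{product} of the empirical group mass and the conditional improvement, so the (unnormalized) overall empirical error --- bounded in $[0,1]$ --- drops by $\tau$ per accepted update, the iteration count is $O(1/\tau)$, the list complexity is $O(d/\tau)$, and the deviation is $\tilde O(\sqrt{d/(\tau n)})$; balancing $\tau \asymp \sqrt{d/(\tau n)}$ gives $\tau^3 \asymp d/n$ and the final $((d\log n + \log(1/\delta))/n)^{1/3}/P(g)$ bound after dividing through by $P(g)$. Your version, carried through honestly, proves a strictly weaker $n^{-1/4}$-type rate; to recover the theorem as stated you must change the update rule and potential as above, not merely re-balance.
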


Our algorithm, \textsc{Online ListUpdate}, forms its prediction $\hat y_t$ in round $t$ as follows:
\begin{itemize}
    \item Run $\listupdate$ on
    the samples from previous rounds $(x_1, y_1), \ldots, (x_{t-1}, y_{t-1})$.
    \item Let $f_t$ denote the function returned by $\listupdate$, and predict $\hat y_t := f_t(x_t)$.
\end{itemize}
Using Theorem~\ref{thm:listupdate}, we obtain the following multi-group regret bound for \textsc{Online ListUpdate}.

\begin{theorem}\label{thm:online_listupdate}
If $(x_t, y_t)$ are drawn i.i.d.~from a fixed distribution $\mu$ over $\cX \times \cY$, \textsc{Online ListUpdate} achieves the following expected multi-group regret bound: for all $g\in \cG$,
$$
    \EE[\Reg_T(\cH, g)] = O \left( \left( d \log T \right)^{1/3} T^{2/3} + \sqrt{d T \log T} \right)
    .
$$
\end{theorem}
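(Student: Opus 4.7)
The plan is to decompose the expected multi-group regret on any group $g$ into two pieces: (i) the expected excess loss of the per-round learner $f_t$ relative to the population-optimal hypothesis on $g$, controlled by Theorem \ref{thm:listupdate}, and (ii) the gap between the empirical best-in-hindsight loss on $g$ and its population counterpart, controlled by a VC-style uniform convergence argument over $\cG \times \cH$.

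For the first piece, I would exploit the fact that $f_t$ depends only on $(x_1,y_1),\dots,(x_{t-1},y_{t-1})$ and is therefore independent of $(x_t,y_t)$. Writing $L(f; g) := \EE_{(x,y)\sim\mu}[g(x)\,\ell(f(x),y)] = P(g)\,\EE[\ell(f(x),y) \mid x \in g]$, this independence yields $\EE[g(x_t)\,\ell(f_t(x_t), y_t)] = \EE[L(f_t; g)]$. I would invoke Theorem \ref{thm:listupdate} with confidence $\delta_t := 1/T^2$ and $n = t-1$ samples, multiply through by $P(g)$ to cancel the troublesome $1/P(g)$ factor, and handle the failure event by using the crude bound $L(f_t;g) \leq 1$ (whose total contribution is $\sum_t \delta_t = O(1/T)$). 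This gives
$$\EE[L(f_t; g)] \leq \min_{h \in \cH} L(h; g) + O\!\left(\left(\frac{d \log T}{t}\right)^{1/3}\right).$$
Summing over $t \in [T]$ and using $\sum_{t=1}^T t^{-1/3} = O(T^{2/3})$ produces the $O((d\log T)^{1/3} T^{2/3})$ term and bounds $\EE\bigl[\sum_t g(x_t)\ell(\hat y_t, y_t)\bigr]$ by $T \min_{h \in \cH} L(h; g)$ plus this remainder.

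For the second piece, I would lower bound $\EE[\min_{h \in \cH} \sum_t g(x_t)\ell(h(x_t), y_t)]$ against $T \min_{h \in \cH} L(h; g)$ via uniform convergence over the $\{0,1\}$-valued composite loss class $\cF := \{(x,y) \mapsto g(x)\,\ell(h(x), y) : g \in \cG,\, h \in \cH\}$. Since $\cY = \{-1,1\}$ and $\ell$ is the zero-one loss, each $f \in \cF$ is the indicator of an intersection of sets drawn from a class of VC dimension $d_\cG \leq d$ and one drawn from a class of VC dimension $d_\cH \leq d$ (after composing $\ell$ with $y$); standard VC composition gives $\mathrm{VC}(\cF) = O(d)$ up to log factors. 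Thus, with probability at least $1 - 1/T$, for all $g \in \cG$ and $h \in \cH$ simultaneously,
$$\left|\sum_{t=1}^T g(x_t)\,\ell(h(x_t), y_t) - T\,L(h; g)\right| = O(\sqrt{dT \log T}),$$
and passing to the infimum over $h$ yields the desired $O(\sqrt{dT \log T})$ gap. Combining the two pieces and subtracting gives the claimed bound.

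The step I expect to be the most delicate is the coupling between Theorem \ref{thm:listupdate}'s $1/P(g)$-dependent conditional guarantee and the unconditional multi-group regret on $g$: the cancellation works cleanly only because the regret is defined in terms of $\sum_t g(x_t)\ell(\cdot)$ rather than a conditional mean, and because all failure probabilities must be kept summably small (hence $\delta_t = 1/T^2$). A secondary obstacle is verifying the VC composition bound $\mathrm{VC}(\cF) = O(d)$ sufficient to yield a uniform convergence rate of $\sqrt{dT\log T}$; this is standard but must be done with care because we need the bound to hold for all $(g,h) \in \cG \times \cH$ simultaneously, not group-by-group.
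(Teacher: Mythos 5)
Your proposal is correct and follows essentially the same route as the paper's proof: multiply the conditional guarantee of Theorem \ref{thm:listupdate} by $P(g)$ to cancel the $1/P(g)$ factor, use independence of $f_t$ from $(x_t,y_t)$ and sum $t^{-1/3}$ to get the $(d\log T)^{1/3}T^{2/3}$ term, then close the gap between the empirical and population best-in-hindsight via uniform convergence to get the $\sqrt{dT\log T}$ term. The only (immaterial) differences are your choice of $\delta_t = 1/T^2$ versus the paper's $\delta = 1/T$, and your uniform convergence over the joint class $\cG\times\cH$ where the paper, having fixed $g$, only needs it over $\cH$.
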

The proof of Theorem~\ref{thm:online_listupdate} is given in Appendix~\ref{app:listupdate}.

\section{Group Oracle-Efficiency with Smooth Contexts}\label{sec:smooth}
In this section, we first describe a natural problem setting in which oracle-efficient online multi-group learning is possible: the smoothed online learning setting (Section \ref{sec:smooth_setup}), for which the i.i.d.~setting of Section \ref{sec:iid} is a special case. We then present our main algorithm, Algorithm \ref{alg:mg_amf}, for achieving oracle-efficient online multi-group learning (Section \ref{sec:smooth_algo}). Easy modifications of this main algorithm will admit oracle-efficient online multi-group learning for other common online learning specifications, as described in Section \ref{sec:other}.
\subsection{Smoothed Online Learning}\label{sec:smooth_setup}
We now describe \emph{smoothed online learning}, a prevalent model in recent literature in computationally efficient online learning that formalizes the natural relaxation that Nature is not maximally adversarial~\cite{rakhlin_online_2011, haghtalab_smoothed_2022,haghtalab_oracle-efficient_2022, block_smoothed_2022}. The main assumption is that, instead of choosing \textit{arbitrary} (possibly worst-case) examples $(x_t, y_t) \in \cX \times \cY$ at every round, Nature adversarially fixes a distribution $\mu_t$ over $\cX$ and draws $x_t \sim \mu_t$, while still drawing $y_t$ adversarially. Formally, we restrict such distributions to be \emph{$\sigma$-smooth}, following the definitions of \cite{block_smoothed_2022, haghtalab_smoothed_2022}.
\begin{definition}[$\sigma$-smooth distribution]\label{def:smooth_dist}
Let $\mu$ be some probability measure on $\cX$, and let $\mathcal{B}$ be a base measure on $\cX.$ The distribution $\mu$ is \emph{$\sigma$-smooth} (with respect to $\mathcal{B}$) if
$\mu$ is absolutely continuous\footnote{A probability measure $\mu$ is \emph{absolutely continuous} to another measure $\mathcal{B}$ if, for every $\mathcal{B}$-measurable set $A$, $\mathcal{B}(A) = 0$ implies $\mu(A) = 0$.} with respect to $\cB$ and
$$
\mathrm{ess~sup} \frac{d\mu}{d\mathcal{B}} \leq \frac{1}{\sigma}.
$$
We denote the set of all $\sigma$-smooth distributions on $\cX$ with respect to the measure $\mathcal{B}$ as $\cS_{\sigma}(\cX, \mathcal{B}).$ If $\mathcal{B}$ is clear from context, we simply write $\cS_{\sigma}(\cX).$ We assume that we have sample access to $\cB$ throughout. For simplicity, one may assume $\mathcal{B}$ is uniform on $\cX.$
\end{definition}
Definition \ref{def:smooth_dist} interpolates between the benign setting where $x_t$ are drawn i.i.d. from $\mu$ when $\sigma = 1$, and the fully adversarial setting when $\sigma$ approaches $0.$ In this sense, the warm-up result of Section \ref{sec:iid} is a special case of this setting when $\sigma = 1$ and $\mu$ is fixed for all rounds. Note that this definition does not restrict the choice of $y_t$ at all; $y_t$ may still be chosen adversarially.

With this definition in hand, consider the following specification of the learning game outlined in Section \ref{sec:online_mgl}, henceforth refered to as the \emph{$\sigma$-smooth online learning setting.} For each round $t \in [T]$:
\begin{enumerate}
    \item Nature fixes a distribution $\mu_t \in \cS_{\sigma}(\cX)$ that may depend in any way on the entire history prior to round $t$. Nature samples $x_t \sim \mu_t$ and chooses $y_t \in \cY$ adversarially; $x_t$ is revealed to the learner.
    \item The learner (randomly) chooses an action $\hat{y}_t \in \cY$.
    \item Nature reveals $y_t \in \cY$, and the learner incurs the loss $\ell(\hat{y}_t, y_t) \in [0, 1].$
\end{enumerate}
We now depart from the previous literature that considers oracle-efficient algorithms in this setting (\cite{haghtalab_smoothed_2022, block_smoothed_2022}), as we focus on the more difficult objective of minimizing multi-group regret over a collection $\cG$, as in Equation \eqref{eq:mg_regret}. This setting will allow us to employ our $(\cG, \cH)$-optimization oracle in Definition \ref{def:gh_oracle}; a full description of our algorithm is now in order in Section \ref{sec:smooth_algo}.

\subsection{Algorithm for Smooth Contexts}\label{sec:smooth_algo}
In this section, we present Algorithm \ref{alg:mg_amf}, our main algorithm for multi-group online learning for the $\sigma$-smooth setting. At a high level, our algorithm takes inspiration from the very general \emph{adversary-moves-first (AMF) framework} for multiobjective online learning of \cite{lee_online_2022}. Our algorithm can be thought of as a sequential game between two competing players: an adversarial $(\cG, \cH)$-player and the learner, referred to, in the context of Algorithm \ref{alg:mg_amf} as the $\cH$-player.
On each round $t$, the $(\cG, \cH)$-player employs a $(\cG, \cH)$-optimization oracle, $\aOPTGH$, to play a distribution over group-hypothesis pairs that maximizes the misfortune of the $\cH$-player, based on the history up until $t$.
Upon receiving this distribution (and the context $x_t$ from Nature), the $\cH$-player chooses $\hat{y}_t$ randomly according to a distribution obtained by solving a simple constant-size linear program, and then incurs the loss $\ell(\hat{y}_t, y_t)$.
The $(\cG, \cH)$-player, taking this new loss into account, can now adjust his strategy to foil the $\cH$-player in the next round by putting mass on the groups on which the $\cH$-player performs poorly.
Crucially, neither $\cG$ nor $\cH$ is ever accessed directly, although our proofs need to maintain a distribution over $\cG \times \cH.$ In order to do this, we make the crucial observation that FTPL maintains an \textit{implicit} distribution over $\cG \times \cH$ and we sparsely approximate that distribution through repeatedly querying the $\aOPTGH$ oracle.
(For clarity, we use the tilde decoration, $\tilde h$ and $\tilde g$, on hypotheses and groups obtained by the $(\cG, \cH)$-player using $\aOPTGH$.)

\textbf{Main algorithm.}
For any $x \in \cX$, define $\tdl_x: (\{0, 1\} \times \cY) \times (\cY \times \cY) \rightarrow [-1, 1]$ as:
\begin{equation}
\tdl_x((\tdg, \tdh), (y', y)) := \tdg(x) \left( \ell(y', y) - \ell(\tdh(x), y) \right),
\end{equation}
where $\ell(\cdot, \cdot)$ is the loss given by the learning problem. The quantity $\tdl_x$ is the loss that the $(\cG, \cH)$-player is maximizing; it corresponds to the single-round regret of the learner on group $g$ to the hypothesis $h$ if the context on that round is $x$.

The $(\cG, \cH)$-player will employ the FTPL style strategy of \cite{block_smoothed_2022}, adapted to our setting. For each round $t,$ this requires generating $n$ \textit{perturbation examples} as extra input to $\aOPTGH$. To generate these hallucinated perturbation examples, we independently draw $z_{t, j} \sim \cB$ and $\gamma_{t, j} \sim N(0, 1)$, samples $j \in [n]$ from the base measure and the standard Gaussian, respectively. In this section, $\cY = \{-1, 1\}$ and $\cH \subseteq \{-1, 1\}^{\cX}$, so we use the perturbations in their FTPL variant for binary-valued action spaces (outlined in Appendix \ref{app:block}),
\begin{equation}\label{eq:bin_perturb}
\pi_{t, n}^{\mathrm{bin}}(g, h, \eta) := \sum_{j = 1}^n \frac{ \eta \gamma_{t,j} g(z_{t,j}) h(z_{t, j})}{\sqrt{n}}.
\end{equation}

\begin{algorithm}
\caption{Algorithm for Group-wise Oracle Efficiency (for smoothed online learning)}\label{alg:mg_amf}
\begin{algorithmic}[1]
\REQUIRE Perturbation strength $\eta > 0$; perturbation count $n \in \NN$; number of oracle calls $M \in \NN$.
\FOR{$t = 1, 2, 3, \dots, T$}
\STATE Receive a context $x_t \sim \mu_t$ from Nature.
    \FOR{$i = 1, 2, 3, \dots, M$}
        \STATE \textbf{$(\cG, \cH)$-player:} Draw $n$ hallucinated examples as in Equation \eqref{eq:bin_perturb} to construct $\binpert.$
        \STATE \textbf{$(\cG, \cH)$-player:} Using the entire history $\{(\hat{y}_s, y_s)\}_{s = 1}^{t -1}$ so far, call $\aOPTGH$ to obtain $(\tdg_t^{(i)}, \tdh_t^{(i)}) \in \cG \times \cH$ satisfying:
        \begin{multline}\label{eq:hg_opt}
        \sum_{s = 1}^{t - 1} \tdl_{x_s}((\tdg_t^{(i)}, \tdh_t^{(i)}), (\hat{y}_s, y_s)) + \binpert(\tdg_t^{(i)}, \tdh_t^{(i)}, \eta) \\ \geq \sup_{(g^*, h^*) \in \cG \times \cH} \sum_{s = 1}^{t-1} \tdl_{x_s}((g^*, h^*), (\hat{y}_s, y_s)) + \binpert(g^*, h^*, \eta) - \alpha
        \end{multline}
    \ENDFOR
    \STATE \textbf{$\cH$-player:} Call $\OPTH$ twice on the singleton datasets $\{(x_t, 1)\}$ and $\{(x_t, -1)\}$, with the 0-1 loss, obtaining:
    $$
    h_1' \in \argmin_{h^* \in \cH} \indic{h^*(x_t) \not = 1} , \quad h_{-1}' \in \argmin_{h^* \in \cH} \indic{h^*(x_t) \not = -1}.
    $$
    \STATE
    \label{line:mg_lp}
    \textbf{$\cH$-player:} Solve the
    linear program%
    \begin{align*}
        \min_{p, \lambda \in \RR} \quad
        & \lambda \\
        \text{subj.\ to} \quad
        & \sum_{i = 1}^M p \tdl_{x_t}((\tdg_t^{(i)}, \tdh_t^{(i)}), (h_1'(x_t), y)) + (1 - p) \tdl_{x_t}((\tdg_t^{(i)}, \tdh_t^{(i)}), (h_{-1}'(x_t), y)) \leq \lambda \\
        & \qquad \forall y \in \{-1, 1\}\\
        & 0 \leq p \leq 1.
    \end{align*}
    \STATE
    \label{line:mg_ht}
    Sample $b \sim \Ber(p)$ where $b \in \{-1, 1\}$, let $h_t = h_b'$.
    \STATE Learner commits to the action $\hat{y}_t = h_t(x_t)$; Nature reveals $y_t$.
    \STATE Learner incurs the loss $\ell(\hat{y}_t, y_t)$.
\ENDFOR
\end{algorithmic}
\end{algorithm}
\begin{remark}
We focus on the setting where $\cY = \{-1,1\}$ for ease of exposition, but settings in which $\cY$ is a general finite set can be handled with easy modifications. See Appendix~\ref{app:meta_other} for details. 
\end{remark}
\begin{remark}
Multi-group online learning settings other than the $\sigma$-smooth setting can be handled appropriately simply by replacing the strategy of the $(\cG, \cH)$-player by an appropriate no-regret algorithm with access to $\aOPTGH$. Examples of such variants are given in Section \ref{sec:other}, and the general framework for such modifications is given in Appendix \ref{app:mainthrm}.
\end{remark}
\begin{remark}
With appropriate modifications, one can instantiate the $(\cG, \cH)$-player with the FTPL style strategy of \cite{haghtalab_oracle-efficient_2022} instead, inheriting the $\sigma^{-1/4}$ dependence summarized in Table \ref{table:summary}. Our focus in this paper is not on the dependence on $\sigma$, however, so our main exposition centers around the similar algorithmic techniques of \cite{block_smoothed_2022}.
\end{remark}

It is clear that $\cG$ and $\cH$ are never accessed except through $\aOPTGH$ and $\OPTH$.
We make $M$ oracle calls to $\aOPTGH$ and two oracle calls to $\OPTH$ at each round.

\begin{theorem}\label{thrm:smooth_bin}
Let $\cY = \{-1, 1\}$ be a binary action space, $\cH \subseteq \{-1, 1\}^{\cX}$ be a binary-valued hypothesis class, $\cG \subseteq 2^{\cX}$ be a (possibly infinite) collection of groups, and $\ell: \{-1, 1\} \times \{-1, 1\} \rightarrow [0, 1]$ be a bounded loss function. Let the VC dimensions of $\cH$ and $\cG$ both be bounded by $d.$ Let $\alpha \geq 0$ be the approximation error of the oracle $\aOPTGH$. If we are in the $\sigma$-smooth online learning setting, then, for $M = \poly(T), n = \poly(T/\sigma),$ and $\eta = \poly(T/\sigma)$, Algorithm \ref{alg:mg_amf}
achieves, for each $g \in \cG$:
$$
\EE[\Reg_T(\cH, g)] \leq O\left( \sqrt{\frac{dT \log T}{\sigma}} + \alpha T \right),
$$
where the expectation is over all the randomness of the $(\cG, \cH)$-player's perturbations and the $\cH$-player's Bernoulli choices.
    (See Corollary~\ref{cor:thm41_spec} for precise settings of $M$, $n$, and $\eta$.)
\end{theorem}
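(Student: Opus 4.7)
The plan is to view Algorithm \ref{alg:mg_amf} through the adversary-moves-first (AMF) framework of \cite{lee_online_2022}: the $(\cG,\cH)$-player runs a no-regret algorithm over the product class $\cG \times \cH$ against the loss sequence $(g,h) \mapsto \tdl_{x_t}((g,h),(\hat y_t, y_t))$, and the $\cH$-player best-responds at each round by solving the small LP in Line \ref{line:mg_lp}. A standard AMF decomposition bounds
\[
\max_{(g,h) \in \cG \times \cH}\sum_{t=1}^T \tdl_{x_t}((g,h),(\hat y_t, y_t)) \;\leq\; \underbrace{R_T^{\cG \times \cH}}_{(\mathrm{A})} \;+\; \sum_{t=1}^T \underbrace{\max_{y \in \cY} \EE_{(g,h)\sim \hat D_t}\!\bigl[\tdl_{x_t}((g,h),(\hat y_t, y))\bigr]}_{(\mathrm{B}_t)},
\]
where $\hat D_t$ is the uniform distribution over the $M$ sampled pairs $\{(\tdg_t^{(i)},\tdh_t^{(i)})\}_{i=1}^M$. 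Since the left-hand side upper bounds $\Reg_T(\cH, g)$ simultaneously for every $g \in \cG$, it suffices to bound $(\mathrm{A})$ and each $(\mathrm{B}_t)$.

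For $(\mathrm{A})$, I would invoke the smoothed-FTPL regret bound of \cite{block_smoothed_2022} applied to the product class $\cG \times \cH$ (whose VC dimension is $O(d)$ under our assumptions). The perturbation in \eqref{eq:bin_perturb} is exactly the binary-action variant of their FTPL (see Appendix \ref{app:block}), so with $\eta$ and $n$ polynomial in $T/\sigma$ the single-draw FTPL regret is $\tilde O(\sqrt{dT/\sigma})$. Replacing the true FTPL distribution by the uniform empirical distribution over $M$ independent draws adds an $O(T/\sqrt{M})$ term via Hoeffding plus a union bound across rounds, and the per-call oracle slack in \eqref{eq:hg_opt} contributes $\alpha T$ in total; choosing $M = \poly(T)$ absorbs the sampling error into lower-order terms, leaving $(\mathrm{A}) = \tilde O(\sqrt{dT/\sigma}) + \alpha T$.

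For $(\mathrm{B}_t)$, the key claim is that the LP's optimal value $\lambda^\star$ is non-positive for every $t$. Setting $A := \sum_{i=1}^M \tdg_t^{(i)}(x_t)$ and $B_y := \sum_i \tdg_t^{(i)}(x_t)\,\ell(\tdh_t^{(i)}(x_t),y)$, the LP constraint simplifies to $A\bigl[p\,\ell(h_1'(x_t),y)+(1-p)\ell(h_{-1}'(x_t),y)\bigr] - B_y \leq \lambda$. In the nondegenerate case where $\OPTH$ returns $h_1',h_{-1}'$ with $h_1'(x_t)=+1$ and $h_{-1}'(x_t)=-1$, von Neumann's minimax theorem gives
\[
\min_p \max_y f(p,y) \;=\; \max_q \Bigl( A\min_{y' \in \{-1,1\}}\EE_{y \sim q}[\ell(y',y)] - \EE_{y \sim q}[B_y]\Bigr) \;\leq\; 0,
\]
because $\tdh_t^{(i)}(x_t)\in\{-1,1\}$ forces $\EE_q[\ell(\tdh_t^{(i)}(x_t),y)] \geq \min_{y'}\EE_q[\ell(y',y)]$ pointwise, hence $\EE_q[B_y] \geq A \min_{y'}\EE_q[\ell(y',y)]$. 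In the degenerate case where no $h \in \cH$ realizes some sign at $x_t$, every $\tdh_t^{(i)}$ (being in $\cH$) must output the opposite sign at $x_t$ and so does $h_1' = h_{-1}'$, forcing the constraint to read $0 \leq \lambda$ and yielding $\lambda^\star = 0$. Thus $(\mathrm{B}_t) \leq 0$ for all $t$.

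Combining $(\mathrm{A})$ and $\sum_t (\mathrm{B}_t) \leq 0$ and taking expectations over the $(\cG,\cH)$-player's perturbations and the $\cH$-player's Bernoulli sampling yields the claimed $\EE[\Reg_T(\cH,g)] = \tilde O(\sqrt{dT/\sigma}) + \alpha T$ uniformly over $g \in \cG$. The main obstacle I expect is step $(\mathrm{A})$: carefully adapting the stability-based smoothed-FTPL analysis of \cite{block_smoothed_2022} to the product class $\cG \times \cH$ so that the regret scales with the shared VC parameter $d$ rather than any larger composition bound, and tracking the propagation of the oracle slack $\alpha$ and the $M$-sample sparsification error through that analysis so that both can be absorbed by polynomial tunings of $M$, $n$, and $\eta$ in $T/\sigma$.
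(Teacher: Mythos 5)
Your proposal matches the paper's proof in all essential respects: the same decomposition into the $(\cG,\cH)$-player's smoothed-FTPL regret, the $M$-sample sparsification error (Hoeffding plus a union bound), and the per-round min-max value, which you show is nonpositive via von Neumann's theorem exactly as in the paper's Lemma~\ref{lemma:amf_val} (your version argues directly on the empirical LP, including the degenerate single-sign case, rather than on the worst-case AMF value, but this is the same idea). The obstacle you flag---adapting the analysis of \cite{block_smoothed_2022} to the ternary-valued product class and tracking $\alpha$ and the sampling error---is precisely what the paper's Appendix~\ref{app:block} and Lemmas~\ref{lemma:uniformconvergence} handle, so the approach is correct and essentially identical.
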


\paragraph{Technical details.}
We solve three main difficulties toward ensuring that our algorithm achieves diminishing multi-group regret while maintaining computational efficiency for large or infinite $\cG,$ which we outline here. The full proof of Theorem \ref{thrm:smooth_bin} is in Appendix \ref{app:mainthrm}.

First, although the general framework of casting online learning problems with multiple objectives as two-player games is not new (\cite{lee_online_2022, haghtalab_unifying_2023, haghtalab_calibrated_2023}), previous works have employed a multiplicative weights algorithm to hedge against the multiple objectives, requiring explicit enumeration. Departing from previous literature, however, our $(\cG, \cH)$-player uses a \emph{follow-the-perturbed leader (FTPL)} style algorithm (see, e.g., \cite{kalai_efficient_2005}) with $\OPTGH.$ The particular follow-the-perturbed leader variant of \cite{block_smoothed_2022} constructs ``perturbations'' via a set of fake examples drawn from the the base measure $\cB$ on $\cX$, and, is thus suitable for our oracle and problem setting. 

Second, a key property needed by the proof of Algorithm \ref{alg:mg_amf} is that the $\cH$-player must receive a \textit{distribution} over $\cG \times \cH$ to reduce the complex multi-objective criterion of performing well against all $(g, h) \in \cG \times \cH$ to a scalar quantity. Previous work directly supplied this distribution through the multiplicative weights algorithm. However, this would involve explicitly enumerating $\cG$ and $\cH$. On the other hand, using an FTPL algorithm as is would only output a \textit{single} action from $\cG \times \cH$, which is insufficient. To remedy this, we make the crucial observation that FTPL algorithms \textit{implicitly} maintain a distribution over $\cG \times \cH$ through the randomness of their perturbations, and, thus, we construct the empirical approximation of this distribution through repeatedly calling $\OPTGH$.
Standard uniform convergence arguments are used to bound the number of oracle calls needed.
An argument employing the minimax theorem shows that the final regret guarantee of the entire algorithm essentially inherits the regret of the FTPL algorithm, plus sublinear error terms.

Finally, the $\cH$-player chooses a distribution over $\cY$ by
by solving an exceedingly simple linear program (LP) with two optimization variables, $p$ and $\lambda$.
The value $p \in [0, 1]$ corresponds to the parameter of a Bernoulli distribution from which we sample to choose $\hat{y}_t$.
This choice of action corresponds exactly to choosing the minimax optimal strategy against the worst-case $y$ that Nature could select.
We employ similar techniques as \cite{lee_online_2022}, analyzing the value of this min-max game as if Nature (the $\max$ in the min-max) had gone first instead.
The two calls to $\OPTH$ are used just to find the actions achievable by $\cH$ on $x_t$.
(Note that it is possible that $h_{y'}'(x_t) \neq y'$ for some $y' \in \cY$, in which case the Learner will always play $-y'$, regardless of the value of $p$.)

\section{Group Oracle-Efficiency in Other Settings}\label{sec:other}
In the previous section, we presented an algorithm that achieves $o(T)$ expected regret for all $g \in \cG$, satisfying our main desideratum from Section \ref{sec:online_mgl}. However, in some cases, we may want something more. Suppose that some groups are rarer than others; in this case, a natural extension would be to ensure a stronger ``adaptive'' regret bound that instead scales with $T_g := \sum_{t = 1}^T g(x_t)$, the number of times group $g$ appeared in the $T$ rounds.
We note that the algorithm of \cite{blum_advancing_2019} achieves such a multi-group regret guarantee (for finite $\cH$ and $\cG$), but their algorithm is not oracle-efficient.
So a question that remains is whether such guarantees can be achieved in an oracle-efficient manner.

In this section, we are back in the general fully adversarial multi-group online learning setting of Section \ref{sec:online_mgl} (without i.i.d.~or smoothness assumptions). We discuss how to modify Algorithm~\ref{alg:mg_amf} via the \emph{Generalized Follow-the Perturbed-Leader (GFTPL) framework} of \cite{dudik_oracle-efficient_2019, wang_adaptive_2022} to obtain regret guarantees on group $g$ where the dependence on $T$ is replaced (at least in part) with $T_g$.
Due to space limitations, we give a sketch here; the full details are in Appendix \ref{app:meta_other}.

We first make a simple observation that motivates our use of more advanced oracle-efficient online learning techniques. Recall the $\tdg$-specific per-round regret to $\tdh$ of playing $h(x_t)$ at round $t \in [T]$:
$$
\tdl_{x_t}((\tdg, \tdh), (h(x_t), y)) = \tdg(x_t) \left( \ell(h(x_t), y) - \ell(\tdh(x_t), y) \right).
$$
The job of the $(\cG, \cH)$-player is to run a no-regret algorithm to maximize this quantity in aggregate, as described in Section \ref{sec:smooth_algo} and detailed in Appendix \ref{app:meta_algo}. The online learning literature for \textit{small-loss regret} focuses on developing algorithms that have regret depending on cumulative loss in hindsight instead of the number of rounds $T$~\cite{hutter_adaptive_nodate, cesa-bianchilugosi, gaillard_second-order_2014}; this has the advantage of giving a tighter regret bound when losses are small in magnitude. It is immediate that $\tdl_{x_t}((\tdg, \tdh), (h(x_t), y)) = 0$ whenever $\tdg(x_t) = 0$, so a small-loss regret would immediately give a $o(T_g)$ guarantee.

We focus on the case where $\cG$ and $\cH$ are finite, and $\cG \times \cH$ is the set of experts the $(\cG, \cH)$-player has access to. Most small-loss regret algorithms would require prohibitive enumeration of $\cG \times \cH$~\cite{hutter_adaptive_nodate, cesa-bianchilugosi, gaillard_second-order_2014, luo_achieving_2015}, but the \emph{GFTPL with small-loss bound algorithm} of \cite{wang_adaptive_2022} has the property that it is oracle-efficient \textit{and} enjoys small-loss regret. This algorithm follows the GFTPL design template of~\cite{dudik_oracle-efficient_2019}, which, similar to the classic FTPL algorithm of~\cite{kalai_efficient_2005}, generates a noise vector to perturb each each decision of expert. However, whereas the classic FTPL algorithm generates $|\cG| \times |\cH|$ independent random noise variables, GFTPL only generates $N \ll |\cG| \times |\cH|$ independent random variables and uses a \emph{perturbation matrix (PM)} $\Gamma \in [-1, 1]^{|\cG||\cH| \times N}$ to translate the noise vector back to $|\cG| \times |\cH|$ \textit{dependent} perturbations.

The main challenge in instantiating a GFTPL algorithm is to construct a suitable $\Gamma$ for the problem at hand. \cite{wang_adaptive_2022} provide two sufficient conditions for $\Gamma$ that, respectively, imbue the GFTPL algorithm with oracle-efficiency and small-loss regret: \emph{implementability} and \emph{approximability.} In our setting, implementability requires that every column of $\Gamma$ correspond to a dataset of ``fake examples'' suitable to $\aOPTGH$. Approximability with parameter $\gamma > 0$ guarantees the stability property that the ratio $\PP[(\tdg_t, \tdh_t) = (g, h)]/\PP[(\tdg_{t + 1}, \tdh_{t + 1}) = (g, h)] \leq \exp(\gamma \eta_t)$ for all $(g, h) \in \cG \times \cH,$ where $\eta_t > 0$ is the per-round learning rate of GFTPL. If such a $\Gamma$ exists, then instantiating the $(\cG, \cH)$-player in Algorithm \ref{alg:mg_amf} with GFTPL (instead of the algorithm of~\cite{block_smoothed_2022}) gives us the stronger $o(T_g)$ regret guarantee. Full definitions and the proof, with the precise setting of $M$, can be found in Appendix \ref{app:meta_other} and Proposition \ref{prop:thm51_spec}.
\begin{theorem}
 \label{thm:gftpl}
 Assume $\cH, \cG$ are finite and there exists a $\gamma$-approximable and implementable perturbation matrix $\Gamma \in [-1, 1]^{|\cG||\cH| \times N}$. Let $\alpha \geq 0$ be the approximation parameter of $\aOPTGH.$  Let the no-regret algorithm for the $(\cG, \cH)$-player in Algorithm \ref{alg:mg_amf} be the GFTPL algorithm of~\cite{wang_adaptive_2022} instantiated with $\Gamma$, with parameter $M = \poly(T)$. Then, for each $g \in \cG$:
 $$
 \EE[\Reg_T(\cH, g)] \leq O\left(\sqrt{T_g}\max\left\{\gamma, \log |\cH||\cG|, \sqrt{N \log |\cH||\cG|}\right\} + \alpha T\right)
 $$
\end{theorem}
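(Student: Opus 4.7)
The proof plan mirrors the structure used to establish Theorem \ref{thrm:smooth_bin}, preserving the AMF-style two-player game between the $(\cG, \cH)$-player and the $\cH$-player, and simply replacing the FTPL subroutine of \cite{block_smoothed_2022} with the GFTPL algorithm of \cite{wang_adaptive_2022} instantiated with the given $\gamma$-approximable and implementable $\Gamma$. The $\cH$-player continues to solve the two-variable linear program in Line \ref{line:mg_lp} of Algorithm \ref{alg:mg_amf}, which guarantees zero per-round regret against whatever empirical distribution $\hat Q_t$ over $\cG \times \cH$ the $(\cG, \cH)$-player supplies. The entire game is driven by the per-round loss $\tdl_{x_t}((\tdg, \tdh), (\hat y_t, y_t)) = \tdg(x_t)(\ell(\hat y_t, y_t) - \ell(\tdh(x_t), y_t))$, whose magnitude vanishes whenever $\tdg(x_t) = 0$; this is the structural fact that ultimately yields $\sqrt{T_g}$ rather than $\sqrt{T}$.

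Step by step, I would proceed as follows. First, invoke the small-loss GFTPL regret bound of \cite{wang_adaptive_2022}: for any fixed comparator $(g, h) \in \cG \times \cH$,
\begin{equation*}
\sum_{t=1}^T \tdl_{x_t}((g, h), (\hat y_t, y_t)) - \sum_{t=1}^T \EE_{(\tdg, \tdh) \sim Q_t}\!\left[\tdl_{x_t}((\tdg, \tdh), (\hat y_t, y_t))\right] \leq O\!\left(\sqrt{L^{\star}_{g, h}}\cdot\max\{\gamma, \log|\cH||\cG|, \sqrt{N\log|\cH||\cG|}\}\right),
\end{equation*}
where $Q_t$ denotes the implicit GFTPL distribution over $\cG \times \cH$ at round $t$ and $L^{\star}_{g, h} := \sum_{t=1}^T |\tdl_{x_t}((g, h), (\hat y_t, y_t))|$. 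Because $|\tdl_{x_t}((g, h), \cdot)| \leq g(x_t)$, we have $L^{\star}_{g, h} \leq T_g$, which is precisely where the desired $\sqrt{T_g}$ dependence enters. Second, replace $Q_t$ by its $M$-sample empirical approximation $\hat Q_t$ drawn by repeated calls to $\aOPTGH$; since $\cG \times \cH$ is finite and $\tdl \in [-1, 1]$, a Hoeffding plus union bound with $M = \poly(T)$ controls the per-round approximation error by $O(1/T)$ with high probability, contributing only $O(1)$ to the cumulative regret. Third, invoke the $\cH$-player's LP guarantee, exactly as in the proof of Theorem \ref{thrm:smooth_bin}, to obtain $\EE_{(\tdg, \tdh) \sim \hat Q_t}[\tdl_{x_t}((\tdg, \tdh), (\hat y_t, y_t))] \leq 0$ at every round $t$. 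Finally, chain these three inequalities together and specialize to the comparator $(g, h^{\star}_g)$ where $h^{\star}_g \in \argmin_{h \in \cH} \sum_{t=1}^T g(x_t)\ell(h(x_t), y_t)$; the $\alpha T$ term then comes from the $T$-round accumulation of the per-call $\aOPTGH$ error.

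The main obstacle is ensuring that GFTPL's small-loss bound, which is ordinarily stated for the algorithm's own implicit distribution $Q_t$, survives the two additional transformations performed here: (a) sparsification of $Q_t$ into $\hat Q_t$, and (b) the LP-based minimax response by the $\cH$-player. In particular, one must check that the $O(1/T)$ per-round sparsification error does not spuriously inflate the effective loss sequence and degrade the comparator's $L^{\star}_{g, h}$ back to $\Theta(T)$. I would handle this by integrating the uniform-convergence bound directly into the small-loss analysis of GFTPL, so that the $\sqrt{L^{\star}_{g,h}}$ quantity is computed on the true $\tdl$ sequence and sparsification contributes only an additive $O(1)$ term; this mirrors how the analogous step is carried out in the proof of Theorem \ref{thrm:smooth_bin} but with the small-loss bound in place of the full-loss bound. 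The remaining bookkeeping---choosing $M = \poly(T)$ for the high-probability event, and verifying that implementability of $\Gamma$ makes each GFTPL query reducible to a single $\aOPTGH$ call---is routine and largely inherited from the smoothed case.
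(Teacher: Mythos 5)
Your proposal is correct and follows essentially the same route as the paper: it instantiates the $(\cG,\cH)$-player with the small-loss GFTPL bound, uses $|\tdl_{x_t}((g,h),\cdot)| \leq g(x_t)$ to bound the small-loss quantity by $T_g$, controls the $M$-sample sparsification error to $O(1)$ total via Hoeffding plus a union bound over $\cY^2$ (the paper's Lemma~\ref{lemma:uniformconvergence2} with $M \geq T^2\log(k^2T)$), and uses the nonpositivity of the per-round LP/AMF value exactly as in the meta-theorem decomposition. The concern you raise about sparsification inflating the comparator's cumulative loss is resolved the same way in the paper: the GFTPL bound is applied to the true loss sequence, and sparsification only enters as an additive term in the meta-theorem.
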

We give a particular setting in which one can easily construct an approximable and implementable $\Gamma$.

\textbf{Transductive Setting.} In the \emph{transductive setting} of \cite{syrgkanis_efficient_2016, dudik_oracle-efficient_2019},
Nature reveals a set $X\subset \cX$ to the Learner at the beginning of the learning process; then 
at each round $t \in [T]$, Nature can only choose $x_t$ from $X$.
Let $N:= |X|$ denote the number of different contexts that Nature chooses from.
For this setting, we can explicitly construct $\Gamma$ to get the following result.
\begin{corollary}[Transductive setting]\label{cor:transductive}
In the transductive setting, there exists a perturbation matrix $\Gamma \in [-1, 1]^{|\cG||\cH| \times 4N}$ such that Algorithm~\ref{alg:mg_amf} with GFTPL parameterized with $M = \poly(T)$ and $\aOPTGH$ with error parameter $\alpha \geq 0$ satisfies:
$$
\EE[\Reg_T(\cH, g)] \leq O\left(\sqrt{T_g}\sqrt{\max \left\{\log |\cH||\cG|, \sqrt{N  \log |\cH||\cG|} \right\}} + \alpha T\right) \quad \text{ for all } g \in \cG.
$$
\end{corollary}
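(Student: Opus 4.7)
The plan is to derive Corollary~\ref{cor:transductive} from Theorem~\ref{thm:gftpl} by exhibiting an explicit perturbation matrix $\Gamma \in [-1,1]^{|\cG||\cH| \times 4N}$ that is both implementable (so that the GFTPL subroutine of the $(\cG, \cH)$-player only accesses $\cG \times \cH$ through $\aOPTGH$) and $\gamma$-approximable for a $\gamma$ small enough to be absorbed into the $\log|\cH||\cG|$ and $\sqrt{N\log|\cH||\cG|}$ terms already present in the bound of Theorem~\ref{thm:gftpl}. The transductive setting is exactly what makes such a short $\Gamma$ possible: every pair $(g,h) \in \cG \times \cH$ is fully characterized, for the algorithm's purposes, by its restriction to the $N$ revealed contexts in $X$, so $O(N)$ noise coordinates suffice to separate the distinct behaviors that can ever be observed.

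The construction I would use indexes the $4N$ columns by tuples $(x, a, b) \in X \times \{0,1\} \times \{-1,1\}$ and sets
\[
\Gamma_{(g,h),\,(x,a,b)} \ :=\ \indic{g(x) = a,\ h(x) = b}.
\]
Each column thus encodes a single ``fake example'' whose contribution to the score of a candidate pair $(\tdg, \tdh)$ equals $1$ exactly when $(\tdg(x), \tdh(x)) = (a, b)$ and $0$ otherwise. Implementability is then immediate from Definition~\ref{def:gh_oracle}: the noisy leader
\[
\argmax_{(g,h) \in \cG \times \cH} \ \sum_{s < t} \tdl_{x_s}((g,h), (\hat y_s, y_s)) \ +\ \sum_{k} \gamma_k\, \Gamma_{(g,h), k}
\]
can be produced by a single call to $\aOPTGH$ by augmenting the history with $4N$ weighted fake examples, one per column, with weight of the $(x, a, b)$-example equal to the independent noise draw $\gamma_{(x, a, b)}$.

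For approximability I would rely on two structural facts: every row of $\Gamma$ has exactly $N$ nonzero entries (one per $x \in X$), and any two pairs $(g, h) \neq (g', h')$ that induce distinct loss trajectories on the observed contexts also induce distinct rows of $\Gamma$ (the witnessing context lies in $X$). Combined with the Gaussian anti-concentration argument of \cite{wang_adaptive_2022}, this should yield a $\gamma$ that is polylogarithmic in $|\cH||\cG|$, which is dominated by $\sqrt{N\log|\cH||\cG|}$ in the bound of Theorem~\ref{thm:gftpl}; plugging back in gives exactly the stated multi-group regret bound.

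The main obstacle will be the approximability verification. In contrast to implementability, which is a syntactic check, approximability is the multiplicative stability condition $\PP[(\tdg_t,\tdh_t)=(g,h)]/\PP[(\tdg_{t+1},\tdh_{t+1})=(g,h)] \leq \exp(\gamma\,\eta_t)$, and its quantitative form is sensitive to the exact algebraic structure of the columns of $\Gamma$. The bulk of the technical work lies in extracting a sharp value of $\gamma$ from the Gaussian perturbation of the $4N$-dimensional compressed noise vector, and in confirming that no row of $\Gamma$ is degenerate in a way that would break the anti-concentration step. Once that is established, the remainder of the proof is a direct invocation of Theorem~\ref{thm:gftpl}.
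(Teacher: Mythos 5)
There is a genuine gap, and it sits exactly where you predicted it would: the verification of the two properties of $\Gamma$. The paper's construction sidesteps both issues by choosing the entries to \emph{be} the losses: it indexes the $4N$ columns by $(x, y, y') \in X \times \cY \times \cY$ and sets $\Gamma^{((g,h),(x,y,y'))} := \tdl_x((g,h),(y',y))$. With this choice, implementability holds with complexity $1$ (column $(x,y,y')$ is realized by the single weighted fake example $(1,(x,y,y'))$, so the required identity between column differences and loss differences holds with equality), and $1$-approximability is immediate: for any $(g,h)$ and any $(x_0,y',y)$ with $x_0 \in X$ (which is guaranteed in the transductive setting), taking $s$ to be the standard basis vector of column $(x_0,y,y')$ gives $\langle \Gamma^{(g,h)} - \Gamma^{(g',h')}, s\rangle = \tdl_{x_0}((g,h),(y',y)) - \tdl_{x_0}((g',h'),(y',y))$ exactly, with $\|s\|_1 = 1$. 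The corollary then follows by plugging $\gamma = 1$ into Theorem~\ref{thm:gftpl}. No anti-concentration argument is needed anywhere: $\gamma$-approximability in this paper (Definition~\ref{def:gamma_approx}) is a purely deterministic $\ell_1$ condition, the stability-of-distributions statement is a \emph{consequence} of it via Lemma~2 of \cite{wang_adaptive_2022}, and the noise in Algorithm~\ref{alg:gftpl_small} is Laplace, not Gaussian. Your plan to extract $\gamma$ from "Gaussian anti-concentration" and row-distinctness conflates the admissibility condition of \cite{dudik_oracle-efficient_2019} with the approximability condition actually used here, and it leaves the crux of the proof undone.

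Your specific matrix $\Gamma_{(g,h),(x,a,b)} = \indic{g(x)=a,\ h(x)=b}$ also fails implementability as the paper defines it for the columns with $a = 0$. Implementability requires $\Gamma^{((g,h),j)} - \Gamma^{((g',h'),j)} = \sum_{(w,(x,y,y'))\in S_j} w\bigl(\tdl_x((g,h),(y',y)) - \tdl_x((g',h'),(y',y))\bigr)$ for \emph{all} row pairs, and $\tdl_x((g,h),(y',y)) = g(x)(\ell(y',y)-\ell(h(x),y))$ vanishes identically whenever $g(x)=0$. So for two pairs with $g(x)=g'(x)=0$ but $h(x) \neq h'(x)$, the right-hand side is $0$ for every choice of fake examples, while your column entry difference for $(x,0,b)$ is $\pm 1$. (The $a=1$ columns are fine, since $u\,\indic{v=b}$ lies in the span of $\{(u,v)\mapsto u(\ell(y',y)-\ell(v,y))\}$, and one can check the resulting $2N$-column matrix is $2$-approximable by taking $s_{(x_0,1,b)} = \ell(y',y)-\ell(b,y)$; so a repaired version of your construction does work, at the cost of a constant-factor worse $\gamma$.) The cleanest fix is simply to adopt the loss-valued entries, after which the corollary is a one-line invocation of Theorem~\ref{thm:gftpl}.
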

Suppose that $N \leq T$ (which is the case in the transductive learning setting from~\cite{kakade_batch_2005}). 
Then the regret bound (ignoring the dependence on $\log(|\cH||\cG|)$) on group $g$ is $O(\sqrt{T_g} T^{1/4})$, which is asymptotically smaller than $\sqrt{T}$ whenever $T_g = o(\sqrt{T})$.
If $N$ is fixed independent of $T$, then the regret bound is $O(\sqrt{T_g})$.

\section{Conclusion and Future Work}\label{sec:conclusion}
In this paper, we design algorithms for online multi-group learning that are oracle-efficient and achieve diminishing $o(T)$ expected regret for all groups $g \in \cG$ simultaneously, even when $\cG$ is too large to explicitly enumerate.
The most interesting future directions that we leave open in this work include designing oracle-efficient algorithms that %
achieve $o(T_g)$ group-specific regret for \textit{infinite} $\cH$ and $\cG$ and in more general settings.

\subsubsection*{Acknowledgments}

We are grateful to Abhishek Shetty for pointing out possible improvements to the dependence on $\sigma$.
We acknowledge funding support from a Google Faculty Research Award to Daniel Hsu, the NSF under grants IIS-2040971 and CCF-2008733, and the ONR under grants N00014-24-1-2700 and N00014-22-1-2713. Samuel Deng acknowledges funding from the Avanessians Doctoral Fellowship for Engineering Thought Leaders and Innovators in Data Science. 

\newpage

\printbibliography

\clearpage

\appendix
\section{Proof of Theorem~\ref{thm:online_listupdate}}
\label{app:listupdate}
The goal is to prove that \textsc{Online ListUpdate} satisfies, for all $g\in \cG$,
$$
    \EE[\Reg_T(\cH, g)] \leq O\left( (d \log T)^{1/3} T^{2/3} + \sqrt{dT \log T} \right) . $$
Fix any $g \in \cG$.
For each $h \in \cH$, define
$$ \Reg_T(h,g) := \sum_{t=1}^T g(x_t) ( \ell(\hat y_t,y_t) - \ell(h(x_t),y_t) ) . $$
By linearity of expectation, Theorem~\ref{thm:listupdate}, and an elementary integral bound,
\begin{align*}
    \EE[ \Reg_T(h,g) ]
    & = \sum_{t=1}^T \EE[ g(x_t) \ell(\hat y_t,y_t) - \EE[ g(x_t) \ell(h(x_t),y_t) ] \\
    & \leq 1 + \sum_{t=2}^T P(g) \cdot \left( \EE[ \ell(f_t(x_t),y_t) \mid x_t \in g ] - \EE[ g(x_t) \ell(h(x_t),y_t) \mid x_t \in g ] \right) \\
    & \leq 1 + \sum_{t=2}^T \left( (1-\delta) \cdot O\left( \left( \frac{d \log (t-1) + \log(1/\delta)}{t-1} \right)^{1/3} \right) + \delta \right) \\
    & = 1 + O\left( (d \log T + \log(1/\delta))^{1/3} T^{2/3} + \delta T \right) .
\end{align*}
Plug-in $\delta = 1/T$ to obtain
$$ \EE[\Reg_T(h,g)] \leq O\left( (d \log T)^{1/3} T^{2/3} \right) . $$

It remains to relate $\max_{h \in \cH} \EE[\Reg_T(h,g)]$ to $\EE[\Reg_T(\cH,g)]$.
Define
\begin{equation*}
    h_g \in \argmin_{h \in \cH} \EE_{(x,y) \sim \mu}[ g(x) \ell(h(x),y) ] \quad \text{and} \quad
    \hat h_g \in \argmin_{h \in \cH} \sum_{t=1}^T g(x_t) \ell(h(x_t),y_t) .
\end{equation*}
Then
\begin{align*}
    \EE[\Reg_T(\cH,g)]
    & = \max_{h \in \cH} \EE[\Reg_T(h,g)]
    + \EE\left[ \sum_{t=1}^T g(x_t) ( \ell(h_g(x_t,y_t) - \ell(\hat h_g(x_t),y_t) ) \right]
    .
\end{align*}
Since $\cH$ has VC dimension at most $d$, a standard uniform convergence argument implies
\begin{align*}
    \EE\left[
        \max_{h \in \cH}
        T \EE_{(x,y) \sim \mu}[ g(x) \ell(h(x),y) ]
        - \sum_{t=1}^T g(x_t) \ell(h(x_t),y_t) 
    \right]
    & \leq O\left( \sqrt{dT \log T}  \right)
    .
\end{align*}
Using the definitions of $h_g$ and $\hat h_g$, we obtain
\begin{align*}
    \EE\left[ \sum_{t=1}^T g(x_t) ( \ell(h_g(x_t,y_t) - \ell(\hat h_g(x_t),y_t) ) \right]
    & \leq O\left( \sqrt{dT \log T} \right) .
\end{align*}
Therefore, we conclude that
$$ \EE[\Reg_T(\cH,g)] \leq 
O\left( (d \log T)^{1/3} T^{2/3} + \sqrt{dT \log T} \right) . $$
This finishes the proof of Theorem~\ref{thm:online_listupdate}.
\qed

\section{Proof of Main Theorem \ref{thrm:smooth_bin}}\label{app:mainthrm}
In this section, we prove the multi-group regret guarantee of our main algorithm, Algorithm \ref{alg:mg_amf}. To restate the theorem, we aim to show, for all $g \in \cG$,
$$
\EE[\Reg_T(\cH, g)] \leq O\left( \sqrt{\frac{dT \log T}{\sigma}} + \alpha T \right).
$$
More explicitly, we aim to show that:
$$
\sum_{t = 1}^T \EE[g(x_t)\left( \ell(h_t(x_t), y_t) - \ell(h^*(x_t), y_t) \right)] \leq O\left( \sqrt{\frac{dT \log T}{\sigma}} + \alpha T \right),
$$
where $h^* \in \min_{h \in \cH} \sum_{t = 1}^T g(x_t) \ell(h(x_t), y_t).$ We follow a generalization of the online minimax multiobjective optimization framework of \cite{lee_online_2022}, with techniques inspired by \cite{haghtalab_calibrated_2023}. 
\subsection{The AMF Algorithm Framework}\label{app:amf_framework}
We first restate their ``adversary-moves-first'' AMF algorithm of \cite{lee_online_2022} and its main regret guarantee for convenience, as we will need to adapt and generalize it to our setting. Let $\cA_t$ denote a general action space of the learner, and let $\cZ_t$ denote the general action space of the adversary at round $t \in [T]$. In full generality, $\cA_t$ and $\cZ_t$ are allowed to change with the rounds $t \in [T].$ We differentiate this from the action space $\cY$ of the main body. For each round $t = 1, \dots, T$, consider the following setting, which we refer to as the \textit{multiobjective online optimization problem}:
\begin{enumerate}
    \item The adversary selects a continuous, $d$-dimensional loss function $r_t: \cA_t \times \cZ_t \rightarrow [-1, 1]^d$. Each component $r_t^j: \cA_t \times \cZ_t \rightarrow [-1, 1]$ is convex in $\cA_t$ and concave in $\cZ_t.$ 
    \item The learner selects an action $a_t \in \cA_t.$
    \item Nature observes the learner's action $a_t$ and responds with $z_t \in \cZ_t.$
    \item The learner incurs the $d$-dimensional loss $r_t(a_t, z_t).$ 
\end{enumerate}
In this setting, the learner's goal is to minimize the value of the maximum dimension of the accumulated loss vector after $T$ rounds:
$$
\max_{j \in [d]} \sum_{t = 1}^T r_t^j (a_t, z_t).
$$
To benchmark the learner's performance, we consider the following quantity, which we refer to as the \textit{adversary-moves-first (AMF) value at round $t$.}
\begin{definition}[Adversary-Moves-First (AMF) Value at Round $t$,\cite{lee_online_2022}]\label{def:amf_val}
The \emph{adversary-moves-first (AMF) value at round $t$} is the value:
\begin{equation}\label{eq:amf_val}
    v_t^A := \max_{z_t \in \cZ_t} \min_{a_t \in \cA_t} \left( \max_{j \in [d]} r_t^j(a_t, z_t) \right).
\end{equation}
\end{definition}
We conceive of the value $v_t^A$ in \eqref{eq:amf_val} as the aspirational smallest value of the maximium coordinate of $r_t$ the learner could guarantee \textit{if} the adversary had to reveal $z_t$ first and the learner could best respond with $a_t.$ Per how the multiobjective online optimization problem is set up, however, the opposite is true --- the learner must commit to an action $a_t \in \cA_t$ first, and \textit{then} the adversary is allowed to play $z_t \in \cZ_t$ in response to maximize the learner's misfortune. Regardless, we can define a notion of regret with respect to this particular benchmark.
\begin{definition}[Adversary-Moves-First (AMF) Regret]\label{def:amf_regret}
Over $T$ rounds of the above \emph{multiobjective online optimization problem}, the \emph{adversary-moves-first regret} of the learner is:
$$
\AMFReg_T := \max_{j \in [d]} \left( \sum_{t = 1}^T r_t^j(a_t, z_t) - v_t^A \right).
$$
\end{definition}
This notion measures the cumulative regret the learner has for not playing the action $a_t \in \cA_t$ achieving the aspirational value $v_t^A$ at round $t$ over \textit{all} $d$ coordinates of the loss vector. 

A natural question, then, is to wonder if such a regret can be made to diminish sublinearly. That is, does there exist an algorithm such that $\AMFReg_T \leq o(T)$? \cite{lee_online_2022} answer this in the affirmative, presenting Algorithm \ref{alg:amf}.

\begin{algorithm}
\caption{AMF Algorithm of \cite{lee_online_2022}}\label{alg:amf}
\begin{algorithmic}[1]
\FOR{$t = 1, 2, 3, \dots, T$}
\STATE Receive adversarially chosen action spaces $\cA_t$ and $\cZ_t$ and the $d$-dimensional loss function $r_t: \cA_t \times \cZ_t \rightarrow [-1, 1]^d.$
\STATE
\label{line:amf_qt}
Let
$$
q_t^j := \frac{\exp\left(\eta \sum_{s = 1}^{t - 1} r_s^j(a_s, z_s) \right)}{\sum_{i \in [d]} \exp\left( \eta \sum_{s = 1}^{t - 1} r_s^i(a_s, z_s) \right)}~\text{ for } j \in [d].
$$
For $t = 1$, let $q_t^j = 1/d$ for all $j \in [d].$
\STATE
\label{line:amf_minmax}
Solve the min-max optimization problem:
\begin{equation}\label{eq:amf_minmax}
    a_t \in \argmin_{a \in \cA_t} \max_{z \in \cZ_t} \sum_{j \in [d]} q_t^j r_t^j (a, z).
\end{equation}
\STATE Commit to the action $a_t \in \cA_t$, observe Nature's choice $z_t \in \cZ_t$, and incur loss $r_t(a_t, z_t).$
\ENDFOR
\end{algorithmic}
\end{algorithm}

It is not immediately clear at first glance how Algorithm \ref{alg:amf} translates to our multi-group online learning setting. In the next section, we will show a reduction from our setting to the this AMF framework. We will specialize Algorithm \ref{alg:amf} to our setting and prove a ``meta-theorem'' similar to the original guarantee of Algorithm \ref{alg:amf} and proceed to control the regret via that meta-theorem in subsequent sections.

\subsection{Reduction of multi-group online learning to AMF Framework}\label{app:reduction}
Recall that, in this paper, we actually care about the \emph{online multi-group learning setting} described in Section \ref{sec:online_mgl}. We restate the objective here for convenience.

In multi-group online learning, the learner has access to a hypothesis class $\cH$ taking contexts from $\cX$ and outputting actions in $\cY$, a common action space for the learner and Nature. There is a common loss function for the problem, $\ell(\cdot, \cdot): \cY \times \cY \rightarrow [0, 1]$. To allay confusion, we refer to the adversary in the multi-group online setting of Section \ref{sec:online_mgl} as ``Nature'' and the adversary in the online multiobjective optimization problem of Section \ref{app:amf_framework} as ``the adversary.'' The learners in both settings correspond to one another, so we just use ``the learner.'' For clarity of exposition, we let $\cY = \{-1, 1\}$ be a binary action space for the remainder of Appendix \ref{app:mainthrm}. The generalization to the case where $\cY$ takes $K$ discrete values is sketched in Appendix \ref{app:meta_other}.

We consider the regret of the learner on subsequences of rounds $(t \in [T] : x_t \in g)$ defined by the groups $g \in \cG$ and the sequence of contexts $x_1,\dotsc,x_T$.
Specifically, the \emph{(multi-group) regret of the learner on group $g$} is
\begin{equation}\label{eq:mg_regret_app}
    \Reg_T(\cH, g) := \sum_{t = 1}^T g(x_t) \ell(\hat{y}_t, y_t) - \min_{h \in \cH} \sum_{t = 1}^T g(x_t) \ell(h(x_t), y_t) .
\end{equation}
Crucially, the best hypothesis for one group may differ from that of another group. The learner seeks to achieve achieve sublinear expected regret, on all groups $g \in \cG$ simultaneously. 

We now show a reduction from the multi-group online learning setting to the general multiobjective online optimization problem of the previous Section \ref{app:amf_framework}. An important observation is that, when $\cY = \{-1, 1\}$, varying $h_t \in \cH$ only affects the regret at round $t \in [T]$ insofar as its behavior on $x_t$. That is, for a fixed $x_t \in \cX$ and $y \in \cY$, $\ell(h_t(x_t), y) \in \{\ell(-1, y), \ell(1, y)\}$.
Note that it is possible for the set $\{ h(x_t) : h \in \cH \} \subseteq \cY$ to be a singleton set, in which case the learner will always take this unique action.

\begin{itemize}
    \item Let $\cA_t$, the learner's action space for round $t \in [T]$ in Section \ref{app:amf_framework}, be the simplex $\Delta(\cY)$.

    \item Let $\cZ_t$, the adversary's action space in Section \ref{app:amf_framework}, be $\cZ_t = [0, 1]$ for all $t \in [T]$, which will correspond to the parameter of a Bernoulli distribution over the binary-valued action space $\cY$ of Nature in the multi-group online learning problem. 
    \item Let $r_t: \cA_t \times \cZ_t \rightarrow [-1, 1]^d$, the adversarially chosen loss function, be $\cG \times \cH$-dimensional, with each coordinate corresponding to a pair $(\tdg, \tdh) \in \cG \times \cH.$
    Consider any $p, \gamma \in [0,1]$, each determining a Bernoulli distribution over $\cY = \{-1,1\}$.
    As in Section \ref{sec:smooth_algo}, for any $x \in \cX$:
    $$
    \tdl_{x}((\tdg, \tdh), (y', y)) := \tdg(x) \left( \ell(y', y) - \ell(\tdh(x), y) \right).
    $$
    Then, define:
    \begin{equation}\label{eq:reduction}
    r_t^{(\tdg, \tdh)}(p, \gamma) := \EE_{y' \sim p} \EE_{y \sim \Ber(\gamma)}\left[\tdl_{x_t}((\tdg, \tdh), (y', y))\right].
    \end{equation}
    Above, the loss $\ell(\cdot, \cdot)$ is the fixed loss of the multi-group online learning problem, and $x_t \in \cX$ is the context chosen by Nature at round $t$, which indexes $r_t$.
\end{itemize}
It may now be slightly clearer how Algorithm \ref{alg:mg_amf} maps to Algorithm \ref{alg:amf}, but we provide a high-level overview here to prepare the reader for the subsequent sections.
\begin{itemize}
  \item The distribution $q_t \in \Delta[d]$ in Line~\ref{line:amf_qt} of Algorithm \ref{alg:amf} corresponds to the implicit distribution formed by querying $\aOPTGH$ $M$ times to generate $\{(\tdg_i, \tdh_i)\}_{i = 1}^M$, i.e. the $(\cG, \cH)$-player.
  \item Solving the min-max optimization problem in Line~\ref{line:amf_minmax}, Equation \eqref{eq:amf_minmax} of Algorithm \ref{alg:amf} corresponds to the two calls to the simple optimization problem solved by the $\cH$-player and solving the simple one-dimensional linear program in Line~\ref{line:mg_lp} of Algorithm~\ref{alg:mg_amf}.
\end{itemize}
We make these correspondences formal in the subsequent sections. To organize this, we first formally show the correspondence between the $(\cG, \cH)$-player and the construction of $q_t$, and the $\cH$-player and Equation \eqref{eq:amf_minmax} in Algorithm \ref{alg:amf}.

\subsection{Instantiations of the $\aOPTGH$ oracle}\label{app:gh_oracle}
The computational primitive our algorithm assumes access to is a $(\cG, \cH)$-optimization oracle, defined in Definition \ref{def:gh_oracle}, and requoted here for ease of reference.

\begin{definition}[$(\cG, \cH)$-optimization oracle]
Fix an error parameter $\alpha \geq 0$. For a collection of groups $\cG \in 2^{\cX}$, a collection of hypotheses $\cH \subseteq \cY^{\cX}$, and a sequence of $m$ loss functions $\ell_i: (\{0, 1\} \times \cY) \times (\cY \times \cY) \rightarrow [-1, 1]$, an \emph{$\alpha$-approximate $(\cG, \cH)$-optimization oracle} $\aOPTGH$ is an \emph{$\alpha$-approximation optimization oracle} (Definition \ref{def:oracle}) that outputs a pair $(\tdg, \tdh) \in \cG \times \cH$ satisfying: 
\begin{equation}
    \sum_{i = 1}^m w_i \ell_i((\tdg(x_i), \tdh(x_i)), (y_i, y'_i)) \geq \sup_{(g^*, h^*) \in \cG \times \cH} \sum_{i = 1}^m w_i \ell_i((g^*(x_i), h^*(x_i)), (y_i, y'_i)) - \alpha.
\end{equation}
\end{definition}

A common assumption in the literature on oracle-efficient online learning is positing the existence of some reasonable optimization oracle, typically commensurate to the ability to solve ERM. Although it is well-known that ERM is computationally hard in the worst-case, a bedrock of modern machine learning is the assumption that ERM is at least heuristically and approximately solvable. Although, for the purposes of our work, we assume access to this oracle as a black-box, it is natural to wonder if such an oracle can be instantiated. \cite{globus-harris_algorithmic_2022} gives two such instantiations which we quote here for completeness.

We consider the specific instantiation of the $(\cG, \cH)$-oracle in Algorithm \ref{alg:mg_amf} for a specific round $t \in [T]$, which aims to solve the following optimization problem for some $(\tdg_t, \tdh_t) \in \cG \times \cH$:
\begin{equation}\label{eq:oracle_problem}
\sum_{s = 1}^{t - 1} \tdg_t(x_s) \left( \ell(\hat{y}_s, y_s) - \ell(\tdh_t(x_s), y_s) \right) \geq \mathrm{OPT} - \alpha,
\end{equation}
where $\mathrm{OPT} := \sup_{g^*, h^* \in (\cG, \cH)} \sum_{s = 1}^{t- 1} g^*(x_s) \left( \ell(\hat{y}_s, y_s) - \ell(h^*_t(x_s), y_s) \right)$. 

In both instantiations of the oracle in \cite{globus-harris_algorithmic_2022}, the oracle aims to find a $(g, h) \in (\cG, \cH)$ competitive to some reference model, $f: \cX \rightarrow \{0, 1\}.$ For simplicity, as in the main body, we assume that $\cY = \{0, 1\}$, and the ``reference model'' we compete with is given by the Learner's history of actions up to round $t$: $(x_1, \hat{y}_1), \dots, (x_{t-1}, \hat{y}_{t-1})$. That is, we compare with the function $f: \cX \rightarrow \{0, 1\}$ that maps $f(x_s) = \hat{y}_s$ for all $s = 1, \dots, t - 1$.

\textbf{Reduction to ternary classification.} The first instantiation of a $(\cG, \cH)$ oracle in \cite{globus-harris_algorithmic_2022} reduces the optimization oracle to the existence of a solver for a weighted ternary classification problem. The exposition here is quoted directly from \cite{globus-harris_algorithmic_2022}.

Start with a class $\mathcal{K}$ of \emph{ternary} valued functions $p: \cX \rightarrow \{0, 1, ?\}$. For each $p \in \mathcal{K}$, define the \emph{$p$-derived group} and \emph{$p$-derived hypothesis} as:
\[
g_p(x) = \begin{cases}
    1 & \text{if}~p(x) \in \{0, 1\} \\
    0 & \text{if}~p(x) = ?
\end{cases}
\quad
h_p(x) = \begin{cases}
    p(x) & \text{if}~p(x) \in \{0, 1\} \\
    0 & \text{if}~p(x) = ?
\end{cases}
\]
This class $\mathcal{K}$ induces a set of pairs $(g_p, h_p)$ and a product class $(\cG, \cH)_{\mathcal{K}} := \{(g_p, h_p) : p \in \mathcal{K}\}$. We may now define a \emph{cost-sensitive classification} problem over $\mathcal{K}$ as follows, given an existing model $f: \cX \rightarrow \{0, 1\}$, with the following costs:
\[
c_f((x, y), z) := \begin{cases}
    0 & \text{ if } z = ? \\
    1 & \text{ if } f(x) = y \not = z \\
    -1 & \text{ if } z = y \not = f(x) \\
    0 & \text{ otherwise}
\end{cases}.
\]
For any distribution $\mu$ over $\cX \times \{0, 1\}$, the associated cost-sensitive classification problem for costs $c_f((x, y), z)$ defined above is:
\begin{equation}\label{eq:costsens}
p^* \in \argmin_{p \in \mathcal{K}} \EE_{(x, y) \sim \mu}[c_f((x,y), p(x))].
\end{equation}
Many efficient algorithms that heuristically solve such optimization problems exist.

The main theorem from \cite{globus-harris_algorithmic_2022}, restated here, is the following:

\begin{theorem}\label{thrm:gh_instant_1}
Fix any arbitrary distribution $\mu$ over $\cX \times \{0, 1\}$. Let $\mathcal{K}$ be a class of ternary-valued functions $p: \cX \rightarrow \{0, 1, ?\}$ and let $f: \cX \rightarrow \{0, 1\}$ be any binary-valued model. Let $p^*$ be the solution to the cost-sensitive classification problem in Equation \eqref{eq:costsens}. Then,
\[
(g^*_p, h^*_p) \in \argmax_{(g, h) \in (\cG, \cH)_{\mathcal{K}}} \EE_{(x, y) \sim \mu}\left[g(x) \left( \ell(f(x), y) - \ell(h(x), y) \right) \right].
\]
When $\mu$ is the empirical distribution over $x_1, \dots, x_{t - 1}$, the solution $(g_p^*, h_p^*)$ forms a solution to the optimization problem in Equation \eqref{eq:oracle_problem} when $(\cG, \cH)_{\mathcal{K}} = \cG \times \cH.$
\end{theorem}
We refer the reader to Section 4.2 in \cite{globus-harris_algorithmic_2022} for a proof.

\textbf{Reduction to alternating maximization.} Another instantiation of the $(\cG, \cH)$-oracle in \cite{globus-harris_algorithmic_2022} is an alternating maximization approach. The reduction to ternary classification quoted above relies on an oracle for the class $\mathcal{K}$ and supplies guarantees for the derived class $(\cG, \cH)_{\mathcal{K}}.$ However, if we wish to begin with $\cG \times \cH$, we can take an ``EM-style'' alternating maximization approach that only requires ERM oracles for $\cG$ and $\cH$ separately. This approach only guarantees a saddle point local optimum.

The main idea is that, by holding $g$ fixed and solving for $h^*$, and vice versa, the following optimization problems are no harder than ERM over $\cG$ and $\cH$ individually:
\begin{align}
    g^* \in \argmax_{g^* \in \cG} \EE_{(x, y) \sim \mu}[g^*(x) \left( \ell(f(x), y) - \ell(h(x), y) \right)] \label{eq:indiv_g} \\
    h^* \in \argmax_{h \in \cH} \EE_{(x, y) \sim \mu}[g(x) \left( \ell(f(x), y) - \ell(h^*(x), y) \right)] \label{eq:indiv_h}.
\end{align}
In Equation \eqref{eq:indiv_g}, $h$ is fixed and we solve for $g^*$; in Equation \eqref{eq:indiv_h}, $g$ is fixed, and we solve for $h^*$. With appropriate modifications to the distribution $\mu$ we can construct ERM problems for $g^*$ and $h^*$ comensurate to solving Equations \eqref{eq:indiv_g} and \eqref{eq:indiv_h}. We refer the reader to Lemmas 21 and 22 in \cite{globus-harris_algorithmic_2022} for the proofs.

This allows us to state an alternating maximization algorithm for finding a saddle point $(g, h) \in \cG \times \cH$ that gives a local optimum to Equation \eqref{eq:oracle_problem} in Algorithm \ref{alg:alternating}. The corresponding theorem, restated from \cite{globus-harris_algorithmic_2022} is:
\begin{theorem}\label{thrm:gh_instant_2}
Let $\epsilon > 0 $. Fix any empirical distribution over $(x_1, y_1), \dots, (x_m, y_m)$, let $f: \cX \rightarrow \{0, 1\}$ be an arbitrary model, and let $\cG$ and $\cH$ be arbitrary group and hypothesis classes. After solving at most $2/\epsilon$ ERM problems over each of $\cG$ and $\cH$ (in Equations \eqref{eq:indiv_g} and \eqref{eq:indiv_h}, respectively), Algorithm \ref{alg:alternating} returns a pair $(g^*, h^*)$ with the properties that:
\begin{enumerate}
    \item For every $h \in \cH$,
    \[
    \sum_{i = 1}^m g^*(x_i) \left( \ell(f(x_i), y_i) - \ell(h(x_i), y_i) \right) \leq \sum_{i = 1}^m g^*(x_i)\left( \ell(f(x_i), y_i) - \ell(h^*(x_i), y_i) \right) + \epsilon.
    \]
    \item For every $g \in \cG$,
    \[
    \sum_{i = 1}^m g(x_i) \left( \ell(f(x_i), y_i) - \ell(h^*(x_i), y_i) \right) \leq \sum_{i = 1}^m g^*(x_i)\left( \ell(f(x_i), y_i) - \ell(h^*(x_i), y_i) \right) + \epsilon.
    \]
\end{enumerate}
\end{theorem}

\begin{algorithm}
\caption{Alternating Maximization for $\cG \times \cH$ Oracle}\label{alg:alternating}
\begin{algorithmic}[1]
\REQUIRE Dataset $\{(x_i, y_i)\}_{i = 1}^m$, a model $f: \cX \rightarrow \{0, 1\}$, error parameter $\epsilon$.
\STATE Initialize $(g^*, h^*) \in \cG \times \cH$ arbitrarily.
\STATE Let
\[
    \mathrm{VAL} := \sum_{i = 1}^m g^*(x_i) \left( \ell(f(x_i), y_i)) - \ell(h^*(x_i), y_i)\right)
\]
\STATE Use ERM oracle for Equations \eqref{eq:indiv_g} and \eqref{eq:indiv_h} to solve for:
\begin{align*}
    g^* \in \argmax_{g \in \cG} \sum_{i = 1}^m g(x) \left( \ell(f(x), y) - \ell(h^*(x), y) \right) \\
    h^* \in \argmax_{h \in \cH} \sum_{i = 1}^m g^*(x) \left( \ell(f(x), y) - \ell(h(x), y) \right)
\end{align*}
\WHILE{$\sum_{i = 1}^m g^*(x) \left( \ell(f(x), y) - \ell(h^*(x), y) \right) \geq \mathrm{VAL} + \epsilon$}
\STATE Let
\[
    \mathrm{VAL} := \sum_{i = 1}^m g^*(x_i) \left( \ell(f(x_i), y_i)) - \ell(h^*(x_i), y_i)\right)
\]
\STATE Use ERM oracle for Equations \eqref{eq:indiv_g} and \eqref{eq:indiv_h} to solve for:
\begin{align*}
    g^* \in \argmax_{g \in \cG} \sum_{i = 1}^m g(x) \left( \ell(f(x), y) - \ell(h^*(x), y) \right) \\
    h^* \in \argmax_{h \in \cH} \sum_{i = 1}^m g^*(x) \left( \ell(f(x), y) - \ell(h(x), y) \right)
\end{align*}
\ENDWHILE
\STATE Return $(g^*, h^*) \in \cG \times \cH$.
\end{algorithmic}
\end{algorithm}

We believe that it is an interesting and worthwhile open question to develop more specific instantiations of this $(\cG, \cH)$-oracle for more specific problem settings that are computationally efficient and have provable optimization guarantees.

\subsection{The group-hypothesis and hypothesis players}
We formally define how the $(\cG, \cH)$-player corresponds to the weights $q_t \in \Delta[d]$ in Algorithm \ref{alg:amf}. The crucial observation here is that the perturbations of the FTPL algorithm of \cite{block_smoothed_2022} used in our setting form an implicit distribution over $\cG \times \cH$ that can be approximated by calling the $\aOPTGH$ oracle $M$ times. 

In the proceeding sections, we denote $\Delta(\cG \times \cH)$ as the (possibly infinite-dimensional) space of measures over the functions $\cG \times \cH$. However, we will always only access sparse distributions on this space, with a finite number of $(g, h)$ pairs in $\cG \times \cH$ obtaining nonzero mass. The following definition should make this clear.

\begin{definition}[The distribution of the $(\cG, \cH)$-player]\label{def:gh_player}
For any round $t \in [T]$, let $\{(\tdg_i, \tdh_i)\}_{i = 1}^M$ be the $M$ samples drawn from querying $\aOPTGH$ in Algorithm \ref{alg:mg_amf}. Let the empirical distribution $\tdq_t \in \Delta(\cG \times \cH)$ be the \emph{distribution of the $(\cG, \cH)$-player at round $t.$}
\end{definition}

It is easy to see that $\tdq_t$ is a valid distribution over $(\cG, \cH)$, with measure over $A \subseteq \cG \times \cH$, defined by:
$$
P_M(A) := \frac{1}{M} \sum_{i = 1}^M \delta_{(\tdg_i, \tdh_i)}(A),
$$
where $\delta_{(\tdg_i, \tdh_i)}$ is the Dirac measure of $(\tdg_i, \tdh_i)$ falling into the set $A.$ The stochasticity of $(\tdg_i, \tdh_i)$ is over the random perturbations described in Section \ref{sec:smooth_algo}, Equation \eqref{eq:bin_perturb}. Equipped with this definition, we can take empirical expectations over $\tdq_t \in \Delta(\cG \times \cH)$ in the usual way. Observe that, in Algorithm \ref{alg:amf}, Equation \eqref{eq:amf_minmax}, the optimization problem at round $t$, is equivalent to:
$$
a_t \in \argmin_{a \in \cA} \max_{z \in \cZ} \EE_{j \sim q_t} \left[ r_t^j(a, z) \right].
$$
Because the $d$ objectives in our reduction (Section \ref{app:reduction}) correspond to each group-hypothesis pair $(g, h) \in \cG \times \cH$, $\cA_t$ corresponds to $\Delta(\cY)$, and $\cZ_t$ always corresponds to $[0, 1]$, we can equivalently consider the min-max optimization problem at round $t \in [T]$:
\begin{equation}\label{eq:mg_minmax}
p_t \in \argmin_{p \in \Delta(\cY)} \max_{\gamma \in [0, 1]} \EE_{(\tdg, \tdh) \sim \tdq_t} \left[ r_t^{(\tdg, \tdh)}(p, \gamma) \right],
\end{equation}
where $r_t^{(\tdg, \tdh)}$ is defined in Equation \ref{eq:reduction}. The next lemma relates the optimization procedure of the $\cH$-player in Algorithm \ref{alg:mg_amf} to the min-max optimization procedure of Equation \eqref{eq:amf_minmax} in Algorithm \ref{alg:amf}.

\begin{lemma}[The optimization of the $\cH$-player]\label{lemma:h_player}
For any round $t \in [T]$, let $\{(\tdg_i, \tdh_i)\}_{i = 1}^M$ denote the $M$ samples obtained by calling $\aOPTGH$ $M$ times in Algorithm \ref{alg:mg_amf}, and denote $\tdq_t \in \Delta(\cG \times \cH)$ denote the corresponding empirical distribution (Definition \ref{def:gh_player}). Then, $p_t \in \Delta(\cY)$ defined in Equation \eqref{eq:mg_minmax} above is equivalent to the distribution $(p, 1 - p) \in \Delta(\{-1, 1\})$ obtained from solving the linear program of the $\cH$-player in Algorithm \ref{alg:mg_amf}.
\end{lemma}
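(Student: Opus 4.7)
The plan is to unfold both sides and match them term-by-term. First, I would substitute the definitions of $r_t^{(\tdg,\tdh)}$ from Equation~\eqref{eq:reduction} and $\tdl_{x_t}$ from Equation~(6) into Equation~\eqref{eq:mg_minmax}. Since $y\sim\Ber(\gamma)$ ranges only over $\{-1,1\}$, the inner quantity $\EE_{y\sim\Ber(\gamma)}[\tdl_{x_t}((\tdg,\tdh),(y',y))]$ is affine in $\gamma$, so the $\max_{\gamma\in[0,1]}$ is attained at an extreme point $\gamma\in\{0,1\}$, i.e.\ at a Dirac mass on some $y\in\{-1,1\}$. This collapses the inner maximization to a maximization over $y \in \{-1,1\}$.

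Second, I would parameterize $p\in\Delta(\{-1,1\})$ by the scalar probability $p\in[0,1]$ that $y'=1$, so that $\EE_{y'\sim p}[\tdl_{x_t}((\tdg,\tdh),(y',y))] = p\cdot\tdl_{x_t}((\tdg,\tdh),(1,y)) + (1-p)\cdot\tdl_{x_t}((\tdg,\tdh),(-1,y))$. Using Definition~\ref{def:gh_player} to write $\EE_{(\tdg,\tdh)\sim\tdq_t}[\cdot]=\frac{1}{M}\sum_{i=1}^M[\cdot]$, Equation~\eqref{eq:mg_minmax} reduces to
$$
\min_{p\in[0,1]}\ \max_{y\in\{-1,1\}}\ \frac{1}{M}\sum_{i=1}^M \Bigl[ p\cdot\tdl_{x_t}\bigl((\tdg_t^{(i)},\tdh_t^{(i)}),(1,y)\bigr) + (1-p)\cdot\tdl_{x_t}\bigl((\tdg_t^{(i)},\tdh_t^{(i)}),(-1,y)\bigr) \Bigr].
$$

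Third, I would recast the inner $\max$ as an epigraph variable: introduce $\lambda\in\RR$ with one constraint per $y\in\{-1,1\}$ requiring that the corresponding expression is at most $\lambda$, and minimize $\lambda$. Dropping the global $1/M$ (which rescales $\lambda^\ast$ but leaves the minimizer $p^\ast$ unchanged) produces \emph{exactly} the LP of Line~\ref{line:mg_lp} of Algorithm~\ref{alg:mg_amf}, provided one identifies the two achievable labels $\{-1,1\}$ with $\{h_{-1}'(x_t), h_1'(x_t)\}$. This identification is legitimate in the generic case where $\cH$ can realize both labels on $x_t$, since then $h_1'(x_t)=1$ and $h_{-1}'(x_t)=-1$ by the choice of $h_1',h_{-1}'$ in Line~7.

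Finally, I would close the argument by handling the degenerate case where $\cH$ realizes only a single label on $x_t$ (e.g.\ all $h\in\cH$ satisfy $h(x_t)=1$). Then $h_1'(x_t)=h_{-1}'(x_t)$, the LP's induced action distribution degenerates to a Dirac mass on that common value regardless of the LP variable $p$, and the minimization over $p\in\Delta(\cY)$ in Equation~\eqref{eq:mg_minmax}---interpreted, as is implicit in the algorithm where $\hat{y}_t = h_t(x_t)$, as a minimization over distributions realizable by $\cH$ on $x_t$---collapses to the same Dirac mass. The lemma is essentially a dictionary between the two formulations, so the main ``obstacle'' is purely notational bookkeeping: once the expectations in Equation~\eqref{eq:mg_minmax} are explicitly expanded and the Bernoulli inner maximization is reduced to its two extreme points, the equivalence with the LP is visible term-by-term.
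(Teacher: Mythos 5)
Your proposal is correct and follows essentially the same route as the paper's proof, which likewise identifies the LP of Line~\ref{line:mg_lp} with the zero-sum matrix game $\min_{\bp}\max_{\bz}\bp^\top\tdL\bz$ over the payoff matrix $\tdL_{(y',y)} = \frac{1}{M}\sum_{i=1}^M\tdl_{x_t}((\tdg_t^{(i)},\tdh_t^{(i)}),(y',y))$ --- you simply derive that equivalence explicitly (affine-in-$\gamma$ maximum attained at the extreme points $\gamma\in\{0,1\}$, then the epigraph reformulation) rather than citing LP--game duality, and you handle the harmless $1/M$ rescaling the same way. Your explicit treatment of the degenerate case where $\cH$ realizes only one label at $x_t$ is a welcome addition that the paper addresses only in a parenthetical remark in Section~\ref{sec:smooth_algo} rather than inside the proof.
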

\begin{proof}
Consider any round $t \in [T]$.
Observe that Line~\ref{line:mg_lp} of Algorithm \ref{alg:mg_amf} is equivalent to solving the linear program for $\lambda \in \RR$ and $\bp := (p, 1 - p) \in \RR^2:$
\begin{align*}
    \min~&\lambda \\
    \mathrm{s.t.}~&\sum \bp_i = 1\\
    &\bp^\top \tdL e_i \leq \lambda \quad \forall i \in [2]\\
    &\bp_i \geq 0 \quad \forall i \in [2]
\end{align*}
where the payoff matrix $\tdL \in [-1, 1]^{2 \times 2}$ has the coordinates $(y', y)$: 
$$\tdL_{(y', y)} := \frac{1}{M} \sum_{i = 1}^M \tdl_{x_t}((\tdg_t^{(i)}, \tdh_t^{(i)}), (y', y)),$$
and $e_i$ is the $i$th coordinate vector of $\RR^2$.
Let $\Delta(\cY)$ denote the space of probability distributions over $\cY$.
Let $\bp = (p, 1-p)$ and $\bz = (\gamma, 1 - \gamma)$, and, as shorthand, denote $\bp(-1) = 1 - p$, $\bp(1) = p$, $\bz(-1) = 1 - \gamma$, and $\bz(1) = \gamma$.

By the equivalence of linear programs (LPs) to zero-sum min-max games (see, e.g., \cite{bertsimas1997linear, hazan_introduction_2022}), obtaining the optimal $\bp$ for this LP is the equivalent to solving:
\begin{align*}
    \min_{\bp \in \Delta(\cY)} \max_{\bz \in \Delta(\cY)} \bp^\top \tdL \bz 
    &= \min_{\bp \in \Delta(\cY)} \max_{\bz \in \Delta(\cY)} \frac{1}{M} \sum_{y \in \cY} \sum_{y' \in \cY} \sum_{i = 1}^M \bp(y') \bz(y) \tdl_{x_t}((\tdg_t^{(i)}, \tdh_t^{(i)}), (y', y))\\
    &= \min_{\bp \in \Delta(\cY)} \max_{\bz \in \Delta(\cY)} \sum_{y \in \cY} \sum_{y' \in \cY} \bp(y') \bz(y) \EE_{(\tdg, \tdh) \sim \tdq_t} \left[ \tdl_{x_t}((\tdg, \tdh), (y', y)) \right] \\
    &= \min_{\bp \in \Delta(\cY)} \max_{\bz \in \Delta(\cY)} \EE_{y' \sim \bp} \EE_{y \sim \mathrm{Ber}(\gamma)} \left[ \EE_{(\tdg, \tdh) \sim \tdq_t} \left[ \tdl_{x_t}((\tdg, \tdh), (y', y) \right] \right]\\
    &= \min_{p \in \Delta(\cY)} \max_{\gamma \in [0, 1]} \EE_{y' \sim \Ber(p)} \EE_{y \sim \mathrm{Ber}(\gamma)}\left[ \EE_{(\tdg, \tdh) \sim \tdq_t}\left[ \tdl_{x_t}((\tdg, \tdh), (y', y) \right] \right]\\
    &= \min_{p \in \Delta(\cY)} \max_{\gamma \in [0, 1]} \EE_{(\tdg, \tdh) \sim \tdq_t} \left[ r_t^{(\tdg, \tdh)}(p, \gamma) \right]
\end{align*}
Above, the first equality just comes from definition of $\tdL$, the second equality is from the Definition \ref{def:gh_player} of the empirical distribution $\tdq_t$, the third and fourth equalities are from the definition of $\bp$ and $\bz$ in the previous paragraph. The final equality is just from interchanging the order of expectation and the definition of $r_t^{(\tdg, \tdh)}(p, \gamma)$ in Equation \eqref{eq:reduction}. By this chain of inequalities, we see that obtaining the optimal $\bp$ for the original LP corresponds exactly to the choice of $p_t$ in Equation \eqref{eq:mg_minmax}.
\end{proof}

Lemma~\ref{lemma:h_player} tells us that the strategy of the $\cH$-player (i.e., Lines~\ref{line:mg_lp} and~\ref{line:mg_ht} in Algorithm~\ref{alg:mg_amf}) to obtain $h_t$ is exactly the same as obtaining the minimizing $p_t$ in Equation~\eqref{eq:mg_minmax}. From the exposition above, this corresponds to the min-max optimization problem in Equation~\eqref{eq:amf_minmax} when $q_t$ is $\tdq_t,$ the distribution over $\cG \times \cH.$ We now proceed to prove a more general ``meta-theorem'' from which the regret guarantee of Algorithm~\ref{alg:mg_amf} will follow once we plug in a specific FTPL algorithm for the $(\cG, \cH)$-player.

\begin{algorithm}
\caption{Meta-algorithm for Online Multi-group Learning ($\cY = \{-1, 1\}$)}\label{alg:meta_mg_amf}
\begin{algorithmic}[1]
\REQUIRE $M \in \NN$, the number of samples to take from $q_t$ at each step.
\FOR{$t = 1, 2, 3, \dots, T$}
\STATE Receive a context $x_t$ from Nature.
\STATE For any $y', y \in \cY$, construct the loss:
$$
\tdl_{x_t}((\tdg, \tdh), (y', y)) := \tdg(x_t) \left( \ell(y', y) - \ell(\tdh(x_t), y) \right).
$$
\STATE \textbf{$(\cG, \cH)$-player:} With access to the entire history of the past $t - 1$ rounds and $x_t$, with access to $\aOPTGH$, run a no-regret algorithm over the benchmark class $\cG \times \cH$ to output a distribution $q_t \in \Delta(\cG \times \cH).$
\STATE Construct an empirical approximation $\tdq_t$ of $q_t$ by sampling from $q_t$ $M$ times, obtaining a collection of pairs $\{(\tdh^{(i)}_t, \tdg^{(i)}_t)\}_{i = 1}^M$ from $\cG \times \cH.$
\STATE \textbf{$\cH$-player:} Call $\OPTH$ twice on the singleton datasets $\{(x_t, -1)\}$ and $\{(x_t, 1)\}$ with the 0-1 loss, obtaining:
    $$
    h_1' \in \argmin_{h^* \in \cH} \indic{h^*(x_t) \not = 1} , \quad h_{-1}' \in \argmin_{h^* \in \cH} \indic{h^*(x_t) \not = -1}.
    $$
    \STATE \textbf{$\cH$-player:} Solve the
    linear program%
    \begin{align*}
        \min_{p, \lambda \in \RR} \quad
        & \lambda \\
        \text{subj.\ to} \quad
        & \sum_{i = 1}^M p \tdl_{x_t}((\tdg_t^{(i)}, \tdh_t^{(i)}), (h_1'(x_t), y)) + (1 - p) \tdl_{x_t}((\tdg_t^{(i)}, \tdh_t^{(i)}), (h_{-1}'(x_t), y)) \leq \lambda \\
        & \quad \forall y \in \{-1, 1\}\\
        &0 \leq p \leq 1.
    \end{align*}
    \STATE
    Sample $b \sim \Ber(p)$, where $b \in \{-1, 1\}$, let $h_t = h_b'$.
    \STATE Learner commits to the action $\hat{y}_t = h_t(x_t)$; Nature reveals $y_t$.
\STATE Learner incurs the loss $\ell(\hat{y}_t, y_t)$. 
\STATE The $(\cG, \cH)$-player draws $(\tdg, \tdh) \sim \tdq_t$ and incurs the loss $\tdl_{x_t}((\tdg, \tdh), (\hat{y}_t, y_t)).$
\ENDFOR
\end{algorithmic}
\end{algorithm}

\subsection{Meta-algorithm for Online Multi-group Learning}\label{app:meta_algo}
We now present a meta-algorithm, Algorithm \ref{alg:meta_mg_amf}, and its corresponding Theorem \ref{thm:meta}, the ``meta-theorem'' for online multi-group learning from which Theorem \ref{thrm:smooth_bin} follows. This is more general, however, than the guarantee in Theorem \ref{thrm:smooth_bin}, and we emphasize that, through changing the no-regret algorithm for the $(\cG, \cH)$-player, we can obtain regret guarantees for settings other than the smoothed online learning setting of Theorem \ref{thrm:smooth_bin}. Section \ref{sec:other} gives a couple of examples, which we elaborate in Appendix \ref{app:meta_other}.

Note that approximating the distribution of the $(\cG, \cH)$-player is crucial to obtain computational efficiency, as we cannot hope to enumerate $\cG$ and $\cH$ by explicitly representing $q_t$ in Algorithm \ref{alg:meta_mg_amf}. Thus, sampling $M$ times is a crucial ``sparsification'' step that allows us to implicitly access the distribution over $\cG \times \cH$ that the $(\cG, \cH)$-player maintains.

We prove Theorem \ref{thm:meta} through techniques similar to the regret guarantee proof of Algorithm \ref{alg:amf} in \cite{lee_online_2022}. Namely, we observe that by using the reduction outlined in Section \ref{app:reduction} with the $\cG \times \cH$-dimensional loss
\begin{align*}
    r_t^{(\tdg, \tdh)}(p, \gamma) &= \EE_{h(x_t) \sim p} \EE_{y \sim \Ber(\gamma)} \left[ \tdg(x_t) \left( \ell(h(x_t), y) - \ell(\tdh(x_t), y) \right) \right]\\
    &= \EE_{h(x_t) \sim p} \EE_{y \sim \Ber(\gamma)} \left[ \tdl\left((\tdg, \tdh), (h(x_t), y)\right) \right]
\end{align*}
we may obtain a bound on the multi-group regret of our Algorithm~\ref{alg:meta_mg_amf} by obtaining an adversary-moves-first regret guarantee (Definition~\ref{def:amf_regret}). For simplicity, we prove this for the case of binary actions $\cY = \{-1, 1\}$; we outline how to obtain a similar theorem for multi-class action spaces in Appendix~\ref{app:meta_other}.

\begin{theorem}[Meta-Theorem for Online Multi-group Learning]\label{thm:meta}
Let $\cX$ be a context space, let $\cY := \{-1, 1\}$ be a binary action space, and let $\cH$ be a hypothesis class of functions $h: \cX \rightarrow \cY$, and let $\cG$ be a collection of groups, $g: \cX \rightarrow \{0, 1\}$. Let $\ell: \cY \times \cY \rightarrow [0, 1]$ be a bounded loss function. Suppose the $(\cG, \cH)$-player of Algorithm \ref{alg:meta_mg_amf} is instantiated with a no-regret (maximization) algorithm operating over $\cH \times \cG$ that has the guarantee that for any sequence of losses of length $T$ bounded in $[-1, 1]$, by playing (a possibly implicit distribution) $q_t$, it has regret at most $R(T)$ in expectation. Then, with expectation over any randomness of the $(\cG, \cH)$-player and Nature, Algorithm \ref{alg:mg_amf} obtains the multi-group regret guarantee of:
\begin{equation}\label{eq:meta_regret}
\EE[\Reg_T(\cH, g)] \leq \sum_{t = 1}^T v_t^A + \sum_{t = 1}^T \EE[\epsilon_t(M)]  + R(T),
\end{equation}
for all $g \in \cG$, where 
\begin{equation}\label{eq:meta_amf}
v_t^A := \max_{\gamma \in [0, 1]} \min_{p_t \in \Delta(\cH)} \max_{(\tdg, \tdh) \in \cG \times \cH} \EE_{h \sim p_t, y \sim \mathrm{Ber}(\gamma)} \left[ \tdg(x_t) (\ell(h(x_t), y) - \ell(\tdh(x_t), y))\right]
\end{equation}
and $\epsilon_t(M)$ is the error incurred from estimating $q_t$ with $\tdq_t$ with $M$ samples at step $t \in [T].$
\end{theorem}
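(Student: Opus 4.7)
The plan is to chain together three ingredients via the reduction of Section~\ref{app:reduction}: the no-regret guarantee $R(T)$ of the $(\cG, \cH)$-player, the per-round sparsification error $\epsilon_t(M)$ that the theorem statement abstracts away, and the minimax-optimality of the $\cH$-player established in Lemma~\ref{lemma:h_player}. The argument is structurally close to the original AMF analysis of Appendix~\ref{app:amf_framework} but has to deal with the fact that $q_t$ is accessed only through a sparse empirical approximation $\tdq_t$ obtained by querying $\aOPTGH$ exactly $M$ times.

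First I would fix $g \in \cG$ and let $h^* \in \argmin_{h \in \cH} \sum_t g(x_t)\ell(h(x_t), y_t)$ be the in-hindsight best hypothesis for $g$. Rewriting the regret as $\sum_t \tdl_{x_t}((g, h^*), (\hat{y}_t, y_t))$ and upper bounding it by $\sup_{(g', h') \in \cG \times \cH} \sum_t \tdl_{x_t}((g', h'), (\hat{y}_t, y_t))$, I would apply the no-regret (maximization) guarantee of the $(\cG, \cH)$-player against the realized loss sequence to obtain
$$
\EE[\Reg_T(\cH, g)] \leq \sum_{t=1}^T \EE\left[\EE_{(\tdg, \tdh) \sim q_t}[\tdl_{x_t}((\tdg, \tdh), (\hat{y}_t, y_t))]\right] + R(T).
$$
Then I would rewrite $\EE_{q_t}[\tdl_t] = \EE_{\tdq_t}[\tdl_t] + (\EE_{q_t}[\tdl_t] - \EE_{\tdq_t}[\tdl_t])$, where the bracketed difference is, by definition, the sampling error $\epsilon_t(M)$ that the theorem statement accounts for.

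Next I would take conditional expectation over the learner's Bernoulli draw $\hat{y}_t \sim p_t$ and over Nature's $y_t$ (which we may encode as a draw from $\Ber(\gamma_t)$ for some $\gamma_t \in [0, 1]$); the inner term becomes $\EE_{(\tdg, \tdh) \sim \tdq_t}[r_t^{(\tdg, \tdh)}(p_t, \gamma_t)]$ in the notation of Equation~\eqref{eq:reduction}. By Lemma~\ref{lemma:h_player}, $p_t$ is a minimizer of $\min_p \max_\gamma \EE_{(\tdg, \tdh) \sim \tdq_t}[r_t^{(\tdg, \tdh)}(p, \gamma)]$, so
$$
\EE_{(\tdg, \tdh) \sim \tdq_t}\left[r_t^{(\tdg, \tdh)}(p_t, \gamma_t)\right] \leq \min_{p \in \Delta(\cY)} \max_{\gamma \in [0, 1]} \EE_{(\tdg, \tdh) \sim \tdq_t}\left[r_t^{(\tdg, \tdh)}(p, \gamma)\right].
$$
Since $r_t^{(\tdg, \tdh)}(p, \gamma)$ is bilinear in $(p, \gamma)$ (an expectation over two independent Bernoullis) and the expectation over $\tdq_t$ preserves this bilinearity, Sion's minimax theorem swaps the order of $\min$ and $\max$; bounding $\EE_{\tdq_t}[\cdot]$ by the pointwise maximum over $(\tdg, \tdh) \in \cG \times \cH$ then collapses the remaining quantity to exactly $v_t^A$ as defined in Equation~\eqref{eq:meta_amf}. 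Summing over $t$ yields the claimed bound~\eqref{eq:meta_regret}.

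The main subtlety is coordinating the three sources of slack---the no-regret term of the $(\cG, \cH)$-player, the sparsification error $\epsilon_t(M)$, and the minimax-swap step---while verifying that Sion's minimax theorem applies (which in the end reduces to bilinearity of $r_t$ in $(p, \gamma)$). A secondary but essential step is cleanly passing the conditional expectations through Nature's adaptive randomization in $y_t$, since Nature may depend on the entire history, including the learner's and the $(\cG, \cH)$-player's earlier random choices.
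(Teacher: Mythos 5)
Your proposal is correct and follows essentially the same route as the paper's proof: invoke the $(\cG,\cH)$-player's no-regret guarantee against the realized sequence, absorb the $q_t\to\tdq_t$ discrepancy into $\epsilon_t(M)$, use Lemma~\ref{lemma:h_player} to identify the $\cH$-player's LP with the min-max problem over $\tdq_t$, swap $\min$ and $\max$ by the minimax theorem (the paper cites von Neumann where you cite Sion, but the objective is bilinear so both apply), and bound the expectation over $\tdq_t$ by the pointwise maximum to recover $v_t^A$. The only cosmetic difference is that you fix $g$ and the comparator $h^*$ up front and pass to the supremum, whereas the paper first bounds the supremum over all $(g^*,h^*)$ and then specializes; these are the same argument.
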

\begin{proof}
From the perspective of the $(\cG, \cH)$-player, who is running a regret maximization algorithm, the following game is being played. For $t = 1, 2, 3, \dots, T$:
\begin{itemize}
\item Receive a context $x_t \in \cX$ from Nature, possibly adversarially and depending on the past $t - 1$ rounds.
\item Play a pair $(\tdh_t, \tdg_t) \in \cH \times \cG$, possibly randomly and dependent on the last $t - 1$ rounds, where pairs are functions
$$
(h,g) : \cX \rightarrow \cY \times \{0, 1\}.
$$
\item Commit to the action $\tilde{y}_t := (\tdh_t(x_t), \tdg_t(x_t)) \in \cY \times \{0, 1\}$.
\item An adversary (the $\cH$-player's prediction \textit{and} Nature) reveals $(\hat{y}_t, y_t) \in \cY \times \cY$, and we incur the loss:
$$
\tdl((\tdg, \tdh), (\hat{y}_t, y_t)) := \tdg(x_t) \left( \ell(\hat{y}_t, y_t) - \ell(\tdh_t(x_t), y_t) \right).
$$
Here, $\hat{y}_t = h_t(x_t)$, which is a random variable depending on sampling from $p_t$, the Bernoulli distribution of the $\cH$-player at round $t.$
\end{itemize}
Note that the $(\cG, \cH)$-player is attempting to \textit{maximize} this loss.

Let $h_1, \dots, h_T$ be the sequence of hypotheses chosen by the $\cH$-player, and let $y_1, \dots, y_T$ be the sequence of adversarially chosen outcomes. Then, by the regret guarantee of the $(\cG, \cH)$-player's no-regret algorithm algorithm, for any $(h^*, g^*) \in \cH \times \cG$:
$$
\sum_{t = 1}^T \EE\left[\tdl((g^*, h^*), (h_t(x_t), y_t))\right] - \sum_{t = 1}^T \EE_{(\tdg_t, \tdh_t) \sim q_t}\left[\tdl((\tdg_t, \tdh_t), (h_t(x_t), y_t))\right] \leq R(T),
$$
where the expectation is over the distributions $q_t$ over $\Delta(\cH \times \cG)$ that the $(\cG, \cH)$-player commits to at each round \textit{and} the random choices of Nature and the $\cH$-player's random choice of $\hat{y}_t$ at each round. To ease notation, we keep the subscript in the expectation over $h_t$ hidden, noting that $h_t(x_t) \sim p_t$ is a random variable throughout. For instance, in the case of Theorem \ref{thrm:smooth_bin}, this is a random process determined by the $(\cG, \cH)$-player sampling $n$ perturbation terms and calling the $(\cG, \cH)$-oracle to obtain the random variable $(\tdg_t, \tdh_t)$. Instantiating the regret bound for all $(h^*, g^*)$ gives us:
$$
\max_{(h^*, g^*) \in \cH \times \cG} \sum_{t = 1}^T \EE\left[\tdl((g^*, h^*), (h_t(x_t), y_t))\right] \leq \sum_{t = 1}^T \EE_{(\tdg_t, \tdh_t) \sim q_t}\left[\tdl((\tdg_t, \tdh_t), (h_t(x_t), y_t))\right] + R(T).
$$
However, we do not have direct access to $q_t$, the implicit distribution over $\cG \times \cH$, but we have an approximation $\tdq_t.$ In Algorithm \ref{alg:amf}, $(\tdg_t, \tdh_t)$ is drawn according to $\tdq_t$, the empirical distribution over $\Delta(\cH \times \cG)$ formed by drawing $M$ samples $\{(\tdh_t^{(i)}, \tdg_t^{(i)}\}_{i = 1}^M$ from $\tdq_t$. Fixing $x_t, y_t,$ and $h_t(x_t)$, let $\epsilon_t(M)$ be the error we incur from replacing the true distribution $q_t$ by the empirical distribution $\tdq_t$, which we can handle through uniform convergence on the samples $\{(\tdh_t^{(i)}, \tdg_t^{(i)}\}_{i = 1}^M$ (see Lemma \ref{lemma:uniformconvergence}):
$$
\left| \EE_{(\tdg, \tdh) \sim \tdq_t}[ \tdl((\tdg, \tdh), (h_t(x_t), y_t))] - \EE_{(g, h) \sim q_t}[ \tdl((g, h), (h_t(x_t), y_t))] \right| \leq \epsilon_t(M).
$$
Adding the estimation error at each $t \in [T]$, our regret bound becomes:
\begin{align*}
    \max_{(h^*, g^*) \in \cH \times \cG} &\sum_{t = 1}^T \EE\left[\tdl((g^*, h^*), (h_t(x_t), y_t))\right] \leq \sum_{t = 1}^T \EE_{(\tdg_t, \tdh_t) \sim q_t}\left[\tdl((\tdg_t, \tdh_t), (h_t(x_t), y_t))\right] + R(T) \\
    &\leq \sum_{t = 1}^T \EE_{(\tdg_t, \tdh_t) \sim \tdq_t}\left[\tdl((\tdg_t, \tdh_t), (h_t(x_t), y_t))\right] + \sum_{t = 1}^T \epsilon_t(M) + R(T).
\end{align*}
Now, we aim to bound the terms $\EE_{(\tdg_t, \tdh_t) \sim \tdq_t}\left[\tdl((\tdg_t, \tdh_t), (h_t(x_t), y_t))\right]$. At round $t \in [T]$, Algorithm \ref{alg:meta_mg_amf} chooses the best response $h_t \in \cH$ by solving a linear program for a Bernoulli parameter $p$ and then sampling $h$ from $\Ber(p)$. This is equivalent to sampling $h_t(x_t) \sim p_t$, where $p_t := (p, 1-p)$ on $\Delta(\cY).$ We use sampling from this Bernoulli distribution and sampling from $p_t$ interchangeably. By Lemma \ref{lemma:h_player}, this is equivalent to solving the min-max optimization problem in Equation \eqref{eq:mg_minmax}
$$
p_t \in \argmin_{p \in \Delta(\cY)} \max_{\gamma \in [0, 1]} \EE_{(\tdg_t, \tdh_t) \sim \tdq_t} \left[ r_t^{(\tdg_t, \tdh_t)}(p, \gamma) \right],
$$
which, by definition of $r_t^{(\tdg, \tdh)}(p, \gamma)$, is equivalent to:
\begin{equation}\label{eq:pre_vta}
p_t \in \argmin_{p_t \in \Delta(\cY)} \max_{\gamma \in [0, 1]} \EE_{(\tdg_t, \tdh_t) \sim \tdq_t} \left[ \EE_{h_t(x_t) \sim p_t} \EE_{y \sim \Ber(\gamma)}\left[\tdg_t(x_t) \left( \ell(h_t(x_t), y) - \ell(\tdh_t(x_t), y)) \right)\right] \right].
\end{equation} 
The inner expectations are linear in $p_t$ and $\gamma$, and taking the outer expectation over $\tdq_t$ maintains linearity. Therefore, this is a convex and concave min-max optimization problem, and von Neumann's minimax theorem~\cite{von_neumann_theory_1944} applies, allowing us to swap the order of minimization and maximization. Therefore
\begin{align*}
    &\min_{p_t \in \Delta(\cY)} \max_{\gamma \in [0, 1]} \EE_{(\tdg_t, \tdh_t) \sim \tdq_t} \left[ \EE_{h_t(x_t) \sim p_t} \EE_{y \sim \Ber(\gamma)}\left[\tdg_t(x_t) \left( \ell(h_t(x_t), y) - \ell(\tdh_t(x_t), y)) \right)\right] \right]\\
    &= \max_{\gamma \in [0, 1]} \min_{p_t \in \Delta(\cY)} \EE_{(\tdg_t, \tdh_t) \sim \tdq_t} \left[ \EE_{h_t(x_t) \sim p_t} \EE_{y \sim \Ber(\gamma)}\left[\tdg_t(x_t) \left( \ell(h_t(x_t), y) - \ell(\tdh_t(x_t), y)) \right)\right] \right]\\
    &\leq \max_{\gamma \in [0, 1]} \min_{p_t \in \Delta(\cY)} \max_{(\tdg, \tdh) \in \cG \times \cH} \EE_{h_t(x_t) \sim p_t, y \sim \mathrm{Ber}(\gamma)} \left[ \tdg(x_t) \left(\ell(h_t(x_t), y) - \ell(\tdh(x_t), y)\right)\right] = v_t^A\\
\end{align*}
where the first equality is from the minimax theorem and the second inequality is just because averages are less than or equal to maximums. Combining all the inequalities, we obtain
$$
\max_{(h^*, g^*) \in \cH \times \cG} \sum_{t = 1}^T \tdl((g^*, h^*), (h_t(x_t), y_t)) \leq \sum_{t = 1}^T v_t^A + \sum_{t = 1}^T \epsilon_t(M) + R(T),
$$
and our theorem follows from taking an expectation on both sides and substituting back the definition of 
$$
\tdl((g^*, h^*), (h(x_t), y_t)) := g^*(x_t) \left(\ell(h(x_t), y_t) - \ell(h^*(x_t), y_t)\right),
$$
because each $\tdl((g^*, h^*), (h(x_t), y_t))$ is simply the per-round regret.
\end{proof}
With Theorem \ref{thm:meta} in hand, it remains to make sure that the two terms $\sum_{t = 1}^T v_t^A$ and $\sum_{t = 1}^T \EE[\epsilon_t(M)]$ are both $o(T).$ Then, if we have a no-regret algorithm with $o(T)$ regret while only accessing $\cG$ and $\cH$ through the $\aOPTGH$ oracle, we will have a multi-group online learning algorithm. The next two lemmas show that both terms are, indeed, $o(T).$

First, we bound the values $v_t^A$, which are known as the ``AMF values'' in the framework of \cite{lee_online_2022}. 

\begin{lemma}\label{lemma:amf_val}
For any $t \in [T]$, the AMF value of the game at round $t$, is nonpositive, i.e.
$$
v_t^A := \max_{\gamma \in [0, 1]} \min_{p_t \in \Delta(\cH)} \max_{(h, g) \in \cH \times \cG} \EE[g(x_t) (\ell(h_t(x_t), y) - \ell(h(x_t), y))] \leq 0,
$$
where the expectation is taken over $h_t \sim p_t$ and $y \sim \mathrm{Ber}(\gamma).$
\begin{proof}
Fix any parameter $\gamma \in [0, 1]$ for the $\max$ player. Then, for any $(h, g) \in \cH \times \cG$,
\begin{align*}
    \EE\left[g(x_t) (\ell(h_t(x_t), y) - \ell(h(x_t), y))\right] = &\gamma \EE_{h_t \sim p_t}[g(x_t) (\ell(h_t(x_t), 1) - \ell(h(x_t), 1))]\\
    &+ (1-\gamma) \EE_{h_t \sim p_t}[g(x_t) (\ell(h_t(x_t), -1) - \ell(h(x_t), -1))].
\end{align*}
Expanding the expectation over $h_t \sim p_t$, this is equivalent to:
$$
g(x_t) \sum_{h' \in \cH} p_{h'}^t (\gamma \ell(h'(x_t), 1) + (1 - \gamma) \ell(h'(x_t), 0)) - g(x_t) (\gamma \ell(h(x_t), 1) + (1 - \gamma) \ell(h(x_t), 0)),
$$
so it suffices to find $p_t \in \Delta(\cH)$ such that, for all $(h, g) \in \cH \times \cG$,
$$
g(x_t)\sum_{h' \in \cH} p_{h'}^t (\gamma \ell(h'(x_t), 1) + (1 - \gamma) \ell(h'(x_t), 0)) \leq g(x_t)(\gamma \ell(h(x_t), 1) + (1 - \gamma) \ell(h(x_t), 0)).
$$
The $g(x_t)$ value is the same for both sides, so it really suffices to find the $p_t \in \Delta(\cH)$ such that, for all $h \in \cH$,
$$
p_{h'}^t (\gamma \ell(h'(x_t), 1) + (1 - \gamma) \ell(h'(x_t), 0)) \leq \gamma \ell(h(x_t), 1) + (1 - \gamma) \ell(h(x_t), 0).
$$
Because we know $\gamma$, the Bernoulli parameter for the true distribution of $y \mid x_t$, we can choose $p^t \in \Delta(\cH)$ to put all its mass on the $h^* \in \cH$ that minimizes this risk, i.e.
$$
h^* \in \argmin_{h \in \cH} \EE_{y}[\ell(h(x_t), y) \mid x_t].
$$
This is, by definition, exactly the $h^*$ such that, for any $h \in \cH$,
$$
\gamma \ell(h^*(x_t), 1) + (1 - \gamma) \ell(h^*(x_t), -1) \leq \gamma \ell(h(x_t), 1) + (1 - \gamma) \ell(h(x_t), -1).
$$
Therefore, we have that:
\begin{align*}
    v_t^A &= \max_{\gamma \in [0, 1]} \min_{p_t \in \Delta(\cH)} \max_{(h, g) \in \cH \times \cG} \EE_{h' \sim p_t, y \sim \mathrm{Ber}(\gamma)}[g(x_t) (\ell(h'(x_t), y) - \ell(h(x_t), y))] \\
    &\leq \max_{\gamma \in [0, 1], (h, g) \in \cH \times \cG} \EE_{y \sim \mathrm{Ber}(\gamma)}[g(x_t)(\ell(h^*(x_t), y) - \ell(h(x_t), y))]\\
    &\leq 0.
\end{align*}
The final inequality follows from our argument above.
\end{proof}
\end{lemma}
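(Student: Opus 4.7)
The plan is to exploit the quantifier order: since $p_t$ is chosen \emph{after} $\gamma$, the inner minimizer can play a Bayes-optimal response to the revealed Bernoulli parameter for $y \mid x_t$. Fix an arbitrary $\gamma \in [0,1]$ for the outer maximization. It then suffices to exhibit a single $p_t \in \Delta(\cH)$ (which may depend on $\gamma$) such that, for every comparator $(h, g) \in \cH \times \cG$,
$$\EE_{h_t \sim p_t,\, y \sim \Ber(\gamma)} \bigl[ g(x_t)\bigl(\ell(h_t(x_t), y) - \ell(h(x_t), y)\bigr) \bigr] \leq 0.$$

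First I would pull $g(x_t)$ out of the expectation. Since $x_t$ is fixed at this point and $g(x_t) \in \{0, 1\}$, the inequality is trivially satisfied when $g(x_t) = 0$, and when $g(x_t) = 1$ reduces to showing
$$\EE_{h_t \sim p_t,\, y \sim \Ber(\gamma)} [\ell(h_t(x_t), y)] \leq \EE_{y \sim \Ber(\gamma)} [\ell(h(x_t), y)] \quad \text{for every } h \in \cH.$$
Thus the group $g$ plays no essential role, and the claim collapses to a single-context Bayes-optimality statement against every comparator in $\cH$.

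The natural candidate for $p_t$ is a point mass on any
$$h^* \in \argmin_{h \in \cH} \EE_{y \sim \Ber(\gamma)}[\ell(h(x_t), y)].$$
Because $\cY = \{-1, 1\}$, the map $h \mapsto \EE_y[\ell(h(x_t), y)]$ depends only on the value of $h(x_t) \in \{-1, 1\}$, so it takes at most two distinct values on $\cH$ and the minimum is attained; pick any $h^*$ realizing it. By construction, this $h^*$ satisfies the displayed inequality uniformly over $h \in \cH$, so the inner expectation is nonpositive for every $(h, g)$. Taking the supremum over $(h, g)$ and then over $\gamma$ preserves the sign, giving $v_t^A \leq 0$.

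I do not expect a real obstacle here: the lemma is essentially the pointwise observation that a Bayes-optimal prediction beats any fixed hypothesis on a single context, which is made available by the favorable $\max_\gamma \min_{p_t}$ order in the definition of $v_t^A$. The only mild subtlety is verifying that $\argmin_{h \in \cH}$ is nonempty, which follows immediately from the binary action space (the objective takes only finitely many values on $\cH$). If one wanted to extend to larger finite $\cY$, the same argument would apply with an analogous Bayes-optimal action chosen from the finite set of achievable per-round expected losses.
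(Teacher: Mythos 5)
Your proposal is correct and follows essentially the same route as the paper's proof: fix $\gamma$, reduce to a single-context comparison by factoring out $g(x_t)$, and let $p_t$ be a point mass on a Bayes-optimal $h^* \in \argmin_{h \in \cH} \EE_{y \sim \Ber(\gamma)}[\ell(h(x_t),y)]$. Your explicit handling of the $g(x_t)=0$ case and of the attainment of the argmin are minor refinements of the same argument.
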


Next, we bound the expected approximation error we incur from replacing $q_t$ with the empirical distribution $\tdq_t$ obtained from the $M$ samples $\{(\tdg_t^{(i)}, \tdh_t^{(i)}\}_{i = 1}^M,$ which we denoted as $\epsilon_t(M).$ This comes from a standard uniform convergence argument.
\begin{lemma}\label{lemma:uniformconvergence}
Let $t\in[T]$ and $x_t \in \cX$ be fixed, and consider the function $\tdl_{x_t}((g, h), (y', y)) := g(x_t) (\ell(y', y) - \ell(h(x_t), y))$. Let $|\cY| = k < \infty.$ If $M\geq T^{1+\delta}$, where $\delta=\Omega(\frac{\log (\log T + \log k)}{\log T})$, then over the randomness of drawing $M$ samples $\{(\tdg_t^{(i)}, \tdh_t^{(i)}\}_{i = 1}^M$ to construct the empirical distribution $\tdq_t$ described in Definition \ref{def:gh_player}, for all $y', y \in \cY$, let $\epsilon_t(M)$ be defined as the supremum
$$
\epsilon_t(M) := \sup_{(y', y) \in \cY \times \cY}\left| \EE_{(\tdg, \tdh) \sim \tdq_t}[ \tdl_{x_t}((\tdg, \tdh), (y', y))] - \EE_{(g, h) \sim q_t}[ \tdl_{x_t}((g, h), (y', y))] \right| ,
$$
and, over all $T$ rounds,
$$\EE \left [ \sum_{i=1}^{T} \epsilon_t(M) \right ] \leq 2 \sqrt{T}.$$
\end{lemma}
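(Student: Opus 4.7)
\textbf{Proof Plan for Lemma \ref{lemma:uniformconvergence}.}

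The plan is to reduce the bound to a routine uniform convergence argument by first fixing $t$ and $x_t$ and conditioning on the entire history up to round $t$. Under this conditioning, the distribution $q_t$ played by the $(\cG, \cH)$-player is determined, and the $M$ sampled pairs $\{(\tdg_t^{(i)}, \tdh_t^{(i)})\}_{i=1}^M$ used to build $\tdq_t$ are i.i.d.\ draws from $q_t$ (in Algorithm~\ref{alg:mg_amf}, the $i$-th draw uses an independent batch of $n$ perturbation examples in \eqref{eq:bin_perturb} fed to $\aOPTGH$, so conditional on the history the draws are mutually independent). For each fixed pair $(y', y) \in \cY \times \cY$, the integrand $\phi_{y',y}(g, h) := \tdl_{x_t}((g,h), (y', y)) = g(x_t)\bigl(\ell(y', y) - \ell(h(x_t), y)\bigr)$ takes values in $[-1, 1]$ since $g(x_t) \in \{0, 1\}$ and $\ell$ takes values in $[0, 1]$.

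Next I would apply Hoeffding's inequality to the empirical mean $\frac{1}{M}\sum_{i=1}^M \phi_{y', y}(\tdg_t^{(i)}, \tdh_t^{(i)})$ for each fixed $(y', y)$, and then take a union bound over the $k^2 = |\cY|^2$ pairs, yielding, for every $\epsilon > 0$,
\begin{equation*}
\PP[\epsilon_t(M) \geq \epsilon \mid \cF_{t-1}] \;\leq\; 2k^2 \exp(-M\epsilon^2/2),
\end{equation*}
where $\cF_{t-1}$ denotes the $\sigma$-algebra generated by the history through round $t-1$ and by $x_t$.

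Then I would convert this tail bound into an expectation bound using the standard argument $\EE[\epsilon_t(M) \mid \cF_{t-1}] \leq \epsilon_0 + \int_{\epsilon_0}^\infty \PP[\epsilon_t(M) \geq \epsilon \mid \cF_{t-1}]\, d\epsilon$. Taking $\epsilon_0 = \sqrt{2\log(2k^2)/M}$, the Gaussian-tail integral is $O(1/\sqrt{M})$, so
\begin{equation*}
\EE[\epsilon_t(M)] \;=\; \EE\bigl[\EE[\epsilon_t(M) \mid \cF_{t-1}]\bigr] \;\leq\; C\sqrt{\frac{\log k}{M}}
\end{equation*}
for an absolute constant $C$. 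Summing over $t \in [T]$ gives $\sum_{t=1}^T \EE[\epsilon_t(M)] \leq C\,T\sqrt{\log k / M}$.

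Finally I would plug in $M \geq T^{1+\delta}$ with $\delta = \Omega(\log(\log T + \log k)/\log T)$, which gives $T^\delta = \Omega(\log T + \log k) \geq \Omega(\log k)$, and thus $C\,T\sqrt{\log k / M} \leq C\sqrt{T\log k / T^\delta} \leq 2\sqrt{T}$, as desired. I do not expect any serious obstacle here; the only subtle point is verifying the conditional i.i.d.\ structure of the $M$ samples so that Hoeffding applies cleanly, which follows directly from the fact that in Algorithm~\ref{alg:mg_amf} each of the $M$ oracle calls uses a fresh independent perturbation $\binpert$ while the rest of the input to $\aOPTGH$ is determined by $\cF_{t-1}$.
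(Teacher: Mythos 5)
Your proposal is correct and follows essentially the same route as the paper's proof: Hoeffding's inequality plus a union bound over the $k^2$ pairs $(y',y)$, followed by integrating the tail bound to control $\EE[\epsilon_t(M)]$ and summing over $t$, with the condition on $M$ and $\delta$ absorbing the $\log k$ and $\log T$ factors. Your explicit verification of the conditional i.i.d.\ structure of the $M$ oracle draws is a point the paper leaves implicit, but the argument is otherwise the same.
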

\begin{proof}
Fix any round $t \in [T]$. We use a standard uniform convergence argument to ensure that the empirical distribution $\tdq_t$ and the true distribution $q_t$ are close for the function $\tdl_{x_t}((g, h), (y', y))$ for all $y', y \in \cY.$ We assume that $\cY$ is finite, and denote $k := |\cY|.$

Let $\{(\tdg^{(i)}_t, \tdh^{(i)}_t)\}_{i = 1}^M$ denote the $M$ random samples from $\cG \times \cH.$ We note that the expectation over $\tdq_t$ is the same as:
$$
\EE_{(\tdg, \tdh) \sim \tdq_t} \left[ \tdl((\tdg, \tdh), (y', y)) \right] = \frac{1}{M} \sum_{i = 1}^M \tdl\left((\tdg_t^{(i)}, \tdh_t^{(i)}), (y', y)\right).
$$
Consider the empirical process
$$
\sup_{(y', y) \in \cY^2} \left| \frac{1}{M} \sum_{i = 1}^M \underbrace{\tdl((\tdg_t^{(i)}, \tdh_t^{(i)}), (y', y)) -  \EE_{(g, h) \sim q_t} \left[\tdl((g, h), (y', y))\right]}_{Z_i(y', y)} \right|.
$$
For notational simplicity, let us refer to this empirical process as:
$$
\sup_{(y', y) \in \cY^2} \left| \frac{1}{M} \sum_{i = 1}^M Z_i(y', y) \right| = \epsilon_t(M).
$$
Because $\ell(\cdot, \cdot) \in [0, 1]$, we know $\tdl \in [-1, 1].$ Moreover, $|\cY^2| = k^2.$ We can now use Hoeffding's inequality and a union bound to obtain: 
$$
\PP\left [ \sup_{(y', y) \in \cY^2} \left| \frac{1}{M} \sum_{i = 1}^M Z_i(y', y) \right| \geq \varepsilon \right ] \leq k^2 \exp(-2M\varepsilon^2)\leq k^2 \exp(-2 T^{1+\delta}\varepsilon^2).
$$
By the elementary integral inequality $\EE[X]\leq \int_{0}^{\infty} \PP[X\geq t]dt$, we obtain: 
\begin{align*}
    \EE \left [ \sup_{(y', y) \in \cY^2} \left| \frac{1}{M} \sum_{i = 1}^M Z_i(y', y) \right| \right] &\leq \int_{0}^{\infty} \PP\left [ \sup_{(y', y) \in \cY^2} \left| \frac{1}{M} \sum_{i = 1}^M Z_i(y', y) \right| \geq t \right ]dt \\
    &\leq  \int_{0}^{w} \PP\left [ \sup_{(y', y) \in \cY^2} \left| \frac{1}{M} \sum_{i = 1}^M Z_i(y', y) \right| \geq t \right ]dt + \int_{w}^{\infty} k^2 \exp(-2 T^{1+\delta}t^2)dt \\
    &\leq w + \int_{w}^{\infty}k^2\exp(-2 T^{1+\delta}t^2)dt \\
    &\leq w + k^2\exp(-2 T^{1+\delta}w^2),
\end{align*}
where $w > 0$ is an arbitrary parameter. Set $w=\sqrt{\frac{1}{T}}$. If $\delta \geq \frac{\log \left(2 \log(k) + \frac{1}{2} \log (T) \right)}{2 \log T}$, then:
$$
\EE \left [ \sup_{(y', y) \in \cY^2} \left| \frac{1}{M} \sum_{i = 1}^M Z_i(y', y) \right| \right] = \EE[\epsilon_t(M)] \leq 2 \sqrt{\frac{1}{T}}.
$$
Therefore, summing up over all $T$ rounds, we obtain $\sum_{t = 1}^T \EE[\epsilon_t(M)] \leq 2 \sqrt{T}.$
\end{proof}

\subsection{Instantiating the meta-algorithm for Theorem \ref{thrm:smooth_bin}}\label{app:block}
Finally, we instantiate Theorem \ref{thm:meta} with a concrete no-regret algorithm for the $(\cG, \cH)$-player to obtain Theorem \ref{thrm:smooth_bin}. We employ the specific no-regret algorithm of \cite{block_smoothed_2022} for our $(\cG, \cH)$-player, restated here for reference.
\begin{theorem}[Smoothed FTPL of \cite{block_smoothed_2022}]\label{thm:block_smooth}
Let $\cF: \cX \rightarrow [-1, 1]$ be a function class and let $\ell$ be a loss function that is $L$-Lipscchitz in both arguments. Suppose further that we are in the \emph{smoothed online learning setting} (see Section \ref{sec:smooth_setup}) where each $x_i$ are drawn from a distribution that is $\sigma$-smooth with respect ot some base measure $\cB$ on $\cX.$ Let
$$
\pi_{t, n}(f) := \sum_{i = 1}^n \frac{f(z_{t, i}) \gamma_{t, i}}{\sqrt{n}},
$$
where $z_{t, i} \sim \cB$ are independent and the $\gamma_{t, i}$ are independent standard Gaussian variables. Suppose that $\alpha \geq 0$ and consider the algorithm which uses an $\alpha$-approximate oracle for $\cF$ (see Definition \ref{def:oracle}) to choose $f_t$ according to
$$
\sum_{s = 1}^{t - 1} \ell(f_t(x_s), y_s) + \eta \pi_{t, n}(f_t) \leq \inf_{f^* \in \cF} \sum_{s = 1}^{t - 1} \ell(f^*(x_s), y_s) + \pi_{t, n}(f^*) + \alpha,
$$
and let $\hat{y}_t = f_t(x_t).$ If $\cF$ and $y_t$ are binary valued, with the VC dimension of $\cF$ bounded by $d \geq 1$, then for $n = T/\sqrt{\sigma}$ and $\eta = \sqrt{\frac{T \log (TL/\sigma)}{\sigma}}$
$$
\EE[\Reg_T(f_t)] \leq C \left(\sqrt{\frac{d T \log T}{\sigma}} + \alpha T \right),
$$
where $C > 0$ is some absolute constant.
\end{theorem}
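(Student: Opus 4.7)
The plan is to follow the standard FTPL regret decomposition adapted to the smoothed setting, essentially replicating the structure of the argument in \cite{block_smoothed_2022}. Write the instantaneous loss $\ell_s(f) := \ell(f(x_s), y_s)$ and split
\[
\Reg_T(f_t) = \underbrace{\sum_{t=1}^T \bigl(\ell_t(f_t) - \ell_t(f_{t+1})\bigr)}_{\text{stability}} + \underbrace{\sum_{t=1}^T \ell_t(f_{t+1}) - \inf_{f^* \in \cF} \sum_{t=1}^T \ell_t(f^*)}_{\text{be-the-leader}}.
\]
A standard be-the-leader argument (accounting for the fresh perturbation at each round and the $\alpha$-approximate oracle) bounds the second summand by $\eta \cdot \EE \sup_{f \in \cF} |\pi_{T+1,n}(f)| + \alpha T$ up to lower-order telescoping terms. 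So the job reduces to controlling (i) the maximal perturbation size and (ii) the stability.

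For (i), fix the draw $z_{t,1},\dots,z_{t,n} \sim \cB$ and treat $\pi_{t,n}(f)$ as a Gaussian process in $f$ with covariance $\tfrac1n \sum_i f(z_{t,i}) f'(z_{t,i})$. Since $\cF$ has VC dimension $d$ and takes binary values, the $L_2(\hat\mu_n)$-covering number is $(n/\epsilon)^{O(d)}$, so Dudley's entropy integral yields $\EE \sup_{f \in \cF} |\pi_{t,n}(f)| \leq O(\sqrt{d \log n})$. This contributes $\eta \sqrt{d\log n}$ to the regret.

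For (ii), which is the crux and the main obstacle, one must show that the per-round gap $\EE[\ell_t(f_t) - \ell_t(f_{t+1})]$ is small. Two coupled $\alpha$-approximate perturbed-ERM solutions on datasets differing by one example can disagree drastically at adversarially chosen points, so a worst-case bound is linear in $T$. The smoothed-learning insight is that this disagreement is controllable \emph{in expectation over $x_t \sim \mu_t$} because $\mu_t$ has density at most $1/\sigma$ with respect to the same base measure $\cB$ from which the hallucinated perturbation locations $z_{t,i}$ are drawn. Concretely, change of measure by the factor $1/\sigma$ lets one upper-bound $\EE_{x_t \sim \mu_t}[\mathbf{1}\{f_t(x_t) \neq f_{t+1}(x_t)\}]$ by $\tfrac{1}{\sigma} \cdot \EE_{z \sim \cB}[\mathbf{1}\{f_t(z) \neq f_{t+1}(z)\}]$, and the latter is tied to the $n$ hallucinated samples. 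One then argues, via the Gaussian-smoothing / anti-concentration trick of \cite{block_smoothed_2022}, that the empirical ERM perturbed by a fresh Gaussian sheet over $n$ samples from $\cB$ is ``stable on $\cB$'' in the sense that the probability of disagreement at a fresh $\cB$-sample is $O(\sqrt{d}/(\eta\sqrt{n}))$ (the $L$-Lipschitz assumption transfers this from the $\{0,1\}$-disagreement to the loss difference, losing a factor of $L$ in logs).

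Combining the two pieces, the expected regret is at most $O(\eta\sqrt{d\log n}) + O(T\sqrt{d}/(\eta\sqrt{\sigma n})) + \alpha T$. Choosing $n = T/\sqrt{\sigma}$ to make the ``$1/\sqrt{\sigma n}$'' factor reduce to $\sigma^{-3/4}/\sqrt{T}$, then setting $\eta = \sqrt{T \log(TL/\sigma)/\sigma}$ to balance $\eta\sqrt{d\log n}$ against the stability total, yields the claimed bound $C(\sqrt{dT \log T / \sigma} + \alpha T)$. The only genuinely delicate step is the smoothness-coupled stability bound in (ii); the rest is routine FTPL bookkeeping and standard VC/Gaussian-process estimates.
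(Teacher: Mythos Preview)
The paper does not prove this theorem at all: Theorem~\ref{thm:block_smooth} is quoted from \cite{block_smoothed_2022} as a black-box tool (``restated here for reference''), and the only proof-adjacent discussion in Appendix~\ref{app:block} is a short check that the ternary-valued class $\cF = \{x \mapsto \tilde g(x)\tilde h(x)\}$ does not break two specific steps of the cited proof (the inequality $|f(x)-f'(x)| \le (f(x)-f'(x))^2$ and the norm bound $\|f\|_{L_2} \le 1$). So there is no ``paper's own proof'' to compare against.

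Your sketch is a reasonable high-level outline of the FTPL analysis in \cite{block_smoothed_2022}: the be-the-leader/stability decomposition, the Dudley bound on $\EE\sup_f |\pi_{t,n}(f)|$ via VC covering, and the change-of-measure step exploiting $\sigma$-smoothness to transfer disagreement at $x_t \sim \mu_t$ to disagreement at a fresh $\cB$-sample. That said, step (ii) as you wrote it glosses over the part that is actually hard in \cite{block_smoothed_2022}: the adversary here is \emph{adaptive}, so $\mu_t$ depends on the realized perturbations from earlier rounds, and one cannot simply couple $f_t$ and $f_{t+1}$ using the same perturbation draw. The argument in \cite{block_smoothed_2022} requires a decoupling/ghost-sample step and a more delicate stability lemma than ``Gaussian anti-concentration gives $O(\sqrt{d}/(\eta\sqrt{n}))$''; your sketch asserts the conclusion of that lemma without engaging with the adaptivity issue. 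For the purposes of this paper, though, none of that is needed---the theorem is imported wholesale.
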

This is precisely what the $(\cG, \cH)$-player does in Algorithm~\ref{alg:mg_amf}. Let $\cG := \{g \subseteq \cX : g \in \cG\}$ be a collection of groups, represented as Boolean functions $g: \cX \rightarrow \{0, 1\}$. Let $\cH$ be a hypothesis class of binary-valued functions $h: \cX \rightarrow \{-1, 1\}.$ To be clear, the we map Theorem~\ref{thm:block_smooth} to our Algorithm~\ref{alg:mg_amf} in the following way:
\begin{itemize}
    \item Let $\cF$ of Theorem~\ref{thm:block_smooth} be the class of functions in $[-1, 1]^{\cX}$ defined by:
    $$
    \cF := \{ x \mapsto \tdg(x)\tdh(x) : \tdg \in \cG, \tdh \in \cH \}.
    $$
    Note that each $f \in \cF$ maps to $\{-1, 0, 1\}.$
    \item Let the loss function in Theorem~\ref{thm:block_smooth} be the loss of the $(\cG, \cH)$-player on $x \in \cX$:
    $$
    \tdl((\tdg, \tdh), (h(x), y)) := \tdg(x) \left( \ell(h(x), y) - \ell(\tdh(x), y) \right).
    $$
    In terms of $\cF$ above, we can rewrite this as: 
    $$
    \tdl(f(x), (y', y)) := 
    \begin{cases}
        0 & \text{ if }f(x) = 0\\
        \ell(y', y) - \ell(-1, y) & \text{ if }f(x) = -1\\
        \ell(y', y) - \ell(1, y) & \text{ if }f(x) = 1.
    \end{cases}
    $$
    This loss function has the signature $\tdl: \{-1, 0, 1\} \times \{-1, 1\}^2 \rightarrow [-1, 1]$ because $\ell(\cdot, \cdot) \in [0, 1].$ It is also $2$-Lipschitz in both arguments.
    \item It remains to make sure that ternary-valued functions taking values in $\{-1, 0, 1\}$ do not break the proof of Theorem \ref{thm:block_smooth}. In the proof of Theorem \ref{thm:block_smooth} in~\cite{block_smoothed_2022}, the binary-valued function case where $f$ has range $\{-1, 1\}$ is handled by embedding $\{-1, 1\}$ into the real line. There are two main parts of the proof that rely on this assumption that $f$ has range $\{-1, 1\}$ that easily maintain when $f$ has range $\{-1, 0, 1\}.$
    \begin{itemize}
        \item First, in Lemma 34 of~\cite{block_smoothed_2022}, the authors use $L$-Lipschitzness and the simple fact that $\left| f(x) - f'(x) \right| \leq (f(x) - f'(x))^2$ when $f \in \{-1, 1\}.$ This still holds when $f \in \{-1, 0, 1\}$.
        \item Second, \cite{block_smoothed_2022} also use the fact that $\|f\|_{L_2} = 1$ for all $f \in \cF$, which is also true for $f \in \{-1, 0, 1\}.$
    \end{itemize}
    Finally, the rest of the proof in Lemma 35 of ~\cite{block_smoothed_2022} relies only on the Lipschitzneess of $\tdl$ to employ standard smoothness arguments and Rademacher contraction, which we have already established.
    \item Putting all this together, the $(\cG, \cH)$-player in Algorithm \ref{alg:mg_amf} essentially runs the algorithm of \ref{thm:block_smooth}, with the caveat that it calls the $\aOPTGH$ oracle $M$ times to get the empirical approximation $\tdq_t$ of the true implicit distribution $q_t$ over $\cG \times \cH$. This implicit distribution is defined by the random process of drawing the $n$ perturbations and calling the optimization oracle. 
\end{itemize}
Therefore, by Lemmas \ref{lemma:uniformconvergence}, \ref{lemma:amf_val}, and Theorem \ref{thm:block_smooth} applied to the ``meta-theorem'' Theorem \ref{thm:meta}, we immediately obtain our main theorem, Theorem \ref{thrm:smooth_bin}. We now restate Theorem \ref{thrm:smooth_bin} as Corollary \ref{cor:thm41_spec} with the specified choices of parameters.

\begin{corollary}[Theorem \ref{thrm:smooth_bin}, with parameters specified]\label{cor:thm41_spec}
Let $\cY = \{-1, 1\}$ be a binary action space, $\cH \subseteq \{-1, 1\}^{\cX}$ be a binary-valued hypothesis class, $\cG \subseteq 2^{\cX}$ be a (possibly infinite) collection of groups, and $\ell: \{-1, 1\} \times \{-1, 1\} \rightarrow [0, 1]$ be a bounded loss function. Let the VC dimensions of $\cH$ and $\cG$ both be bounded by $d.$ Let $\alpha \geq 0$ be the approximation error of the oracle $\aOPTGH$. If we are in the $\sigma$-smooth online learning setting, then, for $M \geq T^{1 + \delta}, \delta \geq \frac{\log \left(2 \log(k) + \frac{1}{2} \log (T) \right)}{2 \log T}, n = T/\sqrt{\sigma},$ and $\eta = \sqrt{\frac{T \log(T/\sigma)}{\sigma}}$, Algorithm \ref{alg:mg_amf} achieves, for each $g \in \cG$:
$$
\EE[\Reg_T(\cH, g)] \leq \sqrt{\frac{dT \log T}{\sigma}} + \alpha T,
$$
where the expectation is over all the randomness of the $(\cG, \cH)$-player's perturbations and the $\cH$-player's Bernoulli choices.
\end{corollary}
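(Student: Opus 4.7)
The plan is to derive Corollary~\ref{cor:thm41_spec} as a direct consequence of the meta-theorem (Theorem~\ref{thm:meta}) by instantiating its $(\cG,\cH)$-player with the smoothed FTPL algorithm of Block et al.\ (Theorem~\ref{thm:block_smooth}) and then separately bounding each of the three terms in the meta-regret bound~\eqref{eq:meta_regret}. Concretely, I would verify that Algorithm~\ref{alg:mg_amf} is precisely an instantiation of the more general Algorithm~\ref{alg:meta_mg_amf}, then chain the appropriate lemmas so that the three summands $\sum_t v_t^A$, $\sum_t \EE[\epsilon_t(M)]$, and $R(T)$ evaluate to $0$, $O(\sqrt{T})$, and $O(\sqrt{dT\log T/\sigma}+\alpha T)$, respectively.

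The first step is to view the $(\cG,\cH)$-player as running smoothed FTPL over the combined class $\cF := \{x \mapsto \tdg(x)\tdh(x) : \tdg \in \cG, \tdh \in \cH\}$, rewriting the single-round loss $\tdl_{x_s}((\tdg,\tdh),(\hat y_s,y_s))$ as a function of $f(x_s) \in \{-1,0,1\}$ and the labels $(\hat y_s,y_s) \in \{-1,1\}^2$. One then checks that (i) this loss is $2$-Lipschitz in both arguments, and (ii) the perturbation $\binpert$ in~\eqref{eq:bin_perturb} matches the FTPL perturbation of~\cite{block_smoothed_2022} applied to $\cF$. With these identifications in place, one verifies that the VC dimension of $\cF$ is $O(d)$ (since $\cF$ is built from two VC-$d$ classes) and that the ternary range $\{-1,0,1\}$ does not break the proof of Theorem~\ref{thm:block_smooth}: the two places the proof uses binary values are a pointwise bound $|f(x)-f'(x)|\leq (f(x)-f'(x))^2$ (still valid on $\{-1,0,1\}$) and $\|f\|_{L_2}\leq 1$ (still valid). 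Feeding in $n = T/\sqrt{\sigma}$ and $\eta = \sqrt{T\log(T/\sigma)/\sigma}$ then gives the $(\cG,\cH)$-player's regret $R(T) = O(\sqrt{dT\log T/\sigma} + \alpha T)$.

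For the remaining two summands: Lemma~\ref{lemma:amf_val} directly yields $v_t^A \leq 0$ for every $t$, by exhibiting a point mass in $\Delta(\cH)$ on a Bayes-optimal hypothesis against the $\max$-player's revealed Bernoulli parameter $\gamma$. And Lemma~\ref{lemma:uniformconvergence}, invoked for $k=2$, provides $\EE[\epsilon_t(M)] \leq 2/\sqrt{T}$ and hence $\sum_{t=1}^T \EE[\epsilon_t(M)] \leq 2\sqrt{T}$ provided $M \geq T^{1+\delta}$ with $\delta \geq \frac{\log(2\log k + \tfrac12 \log T)}{2\log T}$, which is exactly the condition imposed in the corollary. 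Summing the three contributions gives the stated $O(\sqrt{dT\log T/\sigma} + \alpha T)$ bound.

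The main technical obstacle I anticipate is the map from the Block-Dagan-Rakhlin guarantee, which is stated for a single class $\cF$ of $[-1,1]$-valued functions, to our product class of the form $\tdg \cdot \tdh$. The subtleties are (a) ensuring that the ternary values $\{-1,0,1\}$ assumed implicitly do not require modifying the perturbation scale (the two binary-only steps above must be checked carefully), and (b) ensuring that the VC-like complexity controlling the Rademacher-style arguments in~\cite{block_smoothed_2022} still scales as $d$ rather than blowing up multiplicatively when one passes to the product class. Both of these amount to routine verifications once set up correctly, but they are the steps where a careless instantiation would silently degrade the bound.
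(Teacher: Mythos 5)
Your proposal follows essentially the same route as the paper's own proof in Appendix~\ref{app:block}: instantiate the $(\cG,\cH)$-player of Theorem~\ref{thm:meta} with the smoothed FTPL guarantee of Theorem~\ref{thm:block_smooth} applied to the product class $\cF = \{x \mapsto \tdg(x)\tdh(x)\}$, check that the ternary range $\{-1,0,1\}$ and $2$-Lipschitz loss do not break that proof, and then combine $\sum_t v_t^A \leq 0$ (Lemma~\ref{lemma:amf_val}), $\sum_t \EE[\epsilon_t(M)] \leq 2\sqrt{T}$ (Lemma~\ref{lemma:uniformconvergence}), and $R(T) = O(\sqrt{dT\log T/\sigma} + \alpha T)$. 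The decomposition, the key lemmas, and even the two binary-only steps you flag for verification are exactly those the paper identifies, so the proposal is correct and matches the paper's argument.
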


\section{Other instantiations of the meta-algorithm}\label{app:meta_other}
It is be clear from the statement of Theorem \ref{thm:meta} that our ``meta-algorithm'' Algorithm \ref{alg:meta_mg_amf} straightforwardly applies for other online learning settings as well, so long as we adopt an appropriate strategy for the $(\cG, \cH)$-player. In this section, we give a few examples of this flexibility for discrete action spaces in the smoothed online setting and the 

\subsection{Multi-class action spaces}
Instead of $|\cY| = 2$, we can let $|\cY| = K$, more generally. In this case, a straightforward extension of Theorem \ref{thm:block_smooth} allows us to embed $\cY \cup \{0, 1\}$ into the real line, and we generalize to considering the Natarajan dimension~\cite{natarajan_learning_1989} of $\cH$ instead of the VC dimension. Rademacher contraction and Lipschitzness still apply to $\tdl$, so with just a difference in the absolute constant, we obtain the following multi-class analogue of Theorem \ref{thrm:smooth_bin} as a corollary. Thus, for the $(\cG, \cH)$-player in Algorithm \ref{alg:mg_amf}, we can just use the same exact algorithm outlined in Theorem \ref{thm:block_smooth}.

We make a small change to the $\cH$-player in Algorithm \ref{alg:mg_amf}. In the multi-class action space setting, we need to make $K$ calls to the $\OPTH$ oracle and solve a $K \times K$ size linear program for the $\cH$-player at each step. For completeness, we present the algorithm for the $K$-class action spaces here, as Algorithm \ref{alg:mg_amf_multi}.

\begin{algorithm}
\caption{Algorithm for Group Oracle Efficiency (multi-class)}\label{alg:mg_amf_multi}
\begin{algorithmic}[1]
\REQUIRE Perturbation strength $\eta > 0$; number of $\aOPTGH$ calls $M \in \NN.$
\FOR{$t = 1, 2, 3, \dots, T$}
\STATE Receive a context $x_t \sim \mu_t$ from Nature.
    \FOR{$i = 1, 2, 3, \dots, M$}
        \STATE \textbf{$(\cG, \cH)$-player:} Draw $n$ hallucinated examples as in Equation \eqref{eq:bin_perturb} to construct $\binpert.$
        \STATE \textbf{$(\cG, \cH)$-player:} Using the entire history $\{(\hat{y}_s, y_s)\}_{s = 1}^{t -1}$ so far, call $\aOPTGH$ to obtain $(\tdg_t{(i)}, \tdh_t^{(i)}) \in \cG \times \cH$ satisfying:
        \begin{multline}\label{eq:hg_opt_mult}
        \sum_{s = 1}^{t - 1} \tdl_{x_s}((\tdg, \tdh), (\hat{y}_s, y_s)) + \binpert(\tdg, \tdh, \eta) \\ \geq \sup_{(g^*, h^*) \in \cG \times \cH} \sum_{s = 1}^{t-1} \tdl_{x_s}((g^*, h^*), (\hat{y}_s, y_s)) + \binpert(g^*, h^*, \eta) - \alpha
        \end{multline}
    \ENDFOR
    \STATE \textbf{$\cH$-player:} Call $\OPTH$ $K$ times on the singleton datasets $\{(x_t, k)\}$ for action $k \in [K]$ with the 0-1 loss, obtaining:
    $$
    h_k' \in \argmin_{h^* \in \cH} \indic{h^*(x_t) \not = k}
    $$
    \STATE \textbf{$\cH$-player:} Using the $M$ samples $\{(\tdg_t^{(i)}, \tdh_t^{(i)})\}_{i =1 }^M$, construct the $K \times K$ payoff matrix $\tdL \in [-1, 1]^{K \times K}$ indexed by $(k, y) \in [K] \times [K]$:
    $$\tdL_{k, y} := \sum_{i = 1}^M \tdl_{x_t}((\tdg_t^{(i)}, \tdh_t^{(i)}), (h_k'(x_t), y)).$$
    \STATE \textbf{$\cH$-player:} Solve the
    linear program
    \begin{align*}
        \min_{\bp \in \RR^K, \lambda \in \RR}~&\lambda \\
        \mathrm{s.t.}~&\bp^\top \tdL e_y \leq \lambda \quad \forall y \in [K]\\
        &\bp_k \geq 0 \quad\quad \forall k \in [K]\\
        &\sum \bp_i = 1
    \end{align*}
    (where $e_y$ is the $y$-th coordinate vector in $\RR^K$)
    
    \STATE Sample $k \sim \bp$, let $h_t = h_k'$.
    \STATE Learner commits to the action $\hat{y}_t = h_t(x_t)$; Nature reveals $y_t$.
    \STATE Learner incurs the loss $\ell(\hat{y}_t, y_t)$.
\ENDFOR
\end{algorithmic}
\end{algorithm}

\begin{theorem}
Let $\cY = \{1, \dots, K\}$ be a $K$-class action space, $\cH \subseteq \cY^{\cX}$ be a $K$-valued hypothesis class, $\cG \subseteq 2^{\cX}$ be a (possibly infinite) collection of groups, and $\ell: \cY \times \cY \rightarrow [0, 1]$ be a bounded loss function. Let the Natarajan dimension~\cite{natarajan_learning_1989} of $\cH$ and the VC dimension of $\cG$ both be bounded by $d.$ Let $\alpha \geq 0$ be the approximation error of the oracle $\aOPTGH$. If we are in the $\sigma$-smooth online learning setting, then, for appropriate choices of $M \in \NN, n \in \NN,$ and $\eta > 0$, Algorithm \ref{alg:mg_amf_multi} achieves, for each $g \in \cG$:
$$
\EE[\Reg_T(\cH, g)] \leq O\left( \sqrt{\frac{dT \log T}{\sigma}} + \alpha T \right),
$$
where the expectation is over all the randomness of the $(\cG, \cH)$-player's perturbations.
\end{theorem}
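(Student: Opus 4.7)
My plan is to invoke the ``meta-theorem'' (Theorem~\ref{thm:meta}), whose statement is essentially agnostic to $|\cY|$, and then verify that each of its three ingredients---the min-max reduction for the $\cH$-player, the nonpositivity of the AMF values $v_t^A$, and the uniform convergence error $\epsilon_t(M)$---extends from $|\cY|=2$ to a general finite $|\cY|=K$, together with the fact that the $(\cG,\cH)$-player's FTPL-style no-regret algorithm of Theorem~\ref{thm:block_smooth} still applies to the ``product'' function class $\cF := \{x \mapsto \tdg(x)\tdh(x) : (\tdg,\tdh)\in \cG\times\cH\}$ when $\cH$ is $K$-valued. Concretely, I would first state a multi-class version of the meta-theorem whose proof is literally that of Theorem~\ref{thm:meta} with $\Delta(\cY)$ playing the role of $[0,1]$ (Bernoulli) on both the $\cH$-player and Nature sides.

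For the $\cH$-player step, I would generalize Lemma~\ref{lemma:h_player}. The $K\times K$ payoff matrix $\tdL_{k,y} := \sum_i \tdl_{x_t}((\tdg_t^{(i)},\tdh_t^{(i)}),(h_k'(x_t),y))$ exhibits the $\cH$-player's LP in Algorithm~\ref{alg:mg_amf_multi} as an LP formulation of the zero-sum game $\min_{\bp\in\Delta(\cY)}\max_{\bz\in\Delta(\cY)} \bp^\top \tdL \bz$. Unrolling the expectations exactly as in the binary proof shows that this is $\min_{p\in\Delta(\cY)}\max_{\gamma\in\Delta(\cY)} \EE_{(\tdg,\tdh)\sim\tdq_t}[r_t^{(\tdg,\tdh)}(p,\gamma)]$. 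Von Neumann's minimax theorem applies just as before (both sides are linear in $p$ and $\gamma$), giving the AMF-value upper bound. For Lemma~\ref{lemma:amf_val}, fix any $\gamma\in\Delta(\cY)$ for Nature and have the $\cH$-player place all mass on $h^*\in\argmin_{h\in\cH}\EE_{y\sim\gamma}[\ell(h(x_t),y)]$; the inner expression is then nonpositive for every $(g,h)$, so $v_t^A\leq 0$. For Lemma~\ref{lemma:uniformconvergence}, the same Hoeffding-plus-union-bound argument goes through over $|\cY|^2 = K^2$ pairs $(y',y)$, costing only an extra additive $\log K$ factor and giving $\sum_t \EE[\epsilon_t(M)] \leq 2\sqrt{T}$ whenever $M \geq T^{1+\delta}$ with $\delta = \Omega(\log(\log T + \log K)/\log T)$ as in the binary case.

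For the $(\cG,\cH)$-player, I would argue that the FTPL guarantee of Theorem~\ref{thm:block_smooth} applies to the function class $\cF = \{x\mapsto \tdg(x)\tdh(x) : (\tdg,\tdh)\in\cG\times\cH\}$, now taking values in a finite set $\cY\cup(-\cY)\cup\{0\}\subset\RR$ rather than $\{-1,0,1\}$, and to the loss $\tdl(f(x),(y',y))$ which remains $O(1)$-Lipschitz in its first argument after the real-line embedding. The only nontrivial change is the complexity bound that drives the proof of Theorem~\ref{thm:block_smooth}: in the binary case one uses the VC dimension of $\cF$ to control its Rademacher complexity on smoothed samples, whereas here one needs an analogue for $K$-valued function classes, and this is exactly where the Natarajan dimension of $\cH$ (together with the VC dimension of $\cG$, which is unchanged) enters. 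Standard results give $O(\sqrt{dT\log(TK)})$ bounds on the relevant Rademacher complexity, leaving the shape of the FTPL regret bound unchanged except possibly absorbing $\log K$ into the $\tilde O(\cdot)$.

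I expect the main obstacle to be precisely this last point: carefully verifying that the proof in~\cite{block_smoothed_2022} of Theorem~\ref{thm:block_smooth} goes through for the product class $\cF$ when $\cH$ is $K$-valued. Two substeps are delicate: (i) bounding the $L_2$ norm and the stability-to-Lipschitz arguments (Lemma 34 of~\cite{block_smoothed_2022}) now that $f(x)\in\cY\cup(-\cY)\cup\{0\}$ rather than $\{-1,0,1\}$---this is still fine because $|f(x)|\leq K$ and $|f(x)-f'(x)|\leq K|f(x)-f'(x)|$-style bounds only cost constants---and (ii) replacing the VC-based Rademacher bound by a Natarajan-dimension based one (via, e.g., the generalized Sauer--Shelah lemma and a coordinate-wise chaining argument that treats each value class $\{x : f(x)=k\}$ separately). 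Once the FTPL regret in Theorem~\ref{thm:block_smooth} is re-established in this multi-class form, plugging $R(T) = O(\sqrt{dT\log T/\sigma} + \alpha T)$ into the meta-theorem along with $\sum_t v_t^A \leq 0$ and $\sum_t\EE[\epsilon_t(M)]\leq 2\sqrt{T}$ yields the stated bound, with the parameter settings $M$, $n$, $\eta$ chosen exactly as in Corollary~\ref{cor:thm41_spec} (possibly with a $\log K$ correction inside $\delta$).
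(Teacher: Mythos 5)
Your proposal is correct and follows essentially the same route as the paper: invoke the meta-theorem with $\Delta(\cY)$ in place of the Bernoulli parameterization, generalize the $\cH$-player's LP/minimax lemma and the AMF-value and uniform-convergence lemmas to $K$ classes, and re-run the FTPL argument of Theorem~\ref{thm:block_smooth} on the product class via a real-line embedding with the Natarajan dimension replacing the VC dimension. In fact your write-up is more explicit than the paper's own two-paragraph sketch, and you correctly isolate the same delicate point the paper glosses over (re-verifying the Lipschitz/Rademacher steps of \cite{block_smoothed_2022} for the $K$-valued product class).
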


\subsection{Group-dependent regret}
In this section, we give the details on how to achieve the desired \emph{group-dependent regret} guarantees of Section \ref{sec:other}. Throughout this section, for any $g \in \cG$, let $T_g := \sum_{t = 1}^T g(x_t)$. The results in this section hinge on the \emph{GFTPL with small-loss bound} algorithm of ~\cite{wang_adaptive_2022}. The main idea will be to instantiate the $(\cG, \cH)$-player in our meta-algorithm, Algorithm \ref{alg:meta_mg_amf} using the GFTPL with small-loss bound algorithm so Theorem \ref{thm:meta} allows us to directly inherit its regret guarantee. We quote the algorithm here, adapted to our setting, for reference. Throughout this section, we use the familiar notation from Section \ref{sec:smooth_algo} of the main body:
\begin{equation}\label{eq:tdl}
    \tdl_{x}((\tdg, \tdh), (y', y)) := \tdg(x) \left( \ell(y', y) - \ell(\tdh(x), y) \right),
\end{equation}
where $\ell(\cdot, \cdot)$ is the fixed loss of the entire multi-group online learning setting.
\begin{algorithm}
\caption{GFTPL with small-loss bound}\label{alg:gftpl_small}
\begin{algorithmic}[1]
\REQUIRE Perturbation matrix $\Gamma \in [-1, 1]^{|\cG||\cH| \times N}$
\STATE Draw i.i.d. vector $\nu = (\nu^{(1)}, \dots, \nu^{(N)}) \sim \mathrm{Lap}(1)^N$, i.e., $p(\nu^{(i)}) = \frac{1}{2} \exp(-|\nu^{(i)}|)$.
\FOR{$t = 1, 2, 3, \dots, T$}
\STATE Set $\nu_t \leftarrow \frac{\nu}{\eta_t}$ where $\eta_t > 0$ is a parameter computed online.
\STATE Using the entire history up to $t - 1$ so far, call $\aOPTGH$ to obtain $(\tdg, \tdh) \in \cG \times \cH$ sastisying:
\begin{multline}
\sum_{s = 1}^{t - 1} \tdl_{x_s}((\tdg, \tdh), (\hat{y}_s, y_s)) + \langle \Gamma^{(\tdg, \tdh)}, \nu_t \rangle \\
\geq \sup_{(g^*, h^*) \in \cG \times \cH} \sum_{s = 1}^{t - 1} \tdl_{x_s}((g^*, h^*), (\hat{y}_s, y_s)) + \langle \Gamma^{(\tdg, \tdh)}, \nu_t\rangle - \alpha,
\end{multline}
where $\Gamma^{(\tdg, \tdh)}$ is the $(\tdg, \tdh)$th row of $\Gamma.$
\ENDFOR
\end{algorithmic}
\end{algorithm}

In our setting, the classical FTPL algorithm of~\cite{kalai_efficient_2005} draws $|\cG| \times |\cH|$ independent perturbations at each round, which requires enumeration of both $\cG$ and $\cH.$ In order to remedy this, the GFTPL algorithm of~\cite{dudik_oracle-efficient_2019} uses a $|\cG||\cH| \times N$ perturbation matrix $\Gamma$ to generate dependent ``shared randomness.'' The transformation $\Gamma$ applies to a perturbation vector $\nu \in \RR^N$, where $N$ is much smaller than $|\cG||\cH|.$ Running FTPL with these perturbations, then, results on an oracle-efficient algorithm. \cite{wang_adaptive_2022} extends this by showing that, under certain conditions on $\Gamma$, this oracle-efficient algorithm can also achieve a \emph{small-loss regret}, where the regret diminishes based on the total magnitude of the losses over the $T$ rounds instead of the number of rounds.

Specifically, a \emph{small loss regret} looks like the following. It is well-known that, in the worst-case, a regret of $O\left(\sqrt{T \log|\cG||\cH|}\right)$ is minimax optimal~\cite{cesa-bianchilugosi}. However, stronger bounds have been obtained for problems with ``small losses'' (see, e.g., \cite{hutter_adaptive_nodate, gaillard_second-order_2014, luo_achieving_2015}), where, for a loss function $f: (\cG \times \cH) \times \cY \rightarrow [0, 1]$, one can achieve:
$$
O\left(\sqrt{\sum_{t = 1}^T f((h_t, g_t), y_t) \log |\cH||\cG|} \right),
$$
which is sharper than the $O\left(\sqrt{T \log|\cH||\cG|}\right)$ bound when $f((h_t, g_t), y_t) < 1$ on some rounds. We desire this property to achieve a regret bound in the multi-group online learning setting that is sublinear in terms of the number of rounds a group appears $T_g$.

Following~\cite{wang_adaptive_2022}, we require the perturbation matrix $\Gamma$ to have two sufficient conditions for Algorithm \ref{alg:gftpl_small} to obtain the desired small-loss regret. The first is $\gamma$-approximability, which is a condition that ensures the stability choices of $(\tdg_t, \tdh_t)$ and $(\tdg_{t+1}, \tdh_{t+1})$ across rounds. In particular, the stability needed is a bound on the ratio:
$$\frac{\PP[(\tdg_t, \tdh_t) = (g, h)]}{\PP[(\tdg_{t + 1}, \tdh_{t + 1}) = (g, h)]} \leq \exp(\gamma \eta_t)$$ 
for all $(g, h) \in \cG \times \cH,$ where $\eta_t > 0$ is the per-round leraning rate of GFTPL. By Lemma 2 of~\cite{wang_adaptive_2022}, the following condition is sufficient to ensure this property. We restate it here, translated to our setting.
\begin{definition}[$\gamma$-approximability \cite{wang_adaptive_2022}]\label{def:gamma_approx}
Let $\Gamma \in [-1, 1]^{|\cG||\cH| \times N}$, where $N$ is the dimension of the noise vector, $\nu$ in Algorithm \ref{alg:gftpl_small}. Define $B^1_{\gamma}:= \{s\in \mathbb{R}^N : \|s\|_1 \leq \gamma\}$. $\Gamma$ is \emph{$\gamma$-approximable} if, for all $(g, h) \in \cG \times \cH$ and $(x, y', y) \in \cX \times \cY \times \cY$, there exists $s \in \RR^N$ with $\|s\|_1 \leq \gamma$ such that the following holds for all $(g', h') \in \cG \times \cH$:
$$
\langle \Gamma^{(g, h)}-\Gamma^{(g', h')}, s \rangle \geq \tdl_x((g, h), (y', y))-\tdl_x((g', h'), (y', y)).
$$
\end{definition}

The second property is implementability. This property actually allows us to use our optimization oracle (in our case, $\aOPTGH$) to access $\Gamma$ without explicitly representing it. In essence, it requires that we can generate a small number of ``fake examples'' that effectively implement the perturbation needed by Algorithm \ref{alg:gftpl_small}.

\begin{definition}[Implementability \cite{dudik_oracle-efficient_2019}]
A matrix $\Gamma \in [-1, 1]^{|\cG||\cH| \times N}$ is \emph{implementable} with complexity $M$ if for each $j \in [N]$ there exists a dataset $S_j$ with $|S_j| \leq M$ such that, for all pairs of rows $(g, h), (g', h')\in \cG \times \cH$, 
$$
\Gamma^{((g, h), j)}-\Gamma^{((g', h'), j)} = \sum_{(w, (x, y, y'))\in S_j} w \left( \tdl_x((\tdg, \tdh), (y', y))-\tdl_x((\tdg', \tdh'), (y', y)) \right).
$$
\end{definition}

In the above definition, the fake examples are tuples of a context $x \in \cX$ and two outcomes $y, y' \in \cY.$ Finally, with the sufficient conditions of implementability and approximability, we quote the main regret guarantee of Algorithm \ref{alg:gftpl_small}in  \cite{wang_adaptive_2022} here.

\begin{theorem}[Regret guarantee of \ref{alg:gftpl_small} \cite{wang_adaptive_2022}]\label{thm:gftpl_small}
Let $\{1, \dots, K\}$ be the action space of the Learner and let $\cZ$ be the action space of the adversary. Suppose that, at each round, the Learner chooses action $x_t \in [K]$, the adversary chooses action $z_t \in \cZ$, and the loss function is $f: [K] \times \cZ \rightarrow [0, 1].$ Let $L_T^* = \min_{k \in [K]} \sum_{t = 1}^T f(k, y_t)$. Then, if there exists a $\gamma$-approximable matrix $\Gamma$, Algorithm \ref{alg:gftpl_small} instantiated with $\Gamma$ and $\eta_t := \min\left\{ \frac{1}{\gamma}, \frac{C}{\sqrt{L_{t-1}^* + 1}} \right\}$ achieves the following regret bound:
\begin{align*}
    \EE[\Reg_T] &\leq \left( \frac{4\sqrt{2} \max\{2 \log K, \sqrt{N \log K}\}}{C} + 2\gamma\left(C + \frac{1}{C}\right) \right) \sqrt{L_T^* + 1}\\
    &+ 8\gamma \log\left( \frac{1}{C} \sqrt{L_T^* + 1} + \gamma \right) + 2\gamma^2 + 4\sqrt{2} \max\{ 2 \log K, \sqrt{N \log K}\} \gamma.
\end{align*}
With an appropriate choice of $C > 0$, we may obtain the regret bound:
\begin{equation}\label{eq:gftpl_small_reg}
O\left( \sqrt{L_T^*} \max\left\{ \gamma, \log K, \sqrt{N \log K} \right\} \right)
\end{equation}
If $\Gamma$ is also implementable with complexity $M$, then Algorithm \ref{alg:gftpl_small} is oracle-efficient, making $O(T + NM)$ oracle calls per round, where $N$ is the number of columns of $\Gamma.$
\end{theorem}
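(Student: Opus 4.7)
The plan is to follow the standard GFTPL-style ``be-the-leader'' (BTL) argument, refined with an adaptive learning rate schedule to obtain the small-loss bound in place of the usual $\sqrt{T}$ bound. First I would decompose the expected regret of Algorithm~\ref{alg:gftpl_small} into two pieces: a \emph{stability term} capturing the cumulative cost of the selected expert changing from one round to the next, and a \emph{perturbation magnitude term} capturing the one-shot cost of the random regularization $\langle \Gamma^{(\tdg,\tdh)}, \nu_t\rangle$. This is the usual template for proving FTPL-style guarantees; the twist here is that the perturbations on different experts are correlated through $\Gamma$, which is what both enables oracle efficiency and prevents a blind union bound over $|\cG||\cH|$ experts.

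For the stability term, the key step is to invoke $\gamma$-approximability (Definition~\ref{def:gamma_approx}): the existence of a witness vector $s$ with $\|s\|_1 \leq \gamma$ such that differences in $\Gamma^{(g,h)}$ dominate differences in one-step losses implies that shifting the cumulative losses by an $[0,1]$-bounded increment moves the GFTPL argmax distribution by at most a multiplicative factor $\exp(\gamma \eta_t)$ coordinate-wise. Combining this with the observation that the per-round loss $f$ is in $[0,1]$ yields a per-round stability cost of order $\eta_t \gamma \cdot f(k_t, z_t)$, so summing gives a stability contribution of roughly $\gamma \sum_t \eta_t f(k_t, z_t)$, which is automatically small whenever the actual losses incurred are small. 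For the perturbation magnitude, since $\nu \sim \mathrm{Lap}(1)^N$ and $\Gamma$ has entries in $[-1,1]$, standard Laplace tail bounds give $\|\nu\|_\infty = O(\log N)$ with high probability, and more carefully $\max_k \langle \Gamma^k, \nu \rangle = O\bigl(\max\{\log K, \sqrt{N \log K}\}\bigr)$ in expectation via Dudley-type chaining (or a direct $\ell_2$-$\ell_\infty$ bound). The perturbation term then scales as this quantity divided by $\eta_t$.

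The main obstacle, which I would attack last, is the adaptive tuning. Choosing $\eta_t = \min\{1/\gamma,\, C/\sqrt{L^*_{t-1}+1}\}$ and combining the two bounds telescopically, one gets (up to logarithmic factors from the varying step size) a stability term of order $\gamma C \sqrt{L^*_T+1}$ plus a perturbation term of order $\max\{\log K, \sqrt{N \log K}\} \cdot \sqrt{L^*_T+1}/C$. Balancing over $C$ yields the advertised bound
\[
    O\!\left(\sqrt{L^*_T}\,\max\bigl\{\gamma,\; \log K,\; \sqrt{N \log K}\bigr\}\right),
\]
while the residual $\log\bigl(\sqrt{L^*_T+1}/C + \gamma\bigr)$ and $\gamma^2$ terms account for the regime in which the $1/\gamma$ cap in the $\min$ is active and for the endpoint corrections of the telescoping sum. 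The main technical care is in verifying that the stability bound (which is an inequality about consecutive \emph{expected} selections) survives the fact that $\eta_t$ depends on the past history; this is handled by a pointwise-in-history argument, treating $\eta_t$ as fixed at each round and then summing.

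Finally, the oracle-efficiency claim follows directly and separately from implementability of $\Gamma$: each of the $N$ columns is realized by a dataset $S_j$ of $\leq M$ fake examples, so a single GFTPL update at round $t$ hands $\aOPTGH$ a combined dataset of size $O(t + NM)$, giving the stated $O(T + NM)$ oracle-call complexity per round after $T$ rounds.
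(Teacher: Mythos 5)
The paper does not actually prove this statement: Theorem~\ref{thm:gftpl_small} is imported essentially verbatim (modulo notation) from \cite{wang_adaptive_2022}, and the paper's entire justification is the citation. So there is no in-paper proof to compare your attempt against. Judged on its own terms, your sketch follows the route the cited work takes: a be-the-leader decomposition of the regret into a stability term and a perturbation-magnitude term, multiplicative stability $\exp(\gamma\eta_t)$ of the implicit selection distribution derived from $\gamma$-approximability, an $O\bigl(\max\{\log K,\sqrt{N\log K}\}\bigr)$ bound on $\max_k\langle\Gamma^k,\nu\rangle$ for Laplace $\nu$, and adaptive tuning of $\eta_t$. The oracle-efficiency count via implementability (one oracle call per round on a dataset of size $O(t+NM)$) is also the right reading.

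The one step you gloss over that actually carries the weight of the small-loss bound is the conversion of the stability contribution $\gamma\sum_t\eta_t\,f(k_t,z_t)$ into a function of $L_T^*$. With $\eta_t=\min\{1/\gamma,\,C/\sqrt{L^*_{t-1}+1}\}$, the sum $\sum_t f(k_t,z_t)/\sqrt{L^*_{t-1}+1}$ has the \emph{realized} losses (of the algorithm, or of the be-the-leader sequence) in the numerator but the \emph{comparator's} cumulative loss in the denominator, so it does not telescope directly to $O(\sqrt{L_T^*+1})$. One needs a self-bounding argument: bound the cumulative realized loss by $L_T^*$ plus the regret (or relate the BTL sequence's loss to $L_T^*$ via the BTL lemma), substitute, and solve the resulting implicit inequality in $\Reg_T$. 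That is precisely where the residual $8\gamma\log(\cdot)+2\gamma^2+\cdots$ terms in the displayed bound originate; your sketch waves at this with ``telescopically'' and ``endpoint corrections,'' but without it you only recover a $\sqrt{T}$-type bound, not a small-loss one. Since the theorem is being cited rather than reproved, this is not a defect of the paper, but it is the gap you would have to close to make your blind proof complete.
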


Finally, we need one more lemma using a standard uniform convergence argument to bound the approximation error from sampling with $\aOPTGH$ $M$ times. This is essentially the same as Lemma \ref{lemma:uniformconvergence}, but we obtain a sharper bound on $\EE\left[ \sum_{t = 1}^T \epsilon_t(M) \right]$ at the cost of making polynomially (in $T$) more calls to the oracle.

\begin{lemma}\label{lemma:uniformconvergence2}
Let $t\in[T]$ and $x_t \in \cX$ be fixed, and consider the function $\tdl_{x_t}((g, h), (y', y)) := g(x_t) (\ell(y', y) - \ell(h(x_t), y))$. Let $|\cY| = k < \infty.$ If $M \geq T^2 \log(k^2 T)$, then over the randomness of drawing $M$ samples $\{(\tdg_t^{(i)}, \tdh_t^{(i)}\}_{i = 1}^M$ to construct the empirical distribution $\tdq_t$ described in Definition \ref{def:gh_player}, for all $y', y \in \cY$, let $\epsilon_t(M)$ be defined as the supremum
$$
\epsilon_t(M) := \sup_{(y', y) \in \cY \times \cY}\left| \EE_{(\tdg, \tdh) \sim \tdq_t}[ \tdl_{x_t}((\tdg, \tdh), (y', y))] - \EE_{(g, h) \sim q_t}[ \tdl_{x_t}((g, h), (y', y))] \right| ,
$$
and, over all $T$ rounds,
$$\EE \left [ \sum_{i=1}^{T} \epsilon_t(M) \right ] \leq 2.$$
\begin{proof}
The proof of this lemma follows the proof of Lemma \ref{lemma:uniformconvergence} exactly, except for the choice of $M$. Therefore, just using the exact same notation as Lemma \ref{lemma:uniformconvergence}, we have:
$$
\PP\left [ \sup_{(y', y) \in \cY^2} \left| \frac{1}{M} \sum_{i = 1}^M Z_i(y', y) \right| \geq \varepsilon \right ] \leq k^2 \exp(-2M\varepsilon^2)
$$
By the same exact argument using $\EE[X] \leq \int_0^{\infty} \PP[X \geq t]dt$, we obtain
$$
\EE\left[ \sup_{(y', y) \in \cY^2} \left| \frac{1}{M} \sum_{i = 1}^M Z_i(y', y) \right| \right] \leq w + k^2 \exp(-2Mw^2),
$$
where $w > 0$ is an arbitrary parameter. Set $w = \frac{1}{T^2}.$ Then, if $M \geq T^2 \log(kT)$, we get
$$
\EE[\epsilon_t(M)] = \EE\left[ \sup_{(y', y) \in \cY^2} \left| \frac{1}{M} \sum_{i = 1}^M Z_i(y', y) \right| \right] \leq 2/T.
$$
The lemma follows from summing over $T.$
\end{proof}
\end{lemma}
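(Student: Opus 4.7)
The plan is to run a standard uniform convergence argument—essentially identical in structure to the proof of Lemma~\ref{lemma:uniformconvergence}, but with a larger $M$ so that the per-round error decays polynomially in $T$. Fix any round $t \in [T]$ and condition on $x_t$. By construction the $M$ samples $(\tdg_t^{(i)}, \tdh_t^{(i)})$ are drawn i.i.d.\ from the implicit distribution $q_t$ over $\cG \times \cH$ that the $(\cG,\cH)$-player maintains. For each fixed pair $(y', y) \in \cY \times \cY$, the random variables $Z_i(y', y) := \tdl_{x_t}((\tdg_t^{(i)}, \tdh_t^{(i)}), (y', y))$ are i.i.d.\ and bounded in $[-1, 1]$ (since $\ell \in [0,1]$ and $\tdg(x_t) \in \{0,1\}$), and their common mean equals $\EE_{(g,h) \sim q_t}[\tdl_{x_t}((g,h),(y',y))]$.

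The first step is to apply Hoeffding's inequality to the empirical mean $\frac{1}{M}\sum_i Z_i(y', y)$ for each fixed $(y', y)$, yielding a tail bound of the form $\exp(-\Omega(M\varepsilon^2))$. A union bound over the $|\cY|^2 = k^2$ pairs then gives
$$
\PP[\epsilon_t(M) \geq \varepsilon] \leq k^2 \exp(-\Omega(M\varepsilon^2)).
$$
Next I would convert this high-probability bound into an expectation bound via the standard layer-cake identity $\EE[X] \leq \int_0^\infty \PP[X \geq u]\,du$, splitting the integral at a free parameter $w > 0$ and upper-bounding the integrand on $[0,w]$ by $1$ to obtain
$$
\EE[\epsilon_t(M)] \leq w + k^2 \exp(-\Omega(Mw^2)).
$$

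Finally, I would tune $w$ and $M$ so that both terms are $O(1/T)$: setting $w = 1/T^2$ and choosing $M \gtrsim T^2 \log(k^2 T)$ makes the exponential term also $O(1/T^2)$, so $\EE[\epsilon_t(M)] \leq O(1/T)$ for every round. Summing the per-round bound over $t \in [T]$ then yields $\EE\bigl[\sum_{t=1}^T \epsilon_t(M)\bigr] \leq 2$. There is no real obstacle here; the only care required is constant-tuning so that the sum is bounded by an absolute constant, and verifying that the chosen $M$ remains $\poly(T)$ so as not to break oracle-efficiency when combined with the GFTPL subroutine. Compared to the weaker $M \geq T^{1+\delta}$ threshold of Lemma~\ref{lemma:uniformconvergence}, this version pays roughly an extra factor of $T$ in sample size in exchange for a constant (rather than $O(\sqrt{T})$) total approximation error, which is precisely what is needed so that the sparsification error does not swamp the $\sqrt{T_g}$-scale small-loss regret of the $(\cG, \cH)$-player in Theorem~\ref{thm:gftpl}.
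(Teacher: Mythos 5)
Your proposal is correct and follows essentially the same route as the paper's proof: Hoeffding's inequality for each fixed $(y',y)$, a union bound over the $k^2$ pairs, the layer-cake inequality split at a free parameter $w$, and the choice $w = 1/T^2$ with $M \gtrsim T^2\log(k^2 T)$ to get a per-round error of $O(1/T)$ that sums to a constant. No substantive differences to report.
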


\begin{proof}[Proof of Theorem~\ref{thm:gftpl}]
We can now prove Theorem \ref{thm:gftpl}. The main idea is that the small loss regret translates directly into a $o(T_g)$ regret due to how we defined our loss function, $\tdl_x$, so we simply instantiate the $(\cG, \cH)$-player in the general algorithm template of Algorithm~\ref{alg:meta_mg_amf} with Algorithm~\ref{alg:gftpl_small}. This allows us to directly inherit the $o(T_g)$ regret guarantee. We restate it here, with parameters specified, as Proposition~\ref{prop:thm51_spec}.
\end{proof}

\begin{proposition}[Theorem \ref{thm:gftpl}, with parameters specified]\label{prop:thm51_spec}
Assume $\cH, \cG$ are finite and there exists a $\gamma$-approximable and implementable perturbation matrix $\Gamma \in [-1, 1]^{|\cG||\cH| \times N}$. Let $|\cY| = k.$ Let $\alpha \geq 0$ be the approximation parameter of $\aOPTGH.$  Let the no-regret algorithm for the $(\cG, \cH)$-player in Algorithm \ref{alg:mg_amf} be the GFTPL algorithm of~\cite{wang_adaptive_2022} instantiated with $\Gamma$, with parameter $M = T^2 \log(k^2 T)$. Then, for each $g \in \cG$:
$$
\EE[\Reg_T(\cH, g)] \leq O\left(\sqrt{T_g}\max\left\{\gamma, \log |\cH||\cG|, \sqrt{N \log |\cH||\cG|}\right\} + \alpha T\right)
$$
\end{proposition}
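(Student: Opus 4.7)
The plan is to reduce to the meta-theorem (Theorem~\ref{thm:meta}), which says that for every $g \in \cG$,
$$
\EE[\Reg_T(\cH,g)] \;\leq\; \sum_{t=1}^T v_t^A \;+\; \sum_{t=1}^T \EE[\epsilon_t(M)] \;+\; R(T),
$$
where $R(T)$ is the expected regret of the no-regret algorithm run by the $(\cG,\cH)$-player. By Lemma~\ref{lemma:amf_val} the first summand is non-positive, and with the prescribed choice $M = T^2 \log(k^2 T)$, Lemma~\ref{lemma:uniformconvergence2} bounds the second summand by $2$. Both of these are absorbed into the $O(\cdot)$, so the entire task reduces to bounding $R(T)$.

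For $R(T)$ I would instantiate Theorem~\ref{thm:gftpl_small} with the GFTPL algorithm (Algorithm~\ref{alg:gftpl_small}) driving the $(\cG,\cH)$-player, taking the expert set to be $\cG \times \cH$ (so $K = |\cG||\cH|$) and the per-round loss to be the appropriate minimization reformulation of $\tdl_{x_t}((\tdg,\tdh),(h_t(x_t),y_t))$. The critical structural observation, already highlighted in Section~\ref{sec:other}, is that $\tdl_{x_t}((\tdg,\tdh),\cdot)=0$ whenever $\tdg(x_t)=0$. Hence for the particular benchmark expert $(g,h_g^*)$ with $h_g^* \in \argmin_{h \in \cH}\sum_{t=1}^T g(x_t)\ell(h(x_t),y_t)$, only the $T_g$ rounds with $g(x_t)=1$ contribute to its cumulative loss, so the ``small-loss quantity'' attached to this comparator is $O(T_g)$. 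Plugging into the small-loss bound~\eqref{eq:gftpl_small_reg} and accounting for the $\alpha$-approximation of $\aOPTGH$ (which accumulates additively to $\alpha T$, since GFTPL's be-the-leader/stability argument pays at most $\alpha$ extra per round), one obtains
$$
R(T) \;\leq\; O\!\left(\sqrt{T_g}\,\max\!\left\{\gamma,\,\log|\cH||\cG|,\,\sqrt{N\log|\cH||\cG|}\right\} + \alpha T\right),
$$
and summing the three contributions from the meta-theorem gives the statement of Proposition~\ref{prop:thm51_spec}.

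The main obstacle, and the step I expect to require the most care, is the translation between the maximization formulation natural to the $(\cG,\cH)$-player (losses $\tdl \in [-1,1]$ with the optimum being the \emph{largest} cumulative value) and the minimization formulation of Theorem~\ref{thm:gftpl_small} (losses in $[0,1]$ with bound in terms of $L_T^* = \min_k \sum_t f(k,y_t)$). A naive affine shift such as $1-\tdl \in [0,2]$ would inflate the cumulative loss to order $T$ and yield only $\sqrt{T}$, not $\sqrt{T_g}$. The correct recipe is to read the Wang--Abernethy analysis in its per-comparator form and then to use the sleeping-experts structure (zero loss on rounds $t$ with $g(x_t)=0$) to argue that the relevant ``small-loss'' quantity attached to the specific comparator $(g,h_g^*)$ is bounded by $T_g$. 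Verifying that (i) the small-loss GFTPL bound indeed holds against an arbitrary fixed expert rather than only against the global minimum expert, and (ii) that the $\alpha$-approximation enters only additively as $\alpha T$ in the stability step, are the two technical checks that must be executed carefully; everything else is a direct plug-in into Theorem~\ref{thm:meta}.
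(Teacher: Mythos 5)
Your proposal matches the paper's proof essentially step for step: reduce to Theorem~\ref{thm:meta}, dispose of the first two terms via Lemma~\ref{lemma:amf_val} and Lemma~\ref{lemma:uniformconvergence2} (with $M = T^2\log(k^2T)$ giving $\sum_t \EE[\epsilon_t(M)] \le 2$), and bound $R(T)$ by instantiating Theorem~\ref{thm:gftpl_small} with $K = |\cG||\cH|$ and the loss $\tdl_{x_t}$, using that the small-loss quantity is at most $T_g$ since $\tdl_{x_t}((g,h),\cdot) \le g(x_t)$. The max-to-min translation you flag as the main obstacle is handled in the paper exactly the way you anticipate---by bounding the comparator's cumulative $\tdl$ (which vanishes on rounds with $g(x_t)=0$ and is at most $1$ otherwise) directly by $T_g$, with no affine rescaling.
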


\begin{proof}
Armed with approximability and implementability, we are ready to prove Theorem \ref{thm:gftpl}. Suppose that there exists a $\gamma$-approximable and implementable perturbation matrix $\Gamma \in [-1, 1]^{|\cG||\cH| \times N}$. In the setting of Theorem \ref{thm:gftpl_small}, we instantiate $K = |\cG||\cH|$, $\cZ = \cX \times \cY \times \cY$, and the loss function $f(\cdot, \cdot)$ as:
$$
f((g, h), (x, y', y)) := \tdl_{x}((g, h), (y', y)) = g(x) \left( \ell(y', y) - \ell(h(x), y) \right).
$$
Observe that, with the loss instantiated as $\tdl_x$, we have:
\begin{align*}
    L_T^* &= \min_{(g, h) \in \cG \times \cH} \sum_{t = 1}^T \tdl_x((g, h), (x_t, y'_t, y_t))\\
    &\leq \sum_{t = 1}^T \tdl_x((g, h), (x_t, y_t', y_t)) \quad \text{ for all } (g, h) \in \cG \times \cH\\
    &= \sum_{t = 1}^T g(x_t) (\ell(y_t', y_t) - \ell(h(x_t), y_t)) \quad \text{ for all } (g,h) \in \cG \times \cH.\\
    &\leq \sum_{t = 1}^T g(x_t) = T_g,
\end{align*}
for all $g \in \cG.$ The last inequality just comes from the fact that $\ell(\cdot, \cdot) \in [0, 1].$ By directly applying Theorem \ref{thm:gftpl_small}, we obtain the regret guarantee for Algorithm \ref{alg:gftpl_small} of
\begin{align*}
\EE[\Reg_T] &\leq O\left( \sqrt{L_T^*} \max\left\{ \gamma, \log |\cG||\cH|, \sqrt{N \log |\cG||\cH|} \right\} \right)\\
&\leq O\left( \sqrt{T_g} \max\left\{ \gamma, \log |\cG||\cH|, \sqrt{N \log |\cG||\cH|} \right\} \right).
\end{align*}
However, this is just the regret guarantee of Algorithm \ref{alg:gftpl_small}, not the regret guarantee of our end-to-end multi-group online learning algorithm, Algorithm \ref{alg:mg_amf}. We replace the algorithm of~\cite{block_smoothed_2022} in Algorithm \ref{alg:mg_amf} with Algorithm \ref{alg:gftpl_small}. That is, we use the Algorithm \ref{alg:gftpl_small} for the $(\cG, \cH)$-player in Algorithm \ref{alg:meta_mg_amf}; a full description of this substitution is in Algorithm \ref{alg:mg_amf_gftpl}. By our meta-theorem, Theorem \ref{thm:meta}, this entire algorithm achieves the multi-group regret guarantee, for all $g \in \cG$:
\begin{align}
\EE[\Reg_T(\cH, g)] &\leq \sum_{t = 1}^T v_t^A + \sum_{t = 1}^T \EE[\epsilon_t(M)] + R(T).\\
&\leq \sum_{t = 1}^T \EE[\epsilon_t(M)] + O\left( \sqrt{T_g} \max\left\{ \gamma, \log |\cG||\cH|, \sqrt{N \log |\cG||\cH|} \right\} \right). \label{eq:mg_amf_gftpl_witheps}
\end{align}
Equation \eqref{eq:mg_amf_gftpl_witheps} follows from applying Lemma \ref{lemma:amf_val} and using the regret bound for Algorithm \ref{alg:gftpl_small} established above. It remains to ensure that $\sum_{t = 1}^T \EE[\epsilon_t(M)] \leq o(T_g).$ Simply applying Lemma \ref{lemma:uniformconvergence} results in $\sum_{t = 1}^T \EE[\epsilon_t(M)] = 2\sqrt{T}$, which is insufficient for our purposes. Instead, we use Lemma \ref{lemma:uniformconvergence2}, which ensures that $\sum_{t = 1}^T \EE[\epsilon_t(M)] \leq O(1)$ at the cost of increasing $M$ to be $M \geq T^2 \log(k^2 T)$, making polynomially more oracle calls to $\aOPTGH$ per-round. This gives us the final regret guarantee of
$$
\EE[\Reg_T(\cH, g)] \leq 2 + O\left( \sqrt{T_g} \max\left\{ \gamma, \log |\cG||\cH|, \sqrt{N \log |\cG||\cH|} \right\} \right),
$$
as desired.
\end{proof}

\begin{algorithm}
\caption{Algorithm for Group Oracle Efficiency (with GFTPL)}\label{alg:mg_amf_gftpl}
\begin{algorithmic}[1]
\REQUIRE Perturbation matrix $\Gamma \in [-1, 1]^{|\cG||\cH| \times N}$; number of $\aOPTGH$ calls $M \in \NN.$
\FOR{$t = 1, 2, 3, \dots, T$}
\STATE Receive a (possibly adversarial) context $x_t \sim \mu_t$ from Nature.
    \FOR{$i = 1, 2, 3, \dots, M$}
        \STATE \textbf{$(\cG, \cH)$-player:} Draw i.i.d. vector $\nu = (\nu^{(1)}, \dots, \nu^{(N)}) \sim \mathrm{Lap}(1)^N$, i.e., $p(\nu^{(i)}) = \frac{1}{2} \exp(-|\nu^{(i)}|)$.
        \STATE \textbf{$(\cG, \cH)$-player:} Set $\nu_t \leftarrow \frac{\nu}{\eta_t}$ where $\eta_t := \min\left\{ \frac{1}{\gamma}, \frac{C}{\sqrt{L_{t-1}^* + 1}} \right\}$.
        \STATE \textbf{$(\cG, \cH)$-player:} Using the entire history $\{(\hat{y}_s, y_s)\}_{s = 1}^{t - 1}$ so far, call $\aOPTGH$ to obtain $(\tdg, \tdh) \in \cG \times \cH$ satisfying:
        \begin{multline}
        \sum_{s = 1}^{t - 1} \tdl_{x_s}((\tdg, \tdh), (\hat{y}_s, y_s)) + \langle \Gamma^{(\tdg, \tdh)}, \nu_t \rangle \\
        \geq \sup_{(g^*, h^*) \in \cG \times \cH} \sum_{s = 1}^{t - 1} \tdl_{x_s}((g^*, h^*), (\hat{y}_s, y_s)) + \langle \Gamma^{(\tdg, \tdh)}, \nu_t\rangle - \alpha,
        \end{multline}
        where $\Gamma^{(\tdg, \tdh)}$ is the $(\tdg, \tdh)$th row of $\Gamma.$
    \ENDFOR
    \STATE \textbf{$\cH$-player:} Call $\OPTH$ twice on the singleton datasets $\{(x_t, -1)\}$ and $\{(x_t, 1)\}$ with the 0-1 loss, obtaining:
    $$
    h_1' \in \argmin_{h^* \in \cH} \indic{h^*(x_t) \not = 1} \quad h_{-1}' \in \argmin_{h^* \in \cH} \indic{h^*(x_t) \not = -1}.
    $$
    \STATE \textbf{$\cH$-player:} Solve the
    linear program%
    \begin{align*}
        \min_{p, \lambda \in \RR}~&\lambda \\
        \mathrm{s.t.}~&\sum_{i = 1}^M p \tdl_{x_t}((\tdg_t^{(i)}, \tdh_t^{(i)}), (h_1'(x_t), y)) + (1 - p) \tdl_{x_t}((\tdg_t^{(i)}, \tdh_t^{(i)}), (h_{-1}'(x_t), y)) \leq \lambda \quad \forall y \in \{-1, 1\}\\
        &0 \leq p \leq 1.
    \end{align*}
    \STATE Sample $b \sim \Ber(p)$ where $b \in \{-1, 1\}$, let $h_t = h_b'$.
    \STATE Learner commits to the action $\hat{y}_t = h_t(x_t)$; Nature reveals $y_t$.
    \STATE Learner incurs the loss $\ell(\hat{y}_t, y_t)$.
\ENDFOR
\end{algorithmic}
\end{algorithm}

One possible setting in which a $\Gamma$ matrix is easily constructible is the \textit{transductive setting.} Here, we explicitly show how to construct $\Gamma$ to obtain Corollary \ref{cor:transductive}.

\begin{proof}[Proof of Corollary \ref{cor:transductive}] Let $X \subseteq \cX$ be the set the adversary fixes beforehand in the transductive setting, where $N := |X|.$  We can construct a 1-approximable and implementable $\Gamma\in [-1, 1]^{|\cG||\cH| \times 4N}$ by creating a row for each $(g, h) \in \cG \times \cH$ and a column for each $(x, y, y') \in X \times \cY \times \cY$, and setting each entry as 
$$ \Gamma^{((g, h), (x, y, y'))}:=\tdl_x((g, h), (y', y)).$$
\end{proof}

\end{document}